\newtheorem{theorem}{Theorem}[section]
\newtheorem{lemma}[theorem]{Lemma}
\newtheorem{proposition}[theorem]{Proposition}
\newtheorem{claim}[theorem]{Claim}
\newtheorem{corollary}[theorem]{Corollary}
\newtheorem{assumption}[theorem]{Assumption}
\newtheorem{remark}[theorem]{Remark}
\let\classAND\AND
\let\AND\relax
\let\AND\classAND
\def\thanksnosymbol#1{\protected@xdef\@thanks{\@thanks
        \protect\footnotetext{#1}}}
\newcommand{\algoname}{{\sc{H\'ajek-GD}}}
\begin{document}

\title{One-Sided Matrix Completion from Ultra-Sparse Samples}
\author{\name Hongyang R. Zhang \email ho.zhang@northeastern.edu \\
      \addr Northeastern University, Boston
      \AND
      \name Zhenshuo Zhang \email zhang.zhens@northeastern.edu \\
      \addr Northeastern University, Boston
      \AND
      \name Huy L. Nguyen \email hu.nguyen@northeastern.edu\\
      \addr Northeastern University, Boston
      \AND Guanghui Lan \email george.lan@isye.gatech.edu\\
      \addr Georgia Institute of Technology, Atlanta
      }
\date{\today}
\maketitle
\begin{abstract}
Matrix completion is a classical problem that has received recurring interest across a wide range of fields. In this paper, we revisit this problem in an \emph{ultra-sparse sampling regime}, where each entry of an unknown, $n\times d$ matrix $M$ (with $n \ge d$) is observed independently with probability $p = C / d$, for a fixed integer $C \ge 2$. This setting is motivated by applications involving large, sparse panel datasets, where the number of rows (users) far exceeds the number of columns (items). When each row contains only $C$ entries---fewer than the rank of $M$---accurate imputation of $M$ is impossible. Instead, we focus on estimating the \emph{row span} of $M$, or equivalently, the averaged \emph{second-moment matrix} $T = M^{\top} M / n$.

The empirical second-moment matrix computed from observational data exhibits non-random and sparse missingness. We propose an \emph{unbiased estimator} that normalizes each nonzero entry of the second moment by its observed frequency, followed by gradient descent to impute the missing entries of $T$. This normalization divides a weighted sum of $n$ binomial random variables by their total number of ones---a nonlinear operation. We show that the estimator is unbiased for any value of $p$ and enjoys low variance. When the row vectors of $M$ are drawn uniformly from a rank-$r$ factor model satisfying an incoherence condition, we prove that if $n \ge O({d r^5 \epsilon^{-2} C^{-2} \log d})$, any local minimum of the gradient-descent objective is approximately global and recovers $T$ with error at most $\epsilon^2$.

Experiments on both synthetic and real-world datasets are conducted to validate our approach. On three MovieLens datasets, our algorithm reduces bias by $88\%$ relative to several baseline estimators. We also empirically validate the linear sampling complexity of $n$ relative to $d$ using synthetic data. Finally, on an Amazon reviews dataset with sparsity $10^{-7}$, our algorithms reduce the recovery error of $T$ by $59\%$ and that of $M$ by $38\%$ compared to existing matrix completion methods.
\end{abstract}

\section{Introduction}\label{sec:introduction}

Matrix completion is a classical problem that has received sustained attention across several disciplines, including compressive sensing \citep{candes2010matrix}, machine learning \citep{srebro2004maximum}, and signal processing \citep{chi2019nonconvex}.
A seminal result in the literature states that when the unknown matrix $M$ is low rank and satisfies an incoherence assumption (i.e., the row norms of the low-rank factors are approximately balanced), exact recovery from a uniformly random subset of entries is achievable \citep{candes2012exact}.

In this paper, we study matrix completion in an \emph{ultra-sparse sampling regime}, where the sampling probability $p= C / d$, for some constant $C \ge 2$, resulting in roughly $C n$ observed entries in total.
This is less than ${n r \log d}$, the number of bits required to store the rank-$r$ factors of an $n$ by $d$ matrix.
Although recovering $M$ itself is information-theoretically impossible \citep{chen2015incoherence}, we show that it is possible to recover \textbf{one side} of $M$---specifically, the second-moment matrix $T = {M^{\top} M} / n$---in this regime. %
\hl{In other words, when $M$ is a tall matrix, our goal is to recover the \emph{subspace spanned by the row vectors} of $M$.}

There are several motivations for revisiting matrix completion in the ultra-sparse sampling regime.
First, many large-scale panel datasets are extremely sparse in practice.
For example, consider the Amazon reviews dataset \citep{hou2024bridging}, which contains roughly $54$ million users and $48$ million items, with about $571$ million observed reviews in total.
The resulting sparsity level is as low as $10^{-7}$.
Therefore, we need matrix completion methods that can remain effective under this level of ultra-sparsity.
Second, many recommendation systems must preserve data privacy.
In such settings, each user may contribute only a small portion of data to a central server, often after adding white noise for differential privacy \citep{liu2015fast}.
The central server then aggregates these contributions by computing summary statistic such as the \emph{low-rank subspace spanned by the row vectors}.
This summary statistic can then be shared publicly, allowing each user to project their data locally onto the learned subspace \citep{wang2023differentially}.
\hl{In these scenarios, the number of users can be orders of magnitude larger than the number of items (see Table \ref{tab_stat_real} for concrete examples), which motivates studying recovery of the row space of $M$ when $n \gg d$.}

Recent work by \cite{cao2023one} initiates the study of \emph{one-sided matrix completion}, whose goal is to estimate the averaged second-moment matrix of the row vectors of $M$, denoted by \[T = \frac 1 n M^{\top} M,\] under the setting where two entries per row are observed.
\hl{In other words, the matrix $T$ captures the row-space structure of $M$.}
Their analysis leaves open the general case where more than two entries are observed per row.
To illustrate the challenge, let $\wh M$ denote the observed data matrix and $I$ the corresponding binary mask indicating observed entries.
Because $\wh M$ is highly sparse, the information it contains is insufficient to recover $M$ directly---rendering standard matrix completion guarantees inapplicable to this regime.
Let $\Omega$ denote the set of nonzero entries of the empirical second-moment matrix $\wh M^{\top} \wh M$, which has the same dimension as $T$. %
\hl{We make two key observations about $\wh M^{\top} \wh M$ that together highlight the technical challenges in recovering $T$ under ultra-sparse sampling:}
\begin{enumerate}%
    \item The missing patterns in $\Omega$ is highly non-random: diagonal entries are almost always observed,
    while off-diagonal entries are extremely sparse. \hl{For a detailed calculation, see Claim \ref{claim_sampling}}. %
    Moreover, the  observation frequencies in the off-diagonal entries of $I^{\top} I$ are sparse and unevenly distributed.
    
    \item Although stochastic gradient algorithms are widely used in practice \citep{lan2020first}, providing theoretical guarantees for matrix completion is technically challenging \citep{sun2016guaranteed} and has not been done under non-random missingness, to the best of our knowledge.
\end{enumerate}

To address the first issue, we analyze a weighted estimator that normalizes each nonzero entry of $\hat M^{\top} \hat M$ by the corresponding entry at $I^{\top} I$.
This approach corresponds to the H\'ajek estimator \citep{hajek1971comment,sarndal2003model}, which divides two random quantities---making its analysis more subtle due to nonlinearity. %
\emph{Our key observation is that the H\'ajek estimator is exactly unbiased on the observed subset $\Omega$, unlike prior results that show only asymptotical unbiasedness as $n \rightarrow \infty$.}
\hl{This finite-sample unbiasedness arises from the symmetry of pairwise products in computing the second-moment matrix.} %

Furthermore, a first-order approximation analysis on the variance of the H\'ajek estimator reveals that this estimator achieves \emph{lower variance} than the classical Horvitz-Thompson estimator \citep{horvitz1952generalization}, leading to more stable and less biased empirical estimates, a result corroborated by our experiments.

To tackle the second issue, we establish \emph{recovery guarantees for any local minimizer of a loss function defined between the normalized empirical second-moment matrix and its low-rank factorization} used to impute missing entries of $T$ that lie outside $\Omega$.
Under a rank-$r$ factor model for the rows of $M$, we show that if \[ n \ge O\big({d r^5 C^{-2}\epsilon^{-2}\log d}\big), \]
then any local minimizer of the loss objective is within $\epsilon^2$ Frobenius norm distance to $T$.
The big-$O$ notation above ignores dependencies of the bound on the condition number of $M$ as well as the incoherence condition number (See Assumption \ref{assume_coh} for the definition). 
This bound assumes standard incoherence conditions---without which recovery is information-theoretically impossible \citep{candes2012exact}---and is optimal in scaling relative to $d$.
Importantly, the result holds for any constant $C$, thus generalizing prior results and closing an open question left by \citet{cao2023one}.
A key technical step is to establish a concentration inequality to address the non-random missingness of $\Omega$ by leveraging a conditional independence property at each row of $\wh M^{\top} \wh M$.
This conditional independence property enables us to analyze and bound the bias of the H\'ajek estimator row by row, ensuring consistency even under heterogeneous observation patterns.

\smallskip\textbf{Numerical results.} We extensively validate our approach on both synthetic and real-world datasets. %
We find that the H\'ajek estimator is particularly effective under extreme sparsity.
For instance, when each row has only two observed entries, our method outperforms a prior nuclear-norm regularization baseline by $70\%$ in recovering $T$.
We also explore using one-sided matrix completion as a subroutine for imputing missing entries of $M$.
On the Amazon reviews dataset, we extend our estimator to impute $M$ via least-squares regression after estimating $T$, achieving $21\%$ and $38\%$ lower root mean squared error (RMSE) than alternating gradient descent and soft-impute with alternating least squares \citep{hastie2015matrix,chi2019nonconvex,li2020non}, respectively.
Ablation studies further show that the H\'ajek estimator reduces bias on $\Omega$ by $99\%$ relative to the Horvitz-Thompson estimator on synthetic data, and by $88\%$ on average across three MovieLens datasets.
Incorporating an incoherence regularization term \citep{ge2016matrix} into the loss further decreases recovery error of $T$ by $22\%$ on synthetic data and by $52\%$ on the Amazon review dataset.

\begin{table}[!t]
    \centering
    \caption{Comparison between this work and several closely related results on matrix completion under an ultra-sparse sampling regime. Here, $M\in\real{n\times d}$ with $n \ge d$, and $T = \frac 1 n M^{\top}M$ denotes the averaged second-moment matrix of the rows of $M$. We focus on the one-sided or ultra-sparse settings where exact recovery of $M$ itself is infeasible.}\label{table_compare}
    {\begin{tabular}{@{}|c|c|c|@{}}
        \toprule
        Estimand & Sampling Regime & Main Approach (Reference) \\
        \midrule
        $M$ & $p = O(\frac 1 n)$  & Thresholded Alternating Minimization \citep{gamarnik2017matrix} \\
        $T$ & Two entries per row & Nuclear-Norm Regularization \citep{cao2023one} \\
        $T$ & $p = \frac C n,\, \forall\, C \ge 2$ & {\algoname{} with Incoherence Regularization ({This paper})} \\
        \bottomrule
    \end{tabular}}
\end{table}

\smallskip\textbf{Summary of contributions.} This paper revisits the matrix completion problem under an ultra-sparse sampling regime, where only $C n$ entries are observed from an unknown $n\times d$ matrix for a constant $C \ge 2$.
Our contributions are threefold:
\begin{enumerate}
    \item We design a new algorithm for one-sided matrix completion based on the H\'ajek estimator, accompanied by a detailed theoretical analysis. We prove that the H\'ajek estimator is an unbiased estimator of the second-moment matrix for any sampling probability $p$ and achieves lower variance than the Horvitz-Thompson estimator.
    \item We establish near-optimal sample complexity bounds for one-sided matrix completion under a low-rank, incoherent factor model. Our results apply to gradient-based optimization and extend to general sampling patterns beyond those considered in prior work.
    See Table \ref{table_compare} for a comparison of sample complexity and main approach.
    \item We perform extensive experiments on synthetic and real-world datasets, demonstrating the practical effectiveness of our approach and showing that one-sided matrix completion can be used as a robust subroutine for full matrix recovery. The implementation and reproducible code are available at: \url{https://github.com/VirtuosoResearch/Matrix-completion-ultrasparse-implementation}.%
\end{enumerate}
\hl{One limitation of our sample complexity result lies in the common-means assumption in the rank-$r$ factor model. 
We believe this condition can be relaxed if each factor is perturbed by random noise (see Remark \ref{remark_comm}).
Another limitation concerns the existence of exact low-rank factor models for recovering $T$ and the need to specify the true rank in the gradient-descent procedure (see Remark \ref{remark_lowrank}). We discuss possible extensions to address these issues in Section \ref{sec_conclude}.}

\smallskip\textbf{Organizations.}
The remainder of this paper is organized as follows.
Section \ref{sec_setup} describes the problem setup and notations.
Section \ref{sec_approach} presents the estimator and the main theoretical results.
Section \ref{sec_proof_sketch} outlines the proof sketch and two extensions.
Section \ref{sec_experiments} and \ref{sec_related} provide empirical results and related work, respectively.
Section \ref{sec_conclude} concludes the paper and discusses several open directions.
Complete proofs appear in Appendix \ref{proof_var_est}-\ref{proof_gd}, and additional experiments are discussed in Appendix \ref{app_experiments}.
\section{Preliminaries and Notations}\label{sec_setup}

Let $M$ denote an $n$ by $d$, unknown matrix.
Assume the rank of $M$ is equal to $r$.
Without loss of generality, suppose that $n \ge d$.
For example, each row of $M$ may include the ratings of a user, while each column may correspond to the ratings of an item.
We observe a partial matrix, denoted by $\wh M$. %
Let $I \in \set{0, 1}^{n \times d}$ denote the indicator matrix on the observed entries of $\wh M$.
Our goal is to recover the \emph{averaged second-moment matrix} $T \define M^{\top} M / n$, given $\wh M$.

We assume that each entry of $\wh M$ is observed independently with probability $p \in (0, 1)$, following prior literature (e.g., \citet{candes2012exact,sun2016guaranteed,ge2016matrix}).
We focus on a constant sampling regime, where $p =  C / d$, for some fixed integer $C \ge 2$.
In such a regime, $m$ is roughly $C n$, less than the number of bits to represent an $n$ by $r$ matrix.

An important quantity for working with one-sided estimation is ${I}^\top{I}$, which is a symmetric matrix in dimension $d$.
This matrix provides the empirical frequency counts for the empirical second moment matrix $\wh M^{\top} \wh M$.
For example, consider an off-diagonal entry $(i, j)$, where $i \neq j$ and both $i, j$ are in $\set{1, 2, \dots, d}$.
The $(i, j)$-th entry of ${I}^{\top} I$, denoted as $(I^{\top} I)_{i, j}$, is equal to the number of overlapping, nonzero entries between the $i$-th and $j$-th column of $\wh M$. That is,
\begin{align}
    \big(I^{\top} I\big)_{i, j} = \sum_{k=1}^n I_{k, i} I_{k, j}. \label{eq_II}
\end{align}
Let $\Omega$ denote the indices of the nonzero entries of $I^{\top} I$. %
We now make the following observation regarding the observed entries within $\Omega$.
In particular, we find that the sampling patterns of $\Omega$ are non-random in the following sense.

\smallskip
\begin{claim}\label{claim_sampling}
    For every $i = 1, 2, \dots, d$, the diagonal entries satisfy that
    \[ \Pr[(i, i) \in \Omega] = 1 - (1 - p)^n. \]
    The off-diagonal entries satisfy that for every $i = 1,\dots,d$, and $j = 1, \dots,d$ such that $j \neq i$,
    \[ \Pr[(i, j) \in \Omega] = 1 - (1 - p^2)^n. \]
\end{claim}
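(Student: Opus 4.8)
The plan is to unpack the definition of $\Omega$ as the support of $I^{\top} I$ and reduce each of the two probabilities to the event that a sum of independent Bernoulli variables vanishes.

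First I would handle the diagonal case. By the definition in Equation~\eqref{eq_II}, the $(i,i)$-th entry is $(I^{\top} I)_{i,i} = \sum_{k=1}^n I_{k,i}^2 = \sum_{k=1}^n I_{k,i}$, where the second equality uses $I_{k,i} \in \set{0,1}$. Thus $(i,i) \notin \Omega$ if and only if $I_{k,i} = 0$ for all $k = 1, \dots, n$. Since the entries $\{I_{k,i}\}_{k=1}^n$ are i.i.d.\ $\mathrm{Bernoulli}(p)$ by the sampling model of Section~\ref{sec_setup}, we get $\Pr[(i,i) \notin \Omega] = (1-p)^n$, and taking the complement gives the claimed expression.

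Next, for an off-diagonal pair $i \neq j$, I would observe that $(I^{\top} I)_{i,j} = \sum_{k=1}^n I_{k,i} I_{k,j}$ is a sum of nonnegative terms, hence it is zero exactly when every summand is zero. For each fixed $k$, the entries $I_{k,i}$ and $I_{k,j}$ lie in different columns of the same row and are therefore independent, so $I_{k,i} I_{k,j}$ is itself a $\mathrm{Bernoulli}(p^2)$ variable; moreover these products are independent across $k$ because distinct rows use disjoint sets of mask entries. Hence $\Pr[(i,j) \notin \Omega] = \prod_{k=1}^n \Pr[I_{k,i} I_{k,j} = 0] = (1 - p^2)^n$, and the complement yields the stated formula.

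There is no substantive obstacle here; the statement is elementary. The only points that merit care are the observation that a sum of nonnegative integers is zero if and only if all its terms vanish---so that ``$(i,j)$ is a nonzero entry of $I^{\top} I$'' is equivalent to ``at least one row observes both columns $i$ and $j$''---and the explicit appeal to independence of the mask entries both within a row (across distinct columns) and across distinct rows, both of which are immediate consequences of the i.i.d.\ Bernoulli sampling assumption.
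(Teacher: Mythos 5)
Your proof is correct and follows exactly the elementary reasoning the paper relies on (the paper states Claim \ref{claim_sampling} without a separate written proof, treating it as an immediate consequence of the independent Bernoulli sampling of the mask entries): a diagonal entry of $I^{\top} I$ vanishes iff all $n$ Bernoulli$(p)$ indicators in that column vanish, and an off-diagonal entry vanishes iff all $n$ independent Bernoulli$(p^2)$ row-wise products vanish. No gaps; nothing further is needed.
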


To illustrate, recall that $p = C / d$. Suppose $n$ is at the order of $O(d \log d)$.
The probability that $(i, i)\in\Omega$ is roughly $1 - e^{-pn} \approx 1 - d^{-O(C)}$.
Then by a union bound, with probability at least $1 - O(d^{-1})$, it must be the case that $(i, i) \in \Omega$, for every $i = 1, 2, \dots, d$.

For each off-diagonal entry, it is observed in $\Omega$ with probability equal to $1 - (1 - p^2)^n$. For simplicity of notation, let us denote this as $q$ in the rest of the paper. Note that $q =  n p^2$ is roughly equal to $O({C^2 d^{-1} \log d} )$.

To account for the difference in sampling probabilities between diagonal and off-diagonal entries of $\Omega$, let $P_{\Omega}: \real^{d\times d} \rightarrow \real^{d\times d}$ be a weighted projection operator defined as follows:
\begin{align*}
    (P_{\Omega}(Z))_{i ,j} = \begin{cases}
        {q} Z_{i, i}, & \text{ if } (i, i) \in \Omega,\\
        Z_{i, j}, & \text{ if } (i, j) \in \Omega \text{ and } j \neq i,\\
        0, & \text{ if } (i, j) \notin \Omega.
    \end{cases}
\end{align*}

\paragraph{Estimators.} Having introduced the co-occurrence matrix $I^{\top} I$ and its nonzero index set $\Omega$, next, we discuss the normalization of the empirical second moment of the observed matrix, given by $\wh M^{\top} \wh M$.
One way to normalize $\wh M^{\top} \wh M$ for one-sided matrix completion is via the Horvitz-Thompson (HT) estimator \citep{horvitz1952generalization}, which inversely reweights each entry with the true sampling probability (i.e., $p$ on the diagonal entries and $p^2$ on the off-diagonal entries):
\begin{align*}
    \overline T_{i, j} = \begin{cases}
        \frac{\sum_{k=1}^n M_{k, i}^2 I_{k, i}} {np}, &\text{ if } i = j,\vspace{0.1in}\\
        \frac{\sum_{k=1}^n M_{k, i} M_{k, j} I_{k, i} I_{k, j}} {np^2}, &\text{ if } i \neq j.
    \end{cases}
\end{align*}
One can verify that $\mathbb{E}[\overline T_{i, j}] = T_{i, j}$.
Notice that $np^2$ is the expectation of $(I^{\top} I)_{i, j}$ (cf. equation \eqref{eq_II}), while $np$ is the expectation when $i = j$.
When $p$ is too small, $np^2$ becomes small, resulting in high fluctuation in the off-diagonal entries.
Another approach, known as the H\'ajek estimator \citep{hajek1971comment}, is to normalize each entry in $\Omega$ by the estimated sampling probability, which is canceled out since the probability value is the same across different $k$, leading to the following expression, for any $(i, j) \in \Omega$:
\begin{align*}
    \widehat T_{i, j} = \begin{cases}
        \frac{\sum_{k=1}^n M_{k, i}^2 I_{k, i}} {\sum_{k=1}^n I_{k, i}}, &\text{ if } i = j,\vspace{0.1in}\\
        \frac{\sum_{k = 1}^{n} M_{k, i} M_{k, j} I_{k, i} I_{k, j}} {\sum_{k = 1}^{n} I_{k, i} I_{k, j}}, &\text{ if } i \neq j.
    \end{cases}
\end{align*}

The theoretical analysis of the H\'ajek estimator is challenging due to its nonlinear form, which involves dividing one random variable by another.
It is known that the H\'ajek estimator is asymptotically unbiased \citep{hajek1971comment} in the large $n$ limit, incurring an error of $O(n^{-1/2})$.
In addition, the H\'ajek estimator provides a variance reduction effect compared to the Horvitz-Thompson estimator in causal inference \citep{hirano2003efficient}.
Little is known in the matrix completion problem, and several natural questions arise.
First, how does this estimator work for one-sided matrix completion? What is the bias of $\wh T$, and how does the H\'ajek estimator compare to the Horvitz-Thompson estimator?
Second, how can we use $\wh T$ to impute the missing entries outside $\Omega$?
The rest of this paper is dedicated to answering these questions.

\paragraph{Notations.} Before continuing, we provide a list of notations for describing the results.
Following the convention of big-$O$ notations, given two functions $f(n)$ and $g(n)$, let $f(n) \le O(g(n))$ indicate that there exists a constant $c$ independent of $n$ such that when $n \ge n_0$ for some large enough $n_0$, then $f(n) \le c \cdot g(n)$.
We use the notation $f(n) \lesssim g(n)$ as a shorthand for indicating that $f(n) = O(g(n))$.
Let $\tilde O(g(n))$ denote that $O(\log^{c}(n) g(n))$ for some constant $c$ independent of $n$.
We also use the little-o notation $f(n) \le o(g(n))$, which means that $f(n) / g(n)$ goes to zero as $n$ goes to infinity.

Let $[d] = \set{1, 2, \dots, d}$ denote a shorthand notation for the set from $1$ up to $d$.
We use $\cN(a, b)$ to denote a Gaussian distribution with mean set at $a$ and variance equal to $b$.
We use the notation $(x)_+ = \max(x, 0)$ to denote a truncation operation set above zero.

Let $\bignormFro{\cdot}$ denote the Frobenius norm of an input matrix.
Let $\bignorms{\cdot}$ denote the spectral norm of the input matrix and let $\bignorm{\cdot}$ denote the Euclidean norm of a vector.
Let $\inner{X}{Y} = \tr[X^{\top} Y]$ denote the inner product between two matrices that have the same dimensions.
Let $\bignorm{\cdot}_\infty$ denote the infinity norm of a matrix, which corresponds to its largest entry in absolute value.

\section{Estimation and Recovery Guarantees}\label{sec_approach}

In this section, we present a thorough analysis of the H\'ajek estimator in the one-sided matrix completion problem.
We begin by analyzing the bias of $\widehat T$ compared to $T$ on the observed entries $\Omega$.
A key observation is that $\widehat T$ provides an unbiased estimate of $T$, as stated formally below.

\smallskip
\begin{lemma}\label{prop_consistency}
    Suppose the entries of $\wh M$ are sampled from $M$ independently with a fixed probability $p \in (0, 1)$.
    Let $\Omega$ denote the index set corresponding to the non-zero entries of $\wh M^{\top} \wh M$.
    Then, the following must be true:
    \begin{align}\label{eq_unbiasedness}
        \ex{\wh T_{i, j} \Big| (i, j) \in\Omega} = T_{i, j},
    \end{align}
    for any $1 \le i, j \le d$.
    As a corollary of equation \eqref{eq_unbiasedness}, we have that
    \begin{align*}
        \ex{\wh T_{i, j}} = \Pr\big[(i, j) \in \Omega] \cdot T_{i, j}. %
    \end{align*}
\end{lemma}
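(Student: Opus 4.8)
The plan is to exploit the fact that, once we condition on the \emph{value} of the denominator, the Bernoulli indicators appearing in the numerator and the denominator of $\wh T_{i,j}$ become exchangeable across the $n$ rows, so the ratio collapses to an average of $M_{k,i}M_{k,j}$ over a uniformly random subset of the rows. Concretely, fix an off-diagonal pair $i\neq j$ (the diagonal case $i=j$ is identical, replacing $I_{k,i}I_{k,j}$ by $I_{k,i}$ and $M_{k,i}M_{k,j}$ by $M_{k,i}^2$, and $p^2$ by $p$). Set $B_k = I_{k,i}I_{k,j}$, which are i.i.d.\ $\mathrm{Bernoulli}(p^2)$ over $k\in[n]$ because the entries of $\wh M$ are sampled independently; let $a_k = M_{k,i}M_{k,j}$, which are \emph{fixed} real numbers since $M$ is a fixed matrix; and let $S=\sum_{k=1}^n B_k$. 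By the definition of $\Omega$ (the nonzero index set of $I^{\top}I$, cf.\ \eqref{eq_II}), the event $\{(i,j)\in\Omega\}$ coincides with $\{S\ge 1\}$, and on that event $\wh T_{i,j} = \big(\sum_{k} a_k B_k\big)/S$ while $T_{i,j} = \frac1n\sum_{k} a_k$.

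First I would compute the conditional expectation given $S=s$ for each integer $s\ge 1$. Since the $B_k$ are i.i.d.\ Bernoulli, the conditional law of $(B_1,\dots,B_n)$ given $S=s$ is uniform over the $\binom{n}{s}$ binary vectors with exactly $s$ ones; in particular $\Pr[B_k=1\mid S=s] = s/n$ for every $k$, by symmetry among the coordinates. Hence
\[
  \ex{\sum_{k=1}^n a_k B_k \;\Big|\; S=s} \;=\; \sum_{k=1}^n a_k\,\Pr[B_k=1\mid S=s] \;=\; \frac{s}{n}\sum_{k=1}^n a_k \;=\; s\,T_{i,j},
\]
so that $\ex{\wh T_{i,j}\mid S=s} = \tfrac1s\cdot s\,T_{i,j} = T_{i,j}$, independent of $s$.

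Next I would apply the tower rule, conditioning further on $\{S\ge 1\}=\{(i,j)\in\Omega\}$: since the conditional mean equals $T_{i,j}$ on every fiber $\{S=s\}$ with $s\ge 1$, averaging over $s\ge 1$ yields $\ex{\wh T_{i,j}\mid (i,j)\in\Omega} = T_{i,j}$, which is \eqref{eq_unbiasedness}. The corollary then follows from the law of total expectation together with the natural convention (consistent with the weighted projection $P_\Omega$) that $\wh T_{i,j}=0$ whenever $(i,j)\notin\Omega$: $\ex{\wh T_{i,j}} = \ex{\wh T_{i,j}\mid (i,j)\in\Omega}\,\Pr[(i,j)\in\Omega] + 0 = T_{i,j}\,\Pr[(i,j)\in\Omega]$.

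The only delicate point — and the reason the claim is not immediate — is that $\wh T_{i,j}$ is a ratio of two correlated random quantities, so $\ex{\wh T_{i,j}\mid(i,j)\in\Omega}$ is \emph{not} the quotient of the conditional means of numerator and denominator; the resolution is to condition on the denominator \emph{first}, which is exactly what makes the numerator's conditional mean proportional to $s$ and cancels the $1/s$ factor. I expect no further obstacles: the argument uses only the i.i.d.\ sampling of $I$ and holds for every $p\in(0,1)$ and every fixed $M$, which is precisely the generality asserted in the lemma.
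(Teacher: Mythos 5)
Your proposal is correct and follows essentially the same route as the paper's proof: condition on the value of the denominator $(I^{\top}I)_{i,j}=s$, use the symmetry of the i.i.d.\ indicators to get $\Pr[I_{k,i}I_{k,j}=1\mid S=s]=s/n$ (the paper writes this as $\binom{n-1}{s-1}/\binom{n}{s}$), so the conditional mean of the ratio is $T_{i,j}$ on every fiber, and then average over $s\ge 1$. Your explicit remark that one must condition on the denominator first, plus the clean handling of the diagonal case and the zero-off-$\Omega$ convention for the corollary, are welcome clarifications but not a different argument.
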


Unlike conventional results showing that the H\'ajek estimator is asymptotically consistent \citep{sarndal2003model}, in the one-sided matrix completion setting, we find that the H\'ajek  estimator is unbiased in the presence of finitely many samples.
As a remark, the same unbiased estimation result can also be stated in the setting where every row has $k$ uniformly random entries from $M$, for arbitrary integers $k \ge 2$. The proof is similar to that of Lemma \ref{prop_consistency} and is omitted.

We now describe the main ideas for proving Lemma \ref{prop_consistency}.
First, we note that for any $k > 1$, conditioned on $(I^{\top} I)_{i, j} = k$,  $\wh M^{\top} \wh M$ must be equal to the sum of $k$ pairs chosen from $M_{1, i} M_{1, j}$, $M_{2, i} M_{2, j}$, $\dots$, $M_{n, i} M_{n, j}$.
Furthermore, the choice of $k$ such pairs out of $n$ is uniformly at random among all possible $\binom{n}{k}$ combinations, because the choices are symmetric.
Thus, the average of $k$ randomly chosen pairs must be equal to the average of all $n$ pairs in expectation.
This is stated precisely in the following equation:
\begin{align} 
    \ex{ \frac 1{k} {\big(\wh M^{\top} \wh M\big)_{i, j}}  \Bigg| \big(I^{\top} I\big)_{i, j} = k } = \frac 1 {n} \sum_{a = 1}^{n} M_{a, i} M_{a, j} = T_{i, j},\label{eq_unbiase_fix_n}
\end{align}
\hl{which leads to the conclusion of equation \eqref{eq_unbiasedness}.
See the full proof in Appendix \ref{proof_var_est}.}

\subsection{Variance Reduction}

Having established that $\wh T$ provides an unbiased estimator on the entries of $T$, we turn to studying the variance of $\wh T$. %
We show that the H\'ajek estimator incurs lower variance relative to the Horvitz-Thompson estimator.
We first derive an approximation of $Var(\wh T)$ to tackle the nonlinearity of the H\'ajek estimator.

\smallskip
\begin{theorem}\label{prop_var_est}
    Suppose $p = \frac C d$ for some fixed integer $C \ge 2$.
    Suppose $n \ge O(d \log d)$.
    Then, with probability at least $1 - O(d^{-1})$, 
    (i) all of the diagonal entries of $\wh T$ are in $\Omega$, and
    (ii) for any diagonal entries of $\wh T$, the variance of $\widehat T_{i, i}$, for any $i = 1, 2, \dots, d$, is approximated by
    \begin{align}\label{eq_var_est_diag}
        Var\Big(\wh T_{i, i}  \Big) = \frac{1 - p}{n p} \fullbrace{\frac 1 n \sum_{k=1}^{n} M_{k, i}^4} - \frac {1 - p} {n p}  {T_{i, i}^2} + O\fullbrace{\frac 1 {n p}}.
    \end{align}
    For any nonzero, off-diagonal entries in $\Omega$, the variance of $\wh T_{i, j}$, for any $1\le i \neq j \le d$, is approximated by
    \begin{align}\label{eq_var_est_offdiag}
        Var\Big(\wh T_{i, j} \Big| (i, j) \in\Omega \Big) = \frac {1} {n} \fullbrace{\sum_{k=1}^{n}  M_{k, i}^2 M_{k, j}^2} - T_{i, j}^2 + O\fullbrace{\frac{(\log d)^2} {d}}.
    \end{align}
\end{theorem}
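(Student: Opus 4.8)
Part~(i) follows at once from Claim~\ref{claim_sampling}: $\Pr[(i,i)\notin\Omega]=(1-p)^n\le e^{-np}$, and in the regime $n\ge O(d\log d)$ (with the constant chosen so that $np\ge 2\log d$) this is at most $d^{-2}$, so a union bound over $i\in[d]$ keeps all diagonal entries in $\Omega$ with probability $1-O(d^{-1})$. For part~(ii) the plan is to avoid linearizing the H\'ajek ratio $\wh T_{i,j}=Y/N$ and instead condition on the co-occurrence count. Set $b_k=I_{k,i}$ when $i=j$ and $b_k=I_{k,i}I_{k,j}$ when $i\neq j$; these are i.i.d.\ Bernoulli across $k$ with parameter $\pi=p$ (resp.\ $\pi=p^2$). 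Set $w_k=M_{k,i}^2$ (resp.\ $w_k=M_{k,i}M_{k,j}$), $N=\sum_{k=1}^n b_k$ and $Y=\sum_{k=1}^n w_k b_k$, so that $\wh T_{i,j}=Y/N$ and $(i,j)\in\Omega$ iff $N\ge1$. Conditioned on $\{N=m\}$ with $m\ge1$, the support $\{k:b_k=1\}$ is a uniformly random $m$-subset of $[n]$, hence $\wh T_{i,j}$ given $\{N=m\}$ is the sample mean of a simple random sample without replacement (SRSWOR) of size $m$ from the population $\{w_1,\dots,w_n\}$. Its conditional mean is then $T_{i,j}$ --- this is exactly equation~\eqref{eq_unbiase_fix_n} --- and by the classical SRSWOR variance formula its conditional variance equals $\tfrac{n-m}{m(n-1)}\,\sigma_w^2$ with $\sigma_w^2\define\tfrac1n\sum_{k=1}^n w_k^2-T_{i,j}^2$. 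Because the conditional mean does not depend on $m$, the law of total variance collapses to
\[
  \mathrm{Var}\big(\wh T_{i,j}\mid(i,j)\in\Omega\big)\;=\;\sigma_w^2\cdot\mathbb{E}\!\left[\frac{n-N}{N(n-1)}\;\middle|\;N\ge1\right].
\]

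It remains to estimate the scalar $\mathbb{E}\big[\tfrac{n-N}{N(n-1)}\mid N\ge1\big]$, and here the diagonal and off-diagonal cases behave very differently. \emph{Diagonal:} $N\sim\mathrm{Bin}(n,p)$ with $np\ge2\log d\gg1$, so $N$ concentrates about $np$; on the high-probability event that $N$ lies within a factor $2$ of $np$ (its complement has probability at most $e^{-c\,np}$ for a constant $c>0$ and contributes at most $1$ to the expectation since $1/N\le1$), a second-order expansion of $t\mapsto1/t$ about $np$ together with $\mathrm{Var}(N)=np(1-p)$ gives $\mathbb{E}\big[\tfrac{n-N}{N(n-1)}\mid N\ge1\big]=\tfrac{1-p}{np}+O\big(\tfrac1{(np)^2}\big)$. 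Multiplying by $\sigma_w^2=\tfrac1n\sum_k M_{k,i}^4-T_{i,i}^2$, which is $O(1)$ under the boundedness of the entries of $M$ guaranteed by the incoherent factor model (Assumption~\ref{assume_coh}, with condition-number and incoherence parameters absorbed as elsewhere in the paper), yields equation~\eqref{eq_var_est_diag}; the stated remainder $O(\tfrac1{np})$ is a loose bound for $O\big(\tfrac{\sigma_w^2}{(np)^2}\big)$. \emph{Off-diagonal:} $N\sim\mathrm{Bin}(n,p^2)$ has mean $q=np^2=O(C^2 d^{-1}\log d)=o(1)$, so $N$ does \emph{not} concentrate and conditioning on $\{N\ge1\}$ genuinely changes its law; this is exactly why the off-diagonal variance is not a rescaled copy of the diagonal one but is instead dominated by the single-observation event $\{N=1\}$. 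Since $\tfrac{n-m}{m(n-1)}=1$ at $m=1$ and is at most $1$ for $m\ge2$, while $\Pr[N\ge2\mid N\ge1]=O(q)$ by a direct binomial estimate, we get $\mathbb{E}\big[\tfrac{n-N}{N(n-1)}\mid N\ge1\big]=1-O(q)$, hence $\mathrm{Var}(\wh T_{i,j}\mid(i,j)\in\Omega)=\sigma_w^2-O(q\,\sigma_w^2)$ with $\sigma_w^2=\tfrac1n\sum_k M_{k,i}^2M_{k,j}^2-T_{i,j}^2$; bounding $\sigma_w^2$ under the incoherent model makes the error $O((\log d)^2/d)$, which is equation~\eqref{eq_var_est_offdiag}.

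The crux --- and the step I expect to be most delicate to get cleanly right --- is the reduction in the first paragraph: recognizing that conditioning on the co-occurrence count turns the H\'ajek ratio into an SRSWOR sample mean. This single observation both removes the need for any genuine ``delta method'' remainder analysis of the ratio and explains why the two variance formulas look so different: for the diagonal, $np$ is large and $\mathbb{E}[(n-N)/(N(n-1))\mid N\ge1]\approx(1-p)/(np)$, mimicking a classical linearization; for the off-diagonal, $np^2$ vanishes and the same expectation is $1-O(q)$, driven by the event of observing exactly one co-observed pair of coordinates. The remaining ingredients --- the moment and tail estimates for $1/N$ under the binomial law, and the use of the incoherence assumption to bound $\tfrac1n\sum_k M_{k,i}^4$ and $\tfrac1n\sum_k M_{k,i}^2M_{k,j}^2$ so that the displayed error terms are genuinely of lower order than the main terms --- are routine.
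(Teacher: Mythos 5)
Your proposal is correct, but it takes a genuinely different route from the paper. The paper linearizes the ratio $A_{i,i}/B_{i,i}$ via a first-order Taylor (delta-method) expansion around $(\mathbb{E}[A_{i,i}],\mathbb{E}[B_{i,i}])$, explicitly computes $\mathrm{Var}(A_{i,i})$, $\mathrm{Var}(B_{i,i})$ and $\mathrm{Cov}(A_{i,i},B_{i,i})$ to obtain the diagonal formula (this is how the variance-reduction term $-\tfrac{1-p}{np}T_{i,i}^2$ emerges), and treats the off-diagonal case separately by arguing that, conditioned on $(i,j)\in\Omega$, the event $(I^\top I)_{i,j}=1$ dominates, so the conditional variance is essentially that of a single product $M_{k,i}M_{k,j}$. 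You instead condition on the co-occurrence count $N$ and observe that the H\'ajek ratio becomes the sample mean of a simple random sample without replacement, so the exact conditional variance $\tfrac{n-m}{m(n-1)}\sigma_w^2$ plus the law of total variance (with constant conditional mean, i.e.\ the same exchangeability already used for Lemma \ref{prop_consistency}) gives the clean identity $\mathrm{Var}(\wh T_{i,j}\mid(i,j)\in\Omega)=\sigma_w^2\,\mathbb{E}\bigl[\tfrac{n-N}{N(n-1)}\mid N\ge1\bigr]$, after which only a scalar binomial estimate is needed; the diagonal and off-diagonal cases then differ solely through whether $N$ concentrates around $np$ or is dominated by $\{N=1\}$. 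Your route buys a unified, exact decomposition with no uncontrolled delta-method remainder (the paper's $\epsilon_{i,i}$ term is only argued to be ``of the order of the variance''), and your remainders are in fact slightly sharper than the stated $O(1/(np))$ and $O((\log d)^2/d)$; the paper's route buys a direct, term-by-term comparison with the Horvitz--Thompson variance in the form quoted in the main text, following the classical survey-sampling presentation. One minor remark: the theorem is stated for a general $M$ without the factor model, so your appeal to Assumption \ref{assume_coh} to bound $\sigma_w^2$ is an extra hypothesis, but it plays the same role as the implicit boundedness of the entries of $M$ already needed for the paper's absolute error terms to be meaningful, so it is not a substantive gap.
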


We now compare the variance of the H\'ajek estimator, $Var(\wh T)$, with the variance of the Horvitz-Thompson estimator, $Var(\overline T)$, since both estimators are unbiased.
We first derive the variance of $\overline T$ to make the comparison.
For diagonal entries, we have that $\overline T_{i, i} = (n p)^{-1} {({\wh M}^{\top} \wh M)_{i, i} }$, for $i = 1, \dots, d$.
For off-diagonal entries, we have $\overline T_{i, j} = (n p^2)^{-1} {(\wh M^{\top} \wh M)_{i, j} }$, for $i \neq j$.
Then, we calculate the variance of $\overline T$ as:
\begin{align}
    Var\Big(\overline T_{i, i}\Big) &= \frac{1 - p}{n^2 p} {\sum_{k=1}^{n} M_{k, i}^4}, ~\forall\, i = 1, \dots, d, \label{eq_var_true_weight_diag}\\
    Var\Big(\overline T_{i, j}\Big) &= \frac{1 - p^2}{n^2 p^2} {\sum_{k=1}^{n} M_{k, i}^2 M_{k, j}^2}, ~\forall\, j = 1, \dots, d \text{ and } j \neq i,\label{eq_var_true_weight_off}
\end{align}
\hl{which can be verified based on the definition of $\overline T$ in Section \ref{sec_setup}. Now we can compare equations \eqref{eq_var_true_weight_diag} and \eqref{eq_var_true_weight_off} with equations \eqref{eq_var_est_diag} and \eqref{eq_var_est_offdiag}, respectively.}
\begin{itemize}
    \item Notice that the diagonal entries are nonzero with high probability over all the diagonal entries.
    By comparing equation \eqref{eq_var_est_diag} to equation \eqref{eq_var_true_weight_diag}, we see a reduction in the second term of equation \eqref{eq_var_est_diag}, \hl{$- (1-p) T_{i,i}^2 / ( {np})$}, which is at the same order as the first term.

    \item The off-diagonal entries are nonzero with probability equal to $q \approx n p^2 = \tilde O(d^{-1})$.
    By comparing equation \eqref{eq_var_est_offdiag} to \eqref{eq_var_true_weight_off}, we see that equation \eqref{eq_var_true_weight_off} is larger than the first term of equation \eqref{eq_var_est_offdiag} by a factor of ${n p^2} / (1 - p^2)$.
    In addition, the second term of equation \eqref{eq_var_est_offdiag}, which is always negative, further reduces to the variance of $\wh T_{i, j}$ compared with $\overline T_{i, j}$.
\end{itemize}

\medskip
\textit{Algorithmic implications.} 
One consequence of variance reduction is that the empirical performance of the H\'ajek estimator tends to be much more stable than the Horvitz-Thompson estimator in the ultra-sparse setting.
This is illustrated in Figure \ref{fig_compare_bias}, measured on synthetic datasets with $n = 10^4$ and $d = 10^d$, for various values of $p$ (uniform sampling) and $C$ (i.e., sampling $C$ entries per row without repetition).

\begin{figure}%
    \centering
    \includegraphics[width=0.5\textwidth]{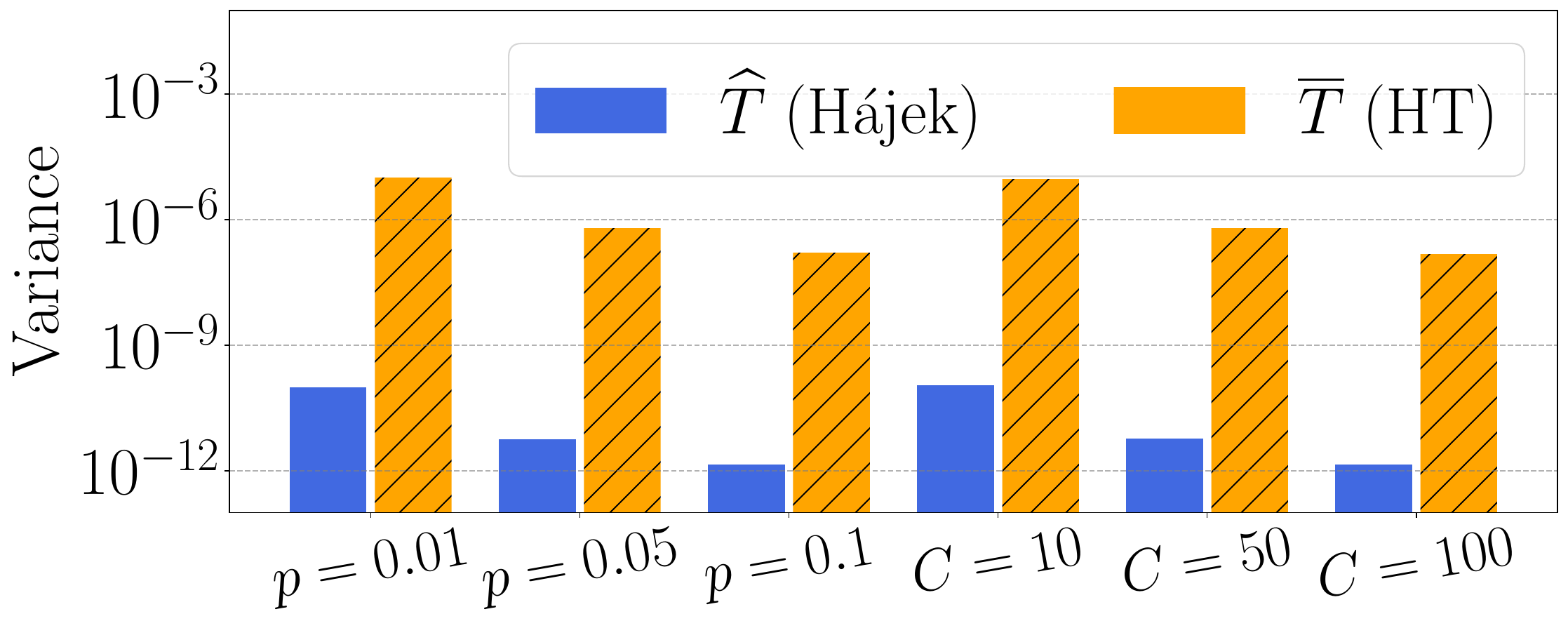}
    \caption{An illustration of the variance reduction using the H\'ajek estimator vs. the Horvitz-Thompson (HT) estimator, measured on synthetic data with $n=10^4$ and $d=10^3$, by repeating the data sampling procedure $100$ times and calculating the variance across either uniform sampling with probability $p$, or sampling $C$ entries per row. In particular, $Var(\wh T)$ is generally $10^{-3}$ lower than $Var(\overline{T})$. For details regarding the simulation setup, see Section \ref{sec_experiments}.}\label{fig_compare_bias}
\end{figure}

\subsection{Sample Complexity}\label{sec_sample_comp}

Next, we consider the entries outside $\Omega$.
We minimize a reconstruction error objective between $\wh T$ and $XX^{\top}$, for some $X\in\real^{d\times r}$, plus a regularization penalty of $R(X)$, as follows:
\begin{align}
    \ell(X) \define \frac 1 2 \bignormFro{P_{\Omega}(XX^{\top} - \wh T)}^2 + \lambda R(X),\label{eq_loss}
\end{align}
where $\lambda > 0$ is a regularization parameter.
$R(X)$ is defined as $\sum_{i=1}^d (\bignorm{X_i} - \alpha)_+^4$, where $X_i\in\real^{r}$ is the $i$-th row vector of $X$, for $i = 1, \dots, d$, and $\alpha$ is a scalar that roughly corresponds to the row vector norms of $X$.
This penalty regularizes the maximum $\ell_2$-row-norm of $X$, and is shown to provide theoretical properties in the optimization landscape of matrix completion \citep{ge2016matrix,ge2017no}.
We summarize our procedure in Algorithm \ref{alg_ipw}, which uses gradient descent to minimize $\ell(X)$.
\hl{As a remark, in the loss objective $\ell(X)$, we recover the diagonal and off-diagonal entries of $\wh T$ together via $XX^{\top}$, while reweighting the diagonal entries of $XX^{\top}$, based on the calculation in Lemma \ref{prop_consistency}.}\footnote{We remark that there are other algorithms besides gradient descent for the imputation step, such as subspace recovery on the Grassmann manifold \citep{boumal2011rtrmc}. \cite{zilber2022gnmr} design an iterative Gaussian-Newton algorithm for matrix recovery based on a linearization of the recovery objective. A natural extension of \algoname{} involves using the H\'ajek estimator in the first step, and then applying these methods (instead of stochastic gradients) in the second step.}

We analyze this algorithm in a common means model, where each row of $M$ follows a mixture of $r$ vectors $u_1, u_2, \dots, u_r \in \real^d$.
Let $M_i$ denote the $i$-th row vector of $M$.
For every $i = 1, 2,\dots,n$, assume that $M_i$ is drawn uniformly at random from $u_1, u_2, \dots, u_r$.
Let $U = [u_1, u_2, \dots, u_r] / \sqrt{r}$ denote a $d$ by $r$ matrix corresponding to the combined rank-$r$ factors.
Let $\kappa = \sigma_{\max}(U) / \sigma_{\min}(U)$ denote the condition number of $U$.
A critical condition to ensure guaranteed recovery in matrix completion is the following assumption regarding the row norms of $U$.

\smallskip
\begin{assumption}[See also Definition 1.1 in \cite{recht2011simpler} and Assumption 1 in \cite{ge2016matrix}]\label{assume_coh}
    Let $U e_i$ denote the $i$-th row vector of a $d$ by $r$ matrix $U$, where $e_i\in\real^d$ is the $i$-th basis vector, for any $i = 1, 2, \dots, d$.
    The coherence of $U$ is given by
    \begin{align}
        \mu \define \frac d r \max_{1\le i \le d} \frac{\bignorm{U e_i}^2}{\bignormFro{U}^2}.
    \end{align}
\end{assumption}
\hl{Assuming that $\mu(U)$ is a fixed value that does not grow with $d$; That is, $\mu(U)$ does not increase with the dimensionality of the problem.}
We show the following performance guarantee of gradient descent for recovering $T$, measured in terms of the Frobenius norm distance between a local minimizer and $T$.
\hl{In particular, recall that $\lambda$ refers to the strength of the regularization penalty, while $\alpha$ refers to the magnitude of the norm of the row vectors. See also the description of the incoherence penalty term below equation \eqref{eq_loss}.}

\smallskip
\begin{theorem}\label{thm_gd}
    Suppose the rows of $M$ follow a mixture of $r$ common factors given by $U \in \real^{d\times r}$.
    Additionally, the coherence of ${U}$ is at most $\mu$ for some fixed $\mu \ge 1$ that does not grow with $d$.
    Suppose each entry of $M$ is observed with probability $p = \frac C d$ for some fixed integer $C \ge 2$, and let $q = 1 - (1 - p^2)^n$.
    Let $\alpha = 4\kappa^2 r{\sqrt{\frac {\mu} d}}$ and $\lambda = \frac{(r+1) d q} {16r^2 \mu^3}$. %
    When $n \ge \frac{c d r^5 \kappa^6 \mu^2 \log(d)} {C^{2} \epsilon^{2}}$ for some fixed constant $c$ and $\epsilon \in (0, 1)$, with probability at least $1 - O(d^{-1})$ over the randomness of $\Omega$, when $d$ is large enough, any local minimizer $X$ of $\ell(\cdot)$ satisfies
    \begin{align}
        \bignormFro{X X^\top - T}^2 \le \epsilon^2.\label{eq_gd_err}
    \end{align}
\end{theorem}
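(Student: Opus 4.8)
The plan is to follow the now-standard ``no spurious local minima'' framework for matrix completion (as in \cite{ge2016matrix,ge2017no}), but with two adaptations dictated by our setting: the target is the second-moment matrix $T$ rather than a matrix whose entries are directly sampled, and the observation set $\Omega$ has the non-random, heterogeneous structure described in Claim \ref{claim_sampling}. First I would record the population version of the objective, $g(X) = \tfrac12\bignormFro{XX^\top - T}^2 + \lambda R(X)$, whose landscape is already understood: because $T = UU^\top$ (up to the mixture structure, $T = \tfrac1n M^\top M$ equals $U U^\top$ in expectation, and by the common-means assumption equals it exactly once $n$ is large via concentration over the uniform choice of rows), the only local minima of $g$ are global, with the incoherence penalty $R$ active only on a small spurious region that is excluded by the choice $\alpha = 4\kappa^2 r\sqrt{\mu/d}$. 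The key reduction is then: show that the sampled objective $\ell(X)$ and its gradient and Hessian are uniformly close to $q\cdot g(X)$ (the factor $q$ coming from the reweighting in $P_\Omega$) over the sublevel set where $\bignorm{X_i} \lesssim \alpha$ for all $i$, so that any local minimizer of $\ell$ is an approximate second-order stationary point of $g$, hence close to $T$.

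The core technical step — and the main obstacle — is a uniform concentration bound of the form
\[
\sup_{X:\, \bignorm{X_i}\le 2\alpha\,\forall i}\;\bignormFro{P_{\Omega}(XX^\top - \wh T) - q(XX^\top - T)}\;\le\;\delta\, q\, \bignormFro{XX^\top - T} \;+\; (\text{lower-order}),
\]
with $\delta$ small enough (polynomially in $r,\kappa,\mu$) once $n \ge c\, d r^5 \kappa^6 \mu^2 \log d / (C^2\epsilon^2)$. This splits into two independent pieces. The first is controlling $P_\Omega(\wh T - T)$, i.e.\ the estimation error of the Hájek estimator on $\Omega$: by Lemma \ref{prop_consistency} this is mean-zero entrywise, and by Theorem \ref{prop_var_est} each off-diagonal entry has variance $O(d^{-1}\log^2 d)$-ish after the $q$-scaling, so a matrix Bernstein / row-wise concentration argument — crucially exploiting the conditional independence across rows of $\wh M^\top \wh M$ mentioned in the introduction, since the entries of $\Omega$ are \emph{not} independent — gives $\bignorms{P_\Omega(\wh T - T)}$ small with probability $1-O(d^{-1})$. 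The second piece is the ``RIP-like'' deviation $P_\Omega(XX^\top) - q(XX^\top)$ for the rank-$\le 2r$ matrix $XX^\top - T$; here I would first fix $X$ and apply Bernstein over the independent coordinates of $\Omega$ (the off-diagonal entries of $I^\top I$ are functions of independent Bernoulli$(p)$ masks, hence negatively associated / independent enough), then take a union bound over an $\varepsilon$-net of the bounded region — the incoherence constraint $\bignorm{X_i}\le 2\alpha$ is exactly what keeps $\bignorm{XX^\top}_\infty$ small so that Bernstein's variance and max terms are controlled, and it is what forces the $\mu$ and $\kappa$ dependence and the ultra-sparse threshold $q \approx C^2 d^{-1}\log d$ to appear.

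Given the uniform bound, the endgame is routine bookkeeping. I would argue: (1) the penalty $R$ together with the data term confines any local minimizer $X$ to the incoherent region $\bignorm{X_i}\lesssim\alpha$ (if some row norm were too large, the quartic penalty gradient would dominate and contradict stationarity), so the uniform bound applies at $X$; (2) $\nabla\ell(X)=0$ and $\nabla^2\ell(X)\succeq 0$ then translate, via the closeness estimates, into $\bignorm{\nabla g(X)}\le \eta$ and $\nabla^2 g(X)\succeq -\eta\, I$ for $\eta$ controllably small; (3) invoke the population landscape result — any such approximate second-order point of $g$ satisfies $\bignormFro{XX^\top - T}^2 \le \mathrm{poly}(r,\kappa,\mu)\cdot(\eta/\lambda + \text{net error})$ — and finally (4) choose the net fineness and trace back the constants to see that $n \ge c\, d r^5 \kappa^6 \mu^2 \log d/(C^2\epsilon^2)$ makes the right-hand side at most $\epsilon^2$. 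The $r^5$ and $\kappa^6$ exponents should fall out of composing the condition-number losses in the population analysis (roughly $\kappa^4$ from $\alpha^2$, another $\kappa^2$ from inverting $\sigma_{\min}(U)^2$) with the $r$-dependence of the net cardinality and of $\bignorm{XX^\top}_\infty \lesssim r\alpha^2 \lesssim r^3\kappa^4\mu/d$; I expect the accounting here to be delicate but not conceptually hard once the uniform concentration lemma is in place.
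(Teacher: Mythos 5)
Your high-level plan coincides with the paper's framework (the regularizer forces stationary points into the incoherent region, concentration tailored to the non-random $\Omega$ plus control of the H\'ajek noise $\wh T - T$, then a Ge--Lee--Ma-style landscape argument), but the transfer mechanism you propose has a genuine gap, in two places. First, your central uniform inequality is false as stated. For a fixed matrix $A$, the entrywise variance calculation gives $\ex{\bignormFro{P_{\Omega}(A) - qA}^2} \approx q(1-q)\bignormFro{A}^2$ (the diagonal cancels under the weighted $P_\Omega$, and each off-diagonal entry contributes $q(1-q)A_{i,j}^2$), so $\bignormFro{P_{\Omega}(A)-qA}$ is of order $\sqrt{q}\,\bignormFro{A}$, which exceeds your claimed bound $\delta\, q\, \bignormFro{A}$ by a factor of roughly $1/\sqrt{q} \approx \sqrt{d/(\mathrm{poly}(r,\kappa,\mu)\log d)}$ in this regime. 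What is actually available at sampling level $q = \tilde{O}(1/d)$ --- and what the paper proves and uses --- are spectral-norm and bilinear-form statements for matrices with small entrywise magnitude: $\bignorms{P_{\Omega}(W) - qW} \lesssim q\nu\sqrt{\log(d)/(dq)}$ when $\bignorm{W}_\infty \le \nu/d$ (Lemma \ref{lem_concentration}), the inner-product version (Proposition \ref{lem_inner}), and a spectral bound on $P_\Omega(N)$ for the H\'ajek noise (Corollary \ref{cor_spec_N}). A Frobenius-norm closeness of the sampled residual to $q$ times the population residual is not a statement you can hope to prove, so the "uniform closeness of $\ell$ to $q\cdot g$" premise needs to be reformulated before anything downstream can run.

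Second, the step "any local minimizer of $\ell$ is an approximate second-order stationary point of $g$, then invoke a robust population landscape result" does not go through as a black box. Converting $\nabla^2\ell(X)\succeq 0$ into $\nabla^2 g(X)\succeq -\eta I$ requires uniform closeness of the Hessian quadratic forms over \emph{all} directions $V$, and under ultra-sparse sampling the deviation of $\inner{P_{\Omega}(VX^\top+XV^\top) - q(VX^\top+XV^\top)}{VX^\top+XV^\top}$ is only controllable for structured, incoherent test directions; no operator-norm bound on $\nabla^2\ell - q\nabla^2 g$ holds at this sample size, so a generic robust strict-saddle theorem for $g$ (which, with the quartic row-norm regularizer, is itself not off-the-shelf) cannot be invoked directly. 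The paper never forms an approximate-SOSP statement: it first establishes incoherence of the minimizer from the first-order condition (Lemma \ref{lem_incoherence}), transfers the first-order condition via the row-wise concentration bounds (Lemma \ref{lem_norm_bound}), tests the second-order condition only along the single direction $V = u_A v^\top$ with $u_A$ in the span of the incoherent rows to lower-bound $\sigma_{\min}(X)$ (Lemma \ref{lem_X_ld}), strips the regularizer from the residual (Lemma \ref{lem_grad_reduce}), and then converts smallness of $UU^\top X - XX^\top X$ directly into smallness of $\bignormFro{XX^\top - UU^\top}$ (Lemma \ref{lem_near_opt}). Your plan could be repaired by restricting the Hessian transfer to such incoherent directions and proving the corresponding restricted landscape statement for $g$, but at that point you are essentially reconstructing the paper's pointwise optimality-condition analysis, and the $\epsilon$-net over the whole region $\set{X: \bignorm{X_i}\le 2\alpha}$ becomes unnecessary.
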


\begin{algorithm}[!t]
    \caption{H\'ajek estimation with gradient descent (\algoname{}) for one-sided matrix completion}\label{alg_ipw}
    \textbf{Input:} A partially-observed data matrix $\wh M \in \real^{n \times d}$\\
    \textbf{Require:} Rank of the second-moment matrix $r$, number of iterations $t$, and learning rate $\eta$\\
    \textbf{Output:} A $d$ by $d$ matrix
    \begin{algorithmic}[1]
        \STATE $I\in\real^{n\times d} \leftarrow$ The $0$-$1$ indicator mask corresponding to the nonzero entries of $\wh M$
        \STATE $\Omega \subseteq [d] \times [d] \leftarrow$ The set of indices corresponding to the nonzero entries of $I^{\top} I$
        \STATE $\widehat T \in\real^{d \times d} \leftarrow$ The element-wise division between ${\hat{M}^\top \hat{M}}$ and ${I^\top I}$ on $\Omega$
        \STATE $X_0 \in\real^{d\times r} \leftarrow $ A random Gaussian matrix whose entries are sampled independently from $\cN(0, d^{-1})$ 
        \FOR{$i = 1, \dots, t$}
            \STATE $X_i \leftarrow X_{i-1} - \eta \nabla \ell(X_{i-1})$, where $\ell(\cdot)$ is defined in equation \eqref{eq_loss}
        \ENDFOR
        \RETURN $\wh T$ plus the entries of $X_t X_t^{\top}$ outside $\Omega$
    \end{algorithmic}
\end{algorithm}

This result implies that any local minimum solution of the loss objective is also approximately a global minimum solution.
We defer a proof sketch of this result to Section \ref{sec_proof_sketch}.
We remark that prior results have established one-sided matrix completion guarantees when each row has two observed entries \citep{cao2023one}.
By contrast, Theorem \ref{thm_gd} applies to arbitrary values of $C \ge 2$ in each row.

\hl{Notice that Theorem \ref{thm_gd} crucially relies on the assumption that $n$ is much larger than $d$. To provide several concrete examples, Table \ref{tab_stat_real} in Section \ref{sec_experiments} presents the values of $n$ and $d$ for several real-world datasets, including MovieLens and Amazon reviews. In these examples, the average $n / d$ is roughly $10$, which justifies the assumption that $n$ can be much larger $d$ in practice. Our experiments, presented later in Section \ref{sec_syn}, further validate the sample complexity result using synthetic data.}

\smallskip
\textit{Algorithmic implications.}
A crucial step in the proof is to leverage the incoherence assumption, which reduces local optimality conditions from finite samples to the entire population.
Later in Section \ref{sec_abl}, we also empirically demonstrate that the use of the incoherence regularization penalty helps improve performance on real-world datasets.

\medskip
\begin{remark}\label{remark_comm}
    \hl{The common means model is often used to study learning from heterogeneous datasets such as multitask learning and distributed learning.
    See, e.g., \citet{kolar2011union,dobriban2020wonder}.
    In particular, \citet{kolar2011union} study union support recovery under the common means model, further assuming that every mean vector is perturbed by another noise addition of $\epsilon_i$ (added to $M_i$).
    Despite the simplicity of this model, several learning problems can be reduced to the Normal means model \citep{brown1996asymptotic}.
    \citet{dobriban2020wonder} study a distributed ridge regression problem, where each every individual performs a ridge regression on a local machine, and then, the ridge estimators are aggregated together on a central server.
    In particular, the samples of each individual are assumed to follow a linear model with the same $\beta$, plus independent random noise $\epsilon$ with mean zero and variance $\sigma^2$.
    One way to extend the common means model is by positing a noise addition, such as a noise vector $\epsilon_i$ added to $M_i$, whose entries are drawn independently from a Gaussian distribution $\cN(0, \sigma^2 / d)$.
    Our proof can be naturally applied to this extension.
    In particular, there are three places that need to be modified to account for the noise addition, including:
    1) The proof of Lemma \ref{proof_avg_bias}, in particular equations \eqref{eq_Aii_noise_note}, \eqref{eq_Aii_hoef}, and \eqref{eq_var_bound_off}.
    2) The proof of Corollary \ref{cor_noise}, in particular, adding the dependence of $\sigma^2$ to the numerator in equations \eqref{eq_diag_bound} and \eqref{eq_offdiag_bound}.
    3) The proof of Corollary \ref{cor_spec_N}, which inherits the dependence on $\sigma^2$ from Corollary \ref{cor_noise}.
    These extensions will rely on concentration bounds on Gaussian random variables.
    Finally, we would also need to incorporate the noise variance in the diagonal entries of $\wh T$. We can de-bias this by subtracting a suitably scaled diagonal matrix in the objective $\ell(X)$.}
    
    \hl{Another plausible extension is to instead assume that each $M_i$ is a linear combination of the $r$ low-rank factors. This setting appears to be significantly more challenging and would likely go beyond the techniques developed in this paper, as one would first need to disentangle the linear dependence and design a new estimator capable of learning the low-rank subspace spanned by the latent factors. One promising direction for tackling this problem is to use the spectral estimator developed in the work of \cite{chen2021learning}. This is left as an open question for future work.}
\end{remark}

\smallskip
\hl{\textbf{Discussions.} In the proof of Theorem \ref{thm_gd}, we tackle the error terms arising from the diagonal and off-diagonal entries separately.
    This is because the diagonal entries are fully observed with high probability.
    By contrast, the off-diagonal entries are only sparsely observed.
    At a high level, as $n$ increases, the concentration errors in the diagonal entries will decrease (at a rate of $n^{-1/2}$).
    Meanwhile, $q$ will also increase, leading to more observed samples in the off-diagonal entries.
    Thus, we can set $n$ carefully to ensure that the error between $\wh T$ and $T$, when restricted to $\Omega$, is sufficiently small.
    The details can be found in Appendix \ref{proof_avg_bias}-\ref{proof_row_con}.}

\hl{We now discuss our work from the perspective of prior work by \cite{fan2013large}.
    Their work examines a low-rank plus noise common factor model.
    Their estimators involve a low-rank estimator, plus a diagonal matrix.
    In particular, this diagonal matrix is designed to offset the bias contributed by the noise term.
As an extension, if we incorporate the noise term into the latent factor model of $M_i$, we would similarly need to de-bias the diagonal entries to account for the norms of the noise. The proof can be extended to handle this setting by estimating the magnitude of the noise variance, and then adding this bias to the objective $\ell(X)$.}

\section{Proof Techniques and Extensions}\label{sec_proof_sketch}

Next, we present a sketch of our proof.
There are two major challenges in analyzing the H\'ajek estimator in the ultra-sparse sampling regime:
First, the estimator involves dividing one random variable by another random variable, resulting in a nonlinear estimator.
Further, the missing patterns in $\Omega$ are non-random, depending on the diagonal and off-diagonal entries, respectively. Our approach to tackle this non-linearity is via a first-order approximation technique for analyzing the H\'ajek estimator on the diagonal entries.
Further, we carefully analyze the bias in the off-diagonal entries.

Second, the sampling patterns of $\Omega$ are not fully independent. To this end, we establish a concentration inequality that handles the concentration error row by row in $\Omega$.
This is based on the observation that, conditioned on one row, expect the diagonal entries, the randomness of each entry in that row would be independent across different entries.
This row-by-row concentration argument also allows us to analyze the spectral norm of the bias matrix.
As a remark, this type of argument has been used in matrix completion with non-random missing data \citep{athey2021matrix}.
However, prior work focuses on analyzing nuclear norm regularization methods, while our result now applies to gradient-based optimization.
Next, we outline the main results in addressing the above two challenges.

\subsection{Main Proof Ideas}

\emph{\hl{Bias and variance of the H\'ajek estimator}.}
Recall that $\wh T$ is unbiased by Lemma \ref{prop_consistency}.
Thus, the variance of $\wh T$ is the same as the expected squared error between $\wh T$ and $T$.
A key result for deriving the variance of the H\'ajek estimator is as follows.

\smallskip
\begin{lemma}\label{cor_avg_bias}
    In the setting of Theorem \ref{thm_gd}, suppose $p = C / n$ for some fixed integer $C \ge 2$ and $n \ge d (\log d)  / \epsilon^2$.
    \hl{Let $S_i = \set{j: (i, j) \in \Omega, j \neq i}$ denote the set of indices corresponding to the nonzero entries in the $i$-th row of $\wh T$.}
    With probability at least $1 - O(d^{-1})$, the following must be true, for every $i = 1, 2, \dots, d$,
    \begin{align}
        Var\Big(\wh T_{i, i}\Big) &\le \frac{\mu^2 r}{d^2 np} \left(1 + \epsilon \sqrt{\frac{6C^{-1}}d}\right)\left(1 + r \sqrt{\frac {3\log(d / 2)} {2n} } \right), \label{eq_var_diag} \\
        \sum_{j \in S_i} Var\Big(\wh T_{i, j} \Big) &\le \frac {\mu q}{d} \left(1 + 3r \sqrt{\frac{3\log(d/2)} {2d}}\right) + \tilde O\left(d^{-3}\right). \label{eq_common_mean}
    \end{align} %
\end{lemma}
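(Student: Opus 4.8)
The plan is to derive both inequalities by substituting the common‑means model together with Assumption~\ref{assume_coh} into the variance approximations of Theorem~\ref{prop_var_est}, and then controlling the two remaining sources of randomness --- which of $u_1,\dots,u_r$ each row of $M$ equals, and the sampling mask $\Omega$ --- one after the other. Recall that $\wh T$ is unbiased (Lemma~\ref{prop_consistency}), so the quantities being bounded are exactly the per‑entry mean‑squared errors of $\wh T$; the $O(d^{-1})$ failure probability in the statement will be collected, at the end, from a union bound over the handful of concentration events used along the way.

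First I would fix a realization of $M$ and invoke Theorem~\ref{prop_var_est}: on an event of probability $1-O(d^{-1})$ over $\Omega$, all diagonal entries lie in $\Omega$, and by \eqref{eq_var_est_diag}--\eqref{eq_var_est_offdiag},
\begin{align*}
  Var(\wh T_{i,i}) &\le \frac{1-p}{np}\cdot\frac1n\sum_{k=1}^n M_{k,i}^4 + (\text{lower-order}), \\
  Var\big(\wh T_{i,j}\mid (i,j)\in\Omega\big) &\le \frac1n\sum_{k=1}^n M_{k,i}^2 M_{k,j}^2 + (\text{lower-order}),
\end{align*}
where the negative terms $-\tfrac{1-p}{np}T_{i,i}^2$ and $-T_{i,j}^2$ have been discarded (they only help) and the $O(\tfrac1{np})$ and $O(\tfrac{(\log d)^2}{d})$ remainders are carried along as ``(lower-order).'' Next, in the common‑means model each empirical moment on the right is an average of $n$ i.i.d.\ terms with mean equal to the corresponding population moment --- for instance $\mathbb{E}[M_{k,i}^2]=(UU^\top)_{i,i}=\tfrac1r\sum_{l}(u_l)_i^2$ --- and the fourth‑order sums I would bound by $\max_k M_{k,i}^2$ times the corresponding second‑order sum. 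Assumption~\ref{assume_coh} controls the entries $(u_l)_i$ and hence gives a range‑to‑mean ratio of $O(r)$ for $M_{k,i}^2$, so Hoeffding's inequality at failure probability $d^{-c}$ --- union‑bounded over all $i$ and all pairs $(i,j)$, costing only the $\log(d/2)$ factors --- replaces each empirical moment by its population value at a multiplicative cost $1+r\sqrt{3\log(d/2)/(2n)}$; this is weakened to $1+3r\sqrt{3\log(d/2)/(2d)}$ in the off‑diagonal bound using $n\ge d$, while for the diagonal bound the leftover $O(\tfrac1{np})$ remainder is folded into the factor $1+\epsilon\sqrt{6C^{-1}/d}$ using $n\ge d(\log d)/\epsilon^2$. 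Incoherence then supplies $\tfrac1r\sum_l(u_l)_i^4\le\big(\max_l(u_l)_i^2\big)(UU^\top)_{i,i}\lesssim \mu^2 r/d^2$ (giving, after multiplication by $\tfrac{1-p}{np}$, the prefactor $\tfrac{\mu^2 r}{d^2 np}$ in \eqref{eq_var_diag}), the per‑entry bound $\tfrac1r\sum_l(u_l)_i^2(u_l)_j^2\lesssim \mu/d^2$, and --- the key ingredient for \eqref{eq_common_mean} --- the row‑sum bound $\sum_{j\ne i}\tfrac1r\sum_{l}(u_l)_i^2(u_l)_j^2\le\big(\max_{l}\|u_l\|^2\big)(UU^\top)_{i,i}\lesssim \mu/d$.

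It then remains to sum the off‑diagonal variances over $j\in S_i$, and this is where the main obstacle lies. Writing $\sum_{j\in S_i}Var(\wh T_{i,j})\le\sum_{j:(i,j)\in\Omega}b_{i,j}$ with $b_{i,j}$ the per‑entry bound just derived, the indicators $\{\mathbf 1[(i,j)\in\Omega]\}_{j\ne i}$ are \emph{not} mutually independent --- they all depend on the sampling pattern of column $i$ --- so no off‑the‑shelf concentration inequality applies directly. The fix, which is precisely the row‑wise conditional‑independence observation highlighted in Section~\ref{sec_proof_sketch}, is to condition on the mask of column $i$ (equivalently on the number of ones in the $i$‑th column of $I$): conditionally the indicators become independent Bernoullis, so Bernstein's inequality applies to $\sum_{j}b_{i,j}\mathbf 1[(i,j)\in\Omega]$, whose conditional mean is $q\sum_{j\ne i}b_{i,j}\lesssim \mu q/d$ (by the row‑sum bound) and whose per‑term range is $O(\mu/d^2)$; a union bound over $i\in[d]$ then yields the leading term $\tfrac{\mu q}{d}\big(1+3r\sqrt{3\log(d/2)/(2d)}\big)$ of \eqref{eq_common_mean}, the coefficient $3$ also absorbing the $1+o(1)$ factor from a Chernoff bound on $|S_i|\sim\mathrm{Binomial}(d-1,q)$. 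The ``(lower-order)'' remainders from the second step are, per observed entry, of order $\tilde O(1/d)$ relative to $b_{i,j}$, so summing them over the $|S_i|=O(dq)$ observed entries (with $q=\tilde O(1/d)$ in the regime of Theorem~\ref{thm_gd}) leaves only the residual $\tilde O(d^{-3})$ claimed. Intersecting the high‑probability events --- that of Theorem~\ref{prop_var_est}, the Hoeffding events, the Chernoff events on $|S_i|$, and the Bernstein events --- and taking $d$ large yields the overall probability $1-O(d^{-1})$. The hard part throughout is the last step: obtaining a usable tail bound for $\sum_{j\in S_i}Var(\wh T_{i,j})$ in spite of the dependence among the events $\{(i,j)\in\Omega\}_j$, which is exactly what conditioning on column $i$ is designed to overcome.
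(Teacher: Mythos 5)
Your overall architecture coincides with the paper's: bound each variance entrywise, apply Hoeffding's inequality over the mixture randomness to pass from the empirical fourth/second moments to the factors $u_1,\dots,u_r$, use incoherence to obtain the $\mu^2 r/d^2$ and $\mu/d$ scalings, and control the row sum over $S_i$ through the mask randomness (conditional independence of the events $(i,j)\in\Omega$ across $j$ given column $i$, plus a Chernoff bound on $|S_i|$), finishing with a union bound; your explicit conditioning-plus-Bernstein step is the same device the paper uses when it applies Hoeffding to $\sum_{j\in S_i}u_{j,s}^2$. The genuine gap is in your first step: you invoke Theorem \ref{prop_var_est} as a black box and carry its additive remainders --- $O(1/(np))$ on the diagonal and $O((\log d)^2/d)$ off the diagonal --- as ``lower-order.'' Under the incoherence scaling they are not lower-order: the diagonal target in \eqref{eq_var_diag} is $\frac{\mu^2 r}{d^2 np}$, which is a factor of order $d^2$ \emph{smaller} than $O(1/(np))$, so that remainder cannot be ``folded into'' the factor $1+\epsilon\sqrt{6C^{-1}/d}$; likewise the off-diagonal per-entry target is of order $\mu^2 r/d^2$, roughly a factor $d$ below $O((\log d)^2/d)$, so reading that remainder as ``$\tilde O(1/d)$ relative to $b_{i,j}$'' is unjustified --- Theorem \ref{prop_var_est} states an absolute, not a relative, error.

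Because of this, the per-entry bounds must be re-derived at the finer scale, which is exactly what the paper's proof does. On the diagonal, the factor $1+\epsilon\sqrt{6C^{-1}/d}$ does not arise from absorbing a Taylor remainder: it comes from a Chernoff bound showing that $B_{i,i}=\sum_k I_{k,i}$ equals $np\bigl(1\pm\epsilon\sqrt{6C^{-1}/d}\bigr)$ uniformly over $i$ (this is where $n\ge d\log d/\epsilon^2$ enters), after which $Var(\widehat T_{i,i})\le \bigl(1+3\epsilon\sqrt{6C^{-1}/d}\bigr)\,Var\bigl(A_{i,i}/(np)\bigr)$, and the latter is computed exactly as $\frac{1-p}{n^2p}\sum_k M_{k,i}^4$ --- no additive $O(1/(np))$ term ever appears. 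Off the diagonal, the $\tilde O(d^{-3})$ residual comes from treating the overlap count explicitly: conditioned on $(i,j)\in\Omega$ the count is $1$ except with small probability, the two-overlap case is absorbed into the main term via $Var\bigl[(a_1+a_2)/2\bigr]\le \bigl(Var[a_1]+Var[a_2]\bigr)/2$ (Cauchy--Schwarz on the covariance), and only the three-or-more-overlap event contributes the error. If you replace your opening step by this finer per-entry analysis, the remainder of your argument goes through and matches the paper's.
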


\textit{Proof sketch.} The proof regarding the diagonal entries in equation \eqref{eq_var_diag} is based on a first-order approximation of $Var(\wh T_{i, i})$.
For every $i = 1, 2, \dots, d$, let
\begin{align}
    A_{i, i} \define \sum_{k=1}^n M_{k, i}^2 I_{k, i} ~\text{ and }~ B_{i, i} \define \sum_{k=1}^n I_{k, i}. \label{eq_AB}
\end{align}
We perform Taylor's expansion of $\wh T_{i, i}$ around $(\ex{A_{i, i}}, \ex{B_{i, i}})$ as follows (See also Chapter 5.5, \cite{sarndal2003model}):
\begin{align}
    \wh T_{i, i} %
    = \frac{ \ex{A_{i, i}} } {\ex{B_{i, i}}} + \frac 1 {\ex{B_{i, i}}} (A_{i, i} -\ex{A_{i, i}}) - \frac {\ex{A_{i, i}}}{(\ex{B_{i, i}})^2} (B_{i, i} - \ex{B_{i, i}}) + \epsilon_{i, i}.\label{eq_T_var_approx}
\end{align}
Above, the first term to the right of equation \eqref{eq_T_var_approx} stems from the zeroth-order expansion.
The second and third terms arise from taking the partial derivatives over $A_{i, i}, B_{i, i}$, respectively.
$\epsilon_{i,i}$ is an error term that is at the order of the variance of $A_{i, i}, B_{i, i}$, which can be calculated in close form based on equation \eqref{eq_AB}.

Given that $\wh T$ is unbiased, $Var(\wh T_{i, i})$ is equal to the expectation after squaring both sides of equation \eqref{eq_T_var_approx}.
As a result, we derive the variance approximation of $\wh T_{i, i}$ by carefully analyzing each term, which will lead to equation \eqref{eq_var_est_diag}.
\begin{itemize}
    \item Figure \ref{fig_taylor_approx} provides an illustration of equation \eqref{eq_T_var_approx} (except the $\epsilon_{i,i}$ error term) and the corresponding variance estimate from equation \eqref{eq_var_est_diag}.
    We find that both approximations hold with negligible errors for various sampling probabilities $p$ (uniform sampling with probability $p$) and $C$ (fixed number of entries per row).
    \item This simulation uses $n = 10^4$ and $d = 10^3$ and follows the setup stated in the synthetic data generation process, which can be found in Section \ref{sec_experiments}.
\end{itemize}
 
As for the off-diagonal entries of $\wh T$, note that when $(i, j) \in \Omega$,  with high probability the $(i, j)$-th entry of $I^{\top} I$ must be one.
Thus, $Var(\wh T_{i, j})$ is equal to the variance of $n$ values $M_{1, i} M_{1, j}, \dots, M_{n, i} M_{n, j}$ (plus some small errors).
This leads to the variance approximation of equation \eqref{eq_var_est_offdiag}.
See Appendix \ref{proof_avg_bias} for the full proof.

\bigskip
\emph{A concentration inequality for operator $P_{\Omega}$.} In order to analyze the optimization landscape of the loss objective $\ell(\cdot)$, we will first derive the first-order and second-order optimality conditions.
However, these are stated on the observed set $\Omega$, and we need to turn them into the full set on $T$.
To this end, we develop the following concentration inequality, which helps reduce finite-sample local optimality conditions to the full matrix.

\begin{lemma}\label{lem_concentration}
    Let $W\in\real^{d\times d}$ be a $d$ by $d$ symmetric matrix such that
    $\bignorm{W}_{\infty} \le \frac{\nu}{d}$, for some fixed positive value of $\nu$. %
    Then, with probability at least $1 - O(d^{-1})$, the following statement must be true:
    \begin{align}
        \bignorms{P_{\Omega}(W) - q W} &\le q \nu \sqrt{\frac {16 \log (2d)} {d q}}. \label{eq_rowwise_con}
    \end{align} %
\end{lemma}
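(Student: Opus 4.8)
\textbf{Proof proposal for Lemma \ref{lem_concentration}.}

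The plan is to prove the bound one row at a time and then assemble the rows into a spectral-norm bound on the difference $P_{\Omega}(W) - q W$. The key structural observation is that the randomness governing $\Omega$, when restricted to a single row $i$, decomposes into independent contributions across the column index $j$: whether $(i,j)\in\Omega$ for $j\neq i$ depends on the overlap pattern $\sum_k I_{k,i} I_{k,j}$, and once we condition on the $i$-th column of the mask $I$ (i.e.\ on which rows $k$ have $I_{k,i}=1$), the events $\{(i,j)\in\Omega\}$ for different $j\neq i$ become mutually independent, since each depends only on the disjoint column blocks $\{I_{k,j}\}_k$. This conditional independence is exactly the property the excerpt highlights as the crux of handling non-random missingness.

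First I would fix a row $i$ and condition on the set $R_i = \{k : I_{k,i}=1\}$, with $|R_i| = b$. Conditioned on this, for each $j\neq i$ the indicator $\mathbf{1}[(i,j)\in\Omega]$ is a Bernoulli variable equal to $1$ iff at least one $k\in R_i$ has $I_{k,j}=1$, which happens with probability $1-(1-p)^b$; crucially these are independent across $j$. (The diagonal term $(i,i)$ is absorbed separately: it lies in $\Omega$ with probability $1-(1-p)^n = 1-o(1)$ by Claim \ref{claim_sampling}, and on that event the weighted projection multiplies $W_{i,i}$ by $q$ exactly, so it contributes nothing to $P_{\Omega}(W)-qW$; this is why the operator $P_{\Omega}$ was defined with the extra factor $q$ on the diagonal.) Then the $i$-th row of $P_{\Omega}(W)-qW$ is, up to the zero diagonal entry, the vector with coordinates $W_{i,j}\big(\mathbf{1}[(i,j)\in\Omega] - q\big)$ for $j\neq i$ — a sum of independent, bounded, mean-not-quite-zero terms (the conditional mean is $1-(1-p)^b$ rather than $q=1-(1-p^2)^n$, so a short step shows the discrepancy $|1-(1-p)^b - q|$ is itself small with high probability once $b$ concentrates around $np$, and can be folded into the error). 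A vector Bernstein / Hoeffding argument in $\ell_2$, using $\|W\|_\infty \le \nu/d$ and the fact that each row has at most $d$ nonzero terms with conditional variance $O(q)$ each, gives that the $\ell_2$-norm of row $i$ of $P_\Omega(W)-qW$ is at most $q\nu\sqrt{16\log(2d)/(dq)}$ with probability $1-O(d^{-2})$; a union bound over the $d$ rows (and over the symmetric columns, or alternatively applying the same argument to $W^\top = W$) yields the claimed failure probability $O(d^{-1})$.

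To pass from a row-wise $\ell_2$ bound to a spectral-norm bound, I would use that for a symmetric matrix $E$, $\|E\|_{\mathrm{op}} \le \max_i \|E e_i\|_1$ is too lossy; instead I would bound $\|E\|_{\mathrm{op}}^2 \le \|E\|_{\mathrm{op}} \cdot \|E\|_\infty\cdot$ (support size) via the standard inequality $\|E\|_{\mathrm{op}} \le \sqrt{\|E\|_{1\to 1}\,\|E\|_{\infty\to\infty}} = \|E\|_{1\to1}$ for symmetric $E$, where $\|E\|_{1\to1}$ is the max absolute column sum — but since that reintroduces an $\ell_1$ row sum, the cleaner route is to control $\|E\|_{\mathrm{op}}$ directly by a matrix Bernstein inequality applied to $E = \sum_{(i,j)} (\mathbf{1}[(i,j)\in\Omega]-q) W_{i,j}(e_ie_j^\top + e_je_i^\top)$, after first conditioning on the full mask column-structure so that the summands become a sum of independent rank-$\le 2$ matrices; the per-term operator norm is $\le \nu/d$ and the matrix variance proxy is $O(q\nu^2/d)$, which reproduces the stated bound up to constants inside the logarithm. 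Either way, the spectral bound follows.

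The main obstacle I anticipate is the bookkeeping around the \emph{conditional} bias: conditioning on the $i$-th mask column makes the row entries independent but shifts their mean from $q$ to $1-(1-p)^{|R_i|}$, and $|R_i|$ is itself random (Binomial$(n,p)$). One must show that this random conditional mean is within $\tilde O(q/\sqrt{n\cdot(\text{something})})$ of $q$ with high probability — using $q = 1-(1-p^2)^n \approx np^2$ and $\mathbb{E}|R_i| = np$ — and then verify that the resulting deterministic shift, multiplied by $\|W\|_\infty$ and summed over the row, stays below the target RHS (it does, because $np^2 = q$ and the fluctuation of $|R_i|$ around $np$ is $O(\sqrt{np})$, contributing a lower-order term of order $q\nu/\sqrt{d}\cdot\sqrt{\log d}\cdot(np)^{-1/2}$, which is $o$ of the main term when $n \gtrsim d$). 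Carefully routing this subleading term, together with getting the constant $16$ inside the square root from the Bernstein constants, is where the real work lies; the conditional-independence reduction itself is clean once set up.
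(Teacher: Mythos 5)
There is a genuine gap, and it sits exactly where you flagged the difficulty: getting from entry/row-level concentration to the spectral norm. Your first route (per-row vector Bernstein giving $\|{\rm row}_i(P_\Omega(W)-qW)\|_2 \lesssim q\nu\sqrt{\log(2d)/(dq)}$, then a union bound over rows) does not produce an operator-norm bound: assembling row-wise $\ell_2$ bounds only controls the Frobenius norm, which is a factor $\sqrt{d}$ too large here, so "the spectral bound follows" does not follow from that step. Your fallback, matrix Bernstein applied to $\sum_{(i,j)}(\mathbf{1}[(i,j)\in\Omega]-q)W_{i,j}(e_ie_j^\top+e_je_i^\top)$ "after conditioning on the full mask column-structure," is not available as stated: conditioning on the full mask leaves no randomness at all, and no partial conditioning makes the indicators $\mathbf{1}[(i,j)\in\Omega]$ mutually independent across pairs, because any two pairs sharing a column index (say $(1,2)$ and $(1,3)$, or $(1,2)$ and $(2,3)$) both depend on a common column of $I$. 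The conditional independence you correctly identified is only \emph{within a fixed row} $i$ after conditioning on column $i$ of $I$; it does not extend across rows, which is what a single application of matrix Bernstein to the whole sum would require. (A matrix-martingale/Freedman or decoupling argument might rescue this, but that is a different and substantially more involved proof than what you describe.)

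The paper avoids both problems by never leaving the quadratic form: it writes $\|P_\Omega(W)-qW\|_{\mathrm{op}}$ as $\max_{\|x\|=1} x^\top(P_\Omega(W)-qW)x$, cancels the diagonal exactly as you do, and bounds the contribution of row $i$ by $e_i = \bigl|\sum_{j\neq i} W_{i,j}\,x_i x_j\,(\mathbf{1}[j\in S_i]-q)\bigr|$. Applying Bernstein row by row to this $x$-weighted sum is the key move: because the weights $x_j$ stay inside the sum, the per-row variance is at most $q(1-q)\|W\|_\infty^2 x_i^2\sum_j x_j^2 \le q(1-q)\nu^2 x_i^2/d^2$, so $e_i \lesssim \|W\|_\infty |x_i|\sqrt{q\log(2d)} + \|W\|_\infty|x_i|\max_j|x_j|$, and then the purely deterministic facts $\sum_i|x_i|\le\sqrt d$ and $\sum_i|x_i|\max_j|x_j|\le 1$ (together with $dq\ge 4\log(2d)$ to absorb the second term) yield exactly $q\nu\sqrt{16\log(2d)/(dq)}$. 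Only within-row conditional independence is ever invoked, and the triangle inequality over rows means no independence across rows is needed. Your handling of the conditional-mean shift $1-(1-p)^{|R_i|}$ versus $q$ is a legitimate refinement (the paper is terse on this point), but it does not repair the assembly step, which is where your proposal breaks down.
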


\begin{figure}[t!]
    \centering
    \begin{subfigure}{0.245\textwidth}
        \centering
        \includegraphics[width=0.970\textwidth]{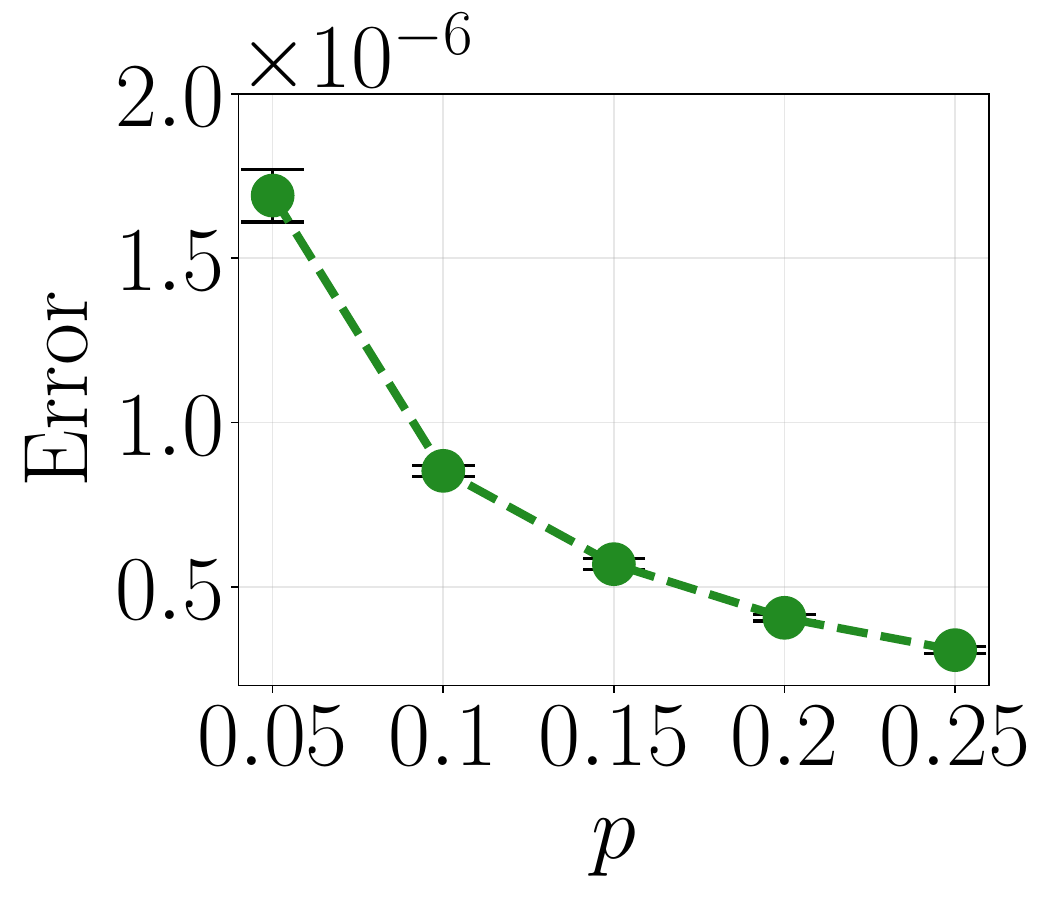}
        \caption{Taylor's expansion}\label{fig_taylor_iid_mean}
    \end{subfigure}
    \begin{subfigure}{0.245\textwidth}
        \centering
        \includegraphics[width=0.970\textwidth]{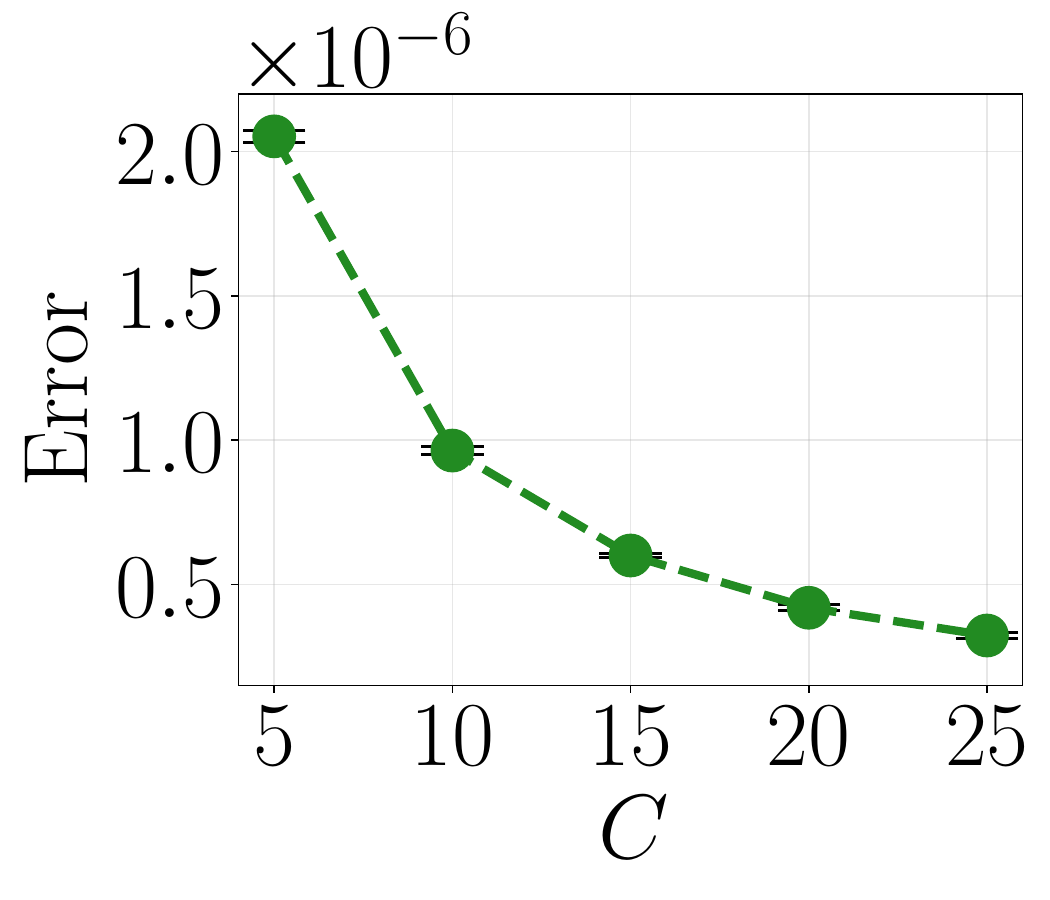}
        \caption{Taylor's expansion}\label{fig_taylor_fix_mean}
    \end{subfigure}    
    \begin{subfigure}{0.245\textwidth}
        \centering
        \includegraphics[width=0.970\textwidth]{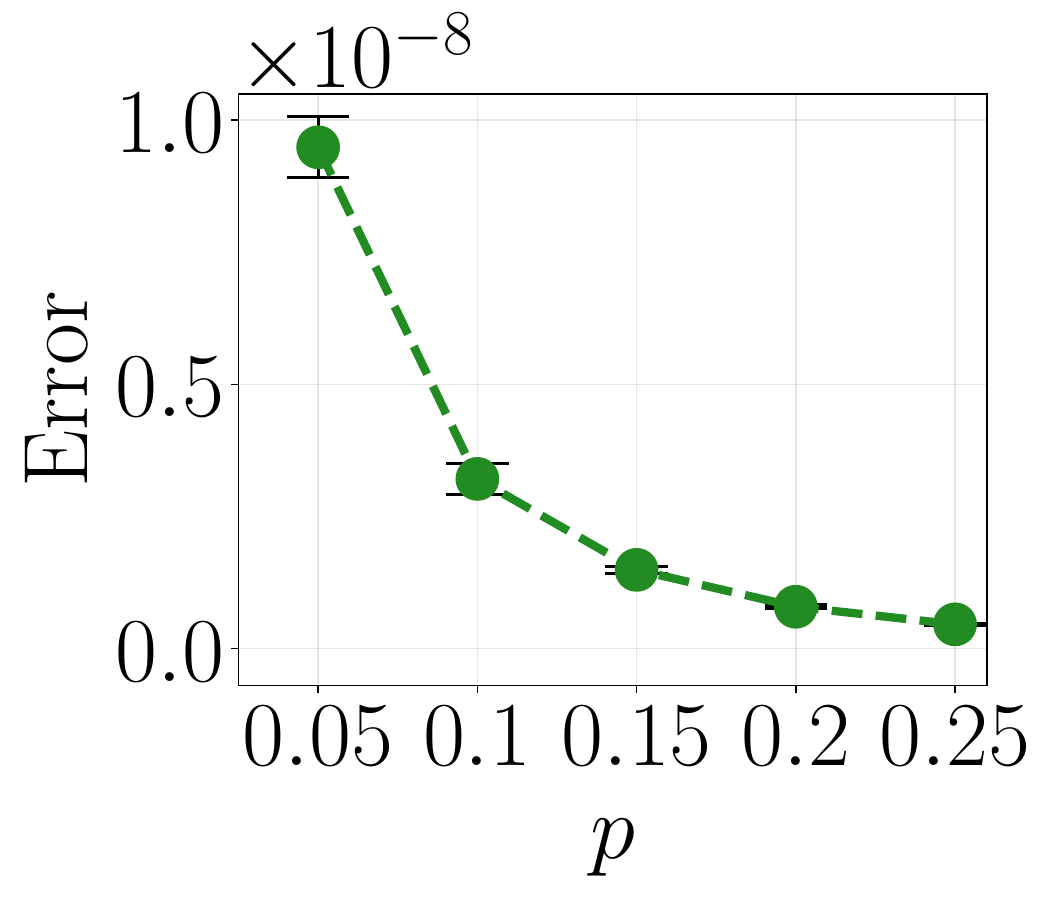}
        \caption{Variance approximation}\label{fig_taylor_iid_var}
    \end{subfigure}
    \begin{subfigure}[b]{0.245\textwidth}
        \centering
        \includegraphics[width=0.970\textwidth]{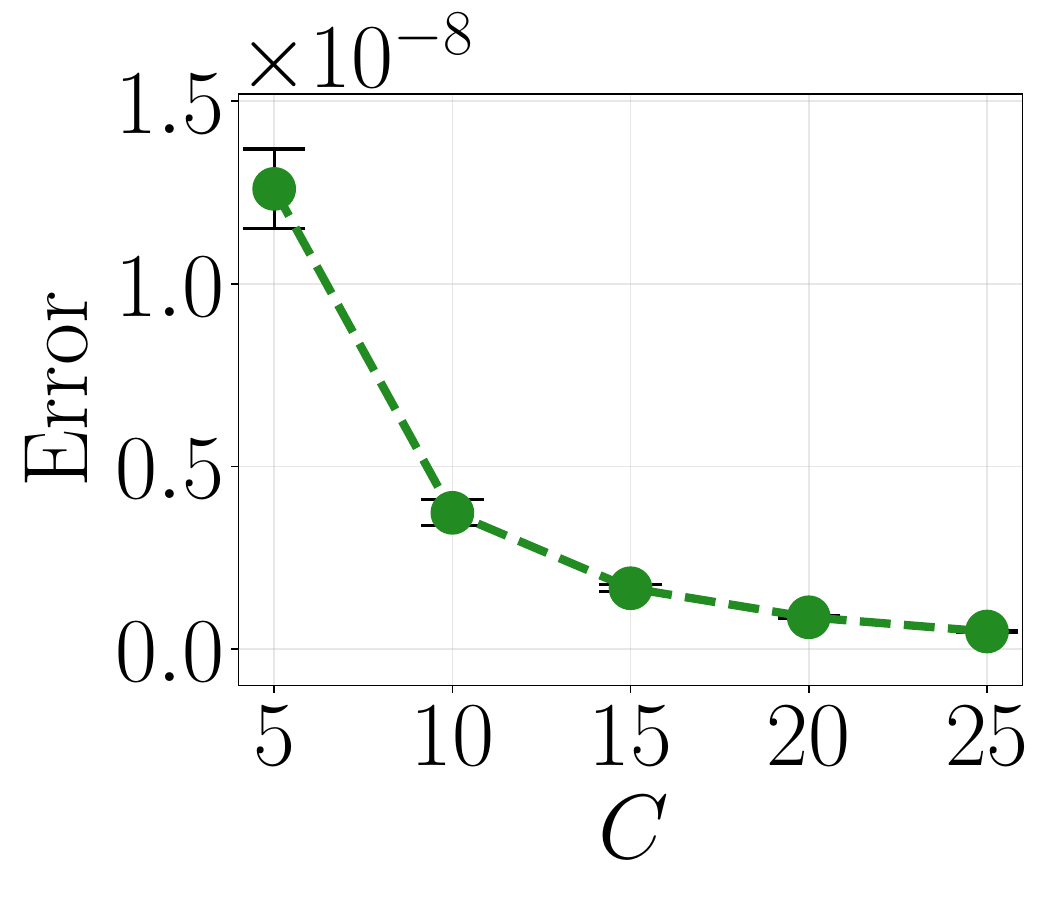}
        \caption{Variance approximation}\label{fig_taylor_fix_var}
    \end{subfigure}
    \caption{We illustrate the results from applying a first-order approximation to the variance of the diagonal entries of \algoname{}, run on synthetic data with $n =10^4$ and $d = 10^3$.
    Figures \ref{fig_taylor_iid_mean} and \ref{fig_taylor_iid_var}: Sample each entry with probability $p$.
    Figures \ref{fig_taylor_fix_mean} and \ref{fig_taylor_fix_var}: Sample $C$ entries per row without repetition.    
    In particular, the approximation errors incurred by both first-order Taylor's expansion and the variance approximation are generally less than $10^{-6}$.
}\label{fig_taylor_approx}
\end{figure}

The proof of this concentration result can be found in Appendix \ref{proof_row_con}.

The last ingredient involves analyzing the population loss of one-sided matrix completion, building on the machinery of \cite{ge2016matrix}.
These are discussed in detail in Appendix \ref{proof_local} and \ref{proof_reduction}, which involve analyzing the population landscape and then reducing empirical case to the population case, respectively.
\begin{itemize}
    \item In particular, we extend their analysis to account for the bias of $\wh T$ and the non-random missingness of $\Omega$ due to the diagonal and off-diagonal entries.
    \item The treatment of the bias of $\wh T$ relative to $T$ is especially tedious and requires careful calculation.
    We build on Lemma \ref{cor_avg_bias} and \ref{lem_concentration} to give the bias of each diagonal entry, and the bias of each row --- See Corollary \ref{cor_noise}.
    \item The final proof of Theorem \ref{thm_gd} can be found in Appendix \ref{appendix_proof_gd}.
    We also discuss our work from the perspective of the work of \cite{ge2016matrix} in Remark \ref{remark_diff}.
\end{itemize}

\smallskip
\begin{remark}\label{remark_lowrank}
    \hl{The above sample complexity result assumes the existence of a low-rank factor model, and the gradient descent algorithm requires knowing the exact rank of the factor model.
    In practice, one can adjust the rank parameter in the gradient descent algorithm via cross-validation.
    See a detailed ablation analysis in Appendix \ref{app_ab}.
    It is an interesting question to examine adaptive optimization methods that can automatically learn or estimate the true rank without knowing it.
    One natural extension of our technique is by running SVD on $\wh T$ and from the decomposition infer the rank of $T$.
    For example, this can be achieved by analyzing the spectrum induced by the noise added to $\wh T$. See Corollary \ref{cor_noise} in Appendix \ref{proof_avg_bias} for the analysis of the bias (i.e., $\wh T - T$) on $\Omega$.}
\end{remark}

\subsection{Extensions}\label{sec_extension}

We now describe two extensions of our approach.
In the first extension, we use \algoname{} to recover the unobserved entries of $M$.
To accomplish this, we solve a least-squares regression problem that projects the observed entries of $M$ onto the span of the recovered second-moment matrix.
An illustration of this overall procedure is provided in Figure \ref{fig_illustrate}, and the complete algorithm is detailed in Algorithm \ref{alg_user_level}.
\hl{In particular, when $C \ge O(r \log d)$, one can show that this least-squares procedure can accurately recover every row of $M$.}

{As discussed in the introduction, recovery guarantees are generally impossible when the number of samples is extremely limited. However, one practical way to view this algorithm is as a user-level recovery procedure: when a user performs the local least-squares regression, it is likely that they will have access to additional local data beyond the ultra-sparse global observations. In such cases, the least-squares step can accurately reconstruct the entire row based on the recovered row space.}

\begin{algorithm}[t]
    \caption{Missing data imputation using one-sided matrix completion}\label{alg_user_level}
    \textbf{Input:} A partially observed $\wh M \in \real^{n \times d}$\\
    \textbf{Require:} Rank $r$, number of iterations $t$, learning rate $\eta$\\
    \textbf{Output:} An $k$ by $d$ matrix corresponding to the imputed rows of $S$
    \begin{algorithmic}[1]
        \STATE $Z \leftarrow$ \algoname{}($\wh M$; $r, t, \eta$) \hfill //~{May add Gaussian noise to the non-zero entries of $\wh M$}
        \STATE $U_r D_r U_r^{\top} \leftarrow$ rank-$r$ SVD of $Z$
        \STATE $\Omega \leftarrow$ set of indices corresponding to the nonzero entries of $\wh M^{\top} \wh M$
        \STATE $Q \leftarrow %
            \arg \min_{Q\in\real^{n \times r}} \frac 1 2 \sum_{i \in S, (i, j) \in \Omega} \fullbrace{\big(Q U_r^{\top}\big)_{i, j} - \wh M_{i, j}}^2$
        \RETURN $Q X^{\top}$
    \end{algorithmic}
\end{algorithm}

In the second extension, we further generalize our recovery guarantees to account for differential privacy (DP) using the Gaussian mechanism.
We begin by recalling the notion of differential privacy in the context of matrix completion.
Given an input $\wh M$, an algorithm $\cA$ is said to satisfy $(\varepsilon, \delta)$-joint differential privacy if, for any $i \in \set{1, 2, \dots, n}$, replacing the $i$-th row of $M$ (and the corresponding row of $\wh M$) by another vector $x\in\cX \subseteq \real^d$ yields matrices $\wh M$ and ${\wh M}'$ that differ in only one row. 
Then, for any set of measurable events $S$,
\begin{align}\label{eq_ed}
    \Pr\left[ \cA(\wh M) \in S\right] \le \exp(\varepsilon) \Pr\left[\cA(\wh M') \in S \right] + \delta.
\end{align}
This definition follows prior work on user-level DP for matrix completion \citep{liu2015fast,wang2023differentially}.
To satisfy $(\varepsilon, \delta)$-DP, we perturb the nonzero entries of $\hat M$ with Gaussian noise drawn from $\cN(0, \sigma^2)$.
To determine the appropriate noise level $\sigma$, we measure the $\ell_2$-sensitivity of the algorithm $\cA: \real^{n \times d} \rightarrow \real^{d \times d}$ as:
\begin{align}
    \Delta_2(\cA) = \max_{M \sim M'} \bignormFro{\cA(M) - \cA(M')}, \label{eq_sensitivity}
\end{align}
where $M \sim M'$ denotes that the two matrices differ in at most one row.

By standard results in the DP literature, when \[\sigma = \frac{2\sqrt{\ln{(1.25 \delta^{-1})}} \Delta_2(\cA)} {\varepsilon},\] the algorithm \algoname{} combined with the Gaussian mechanism satisfies the $(\varepsilon, \delta)$-joint differential privacy with high probability. See Theorem A.1 in \citet{dwork2014algorithmic} for the full statement.
In Appendix \ref{app_ab}, we empirically estimate the sensitivity of \algoname{} and find that $\Delta_2(\cA)$ remains small---typically between $11$ and $19$ across several datasets.

Together, these two extensions show that our approach can (i) leverage the recovered row space to impute missing entries of $M$, and (ii) ensure privacy-preserving estimation under standard Gaussian noise perturbation. 

\section{Experiments}\label{sec_experiments}

In this section, we evaluate our proposed approach through extensive experiments on both synthetic data and real-world datasets.

\textbf{Synthetic data generation.}
We generate synthetic data by sampling each entry of $M\in\real^{n\times d}$ independently from a Gaussian distribution $\mathcal{N}(1/ \sqrt{d}, 1/ d)$, where $d$ denotes the number of columns. {Unless otherwise stated, we fix $n =10^4$ and $d=10^3$. To construct a low-rank ground-truth matrix, we perform an SVD of $M$ and retain the top-$10$ singular values and corresponding singular vectors, truncating the lower tail.}
The observed matrix $\hat{M}$ is then obtained by sampling each entry independently with probability $p = C / d$. We also consider an alternative sampling scheme that selects exactly $C$ entries per row without replacement, and find that the results are qualitatively similar under both schemes.

\begin{figure*}[t!]
    \centering
    \includegraphics[width=1.00\textwidth]{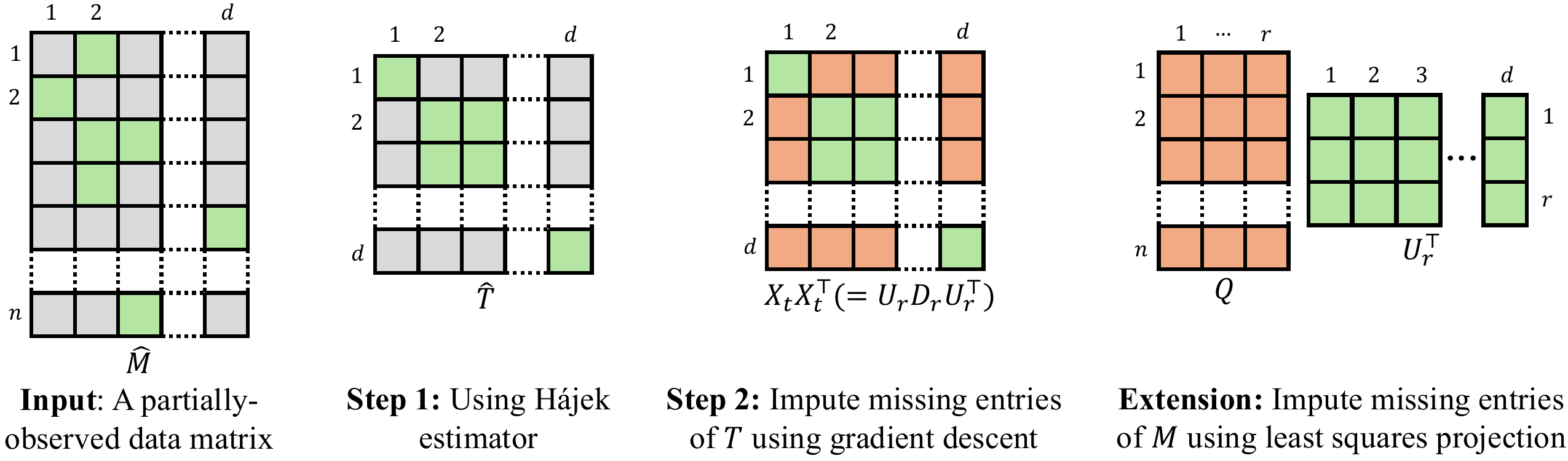}
    \caption{\emph{Illustration of the proposed one-sided matrix completion pipeline.}
    \textbf{Input:} A partially observed matrix $\hat M\in\real^{n\times d}$.
    Each row represents a user's data (e.g., movie ratings), and each column represents an item.
    Thus, $M$ is a tall-and-skinny matrix, with $n$ being larger than $d$.
    \textbf{Step 1:} Apply the H\'ajek estimator to correct for non-random missingness in the empirical second-moment matrix $\hat M^{\top} \hat M$ over the observed index set $\Omega$.
    \textbf{Step 2:} Impute the remaining unobserved entries of $T$ by running gradient descent on a  low-rank reconstruction loss between the estimated and true second-moment matrices.
    The final $T$ combines the observed entries from $\wh T$ with the missing entries from $X_t X_t^{\top}$, where \hl{$X_t$ denotes the $t$-th gradient descent iterate.}
    \textbf{Extension:} Perform full matrix imputation by projecting onto the recovered low-rank subspace $U_r$ (obtained via rank-$r$ SVD of the estimated $T$) and solving a least-squares regression problem to estimate the missing entries of $M$.}\label{fig_illustrate}
\end{figure*} %

\textbf{Real-world datasets.} We further evaluate our algorithm on several large, sparse matrix datasets: (i) three MovieLens datasets with $20$ million and $32$ million nonzero entries; 
(ii) an Amazon Reviews dataset~\citep{hou2024bridging}; and
(iii) a Genomics dataset from the $1,000$ Genomes Project \citep{cao2023one}.
These datasets are standard testbeds for sparse matrix completion and cover a wide range of sparsity leves and scales representative of real-world recommendation and biological data.
Their detailed statistics are summarized in Table~\ref{tab_stat_real}.
We formulate the prediction task on each dataset as a regression problem.

\textbf{Baselines.} We compare our method against three strong baselines widely used in the matrix-completion literature:
\begin{enumerate}
    \item Nuclear-norm regularization \citep{cao2023one}, which coincides with the same weighted estimator as \algoname{} when each row contains exactly two entries. One difference between our implementation and theirs is that we also add the incoherence regularization term to the loss function. %
    \item Alternating Gradient Descent (alternating-GD). 
    \item Soft-Impute with Alternating Least Squares (softImpute-ALS) \citep{hastie2015matrix}.
\end{enumerate}
To adapt the latter two methods for one-sided matrix completion, we first apply each baseline to recover the full matrix $M$ and then compute the corresponding second-moment matrix $T$ as $M^{\top} M / n$.
Additional dataset descriptions and implementation details are provided in Appendix \ref{app_experiments}.

\textbf{Evaluation metrics.} For one-sided matrix completion, we measure compare the recovery error  as the Frobenius norm between the estimated and true $T$ matrices.
For full matrix imputation, we compare Algorithm \ref{alg_user_level} to alternating-GD and softImpute-ALS, reporting the root mean squared error (RMSE) between the imputed and ground-truth values.
All experiments are implemented in PyTorch and executed on an Ubuntu server with $16$ Intel Xeon CPUs and one Nvidia Quadro $6000$ GPU.

\begin{table}[t!]
\centering
\caption{The detailed statistics of five real-world matrix datasets used in our experiments.}\label{tab_stat_real}
\begin{tabular}{l|ccccc}
\toprule
Dataset  & Genomes & Amazon Reviews & MovieLens-20M & MovieLens-25M & MovieLens-32M \\
\midrule
\# Rows & $100,000$  & $100,000$ & $138,000$ & $162,000$ & $200,948$ \\
\# Columns & $2,504$ & $50,000$  & $27,000$  & $62,000$ & $87,585$ \\ 
\# Nonzero & $2.5\times10^8$ & $1.3\times 10^6$  & $2 \times 10^7$ & $2.5 \times 10^7$ & $3.2 \times 10^7$ \\
\bottomrule
\end{tabular}
\end{table}

\textbf{Summary of numerical results.}
For one-sided matrix completion in the sparsest regime---where each row has only two observed entries---we find that \algoname{} reduces recovery error by $70\%$ compared to nuclear-norm regularization on synthetic data.
Across all three real-world datasets (each with sparsity below $1\%$), \algoname~reduces the recovery error of $T$ by at least $42\%$ relative to alternating-GD and by $59\%$ relative to softImpute-ALS.

When using \algoname{} to impute the missing entries of $M$ as part of Algorithm \ref{alg_user_level}, the overall approach achieves an $85\%$ reduction in RMSE compared to alternating-GD and softImpute-ALS on synthetic data where each row has only two entries. 
On real-world datasets, \algoname~further reduces RMSE by at least $21\%$ relative to alternating-GD and by $38\%$ relative to softImpute-ALS.

Finally, we conduct ablation studies to validate our algorithmic design choices. In particular, we highlight the benefits of incorporating the incoherence regularization $R(X)$ in the loss and demonstrate that \algoname~remains robust under random noise perturbations in the input.
Together, these findings indicate that \textbf{our approach is particularly effective and stable in ultra-sparse matrix regimes}.

\subsection{Measuring Bias on Observed Entries}\label{sec_bias_red}

We begin by comparing the bias between $\wh T$ and $\overline T$, measured by the sum of squared errors between $\wh T$ (and $\overline T$) with $T$ on $\Omega$.
Our finding is illustrated in Figure \ref{fig_bias_approx}.
First, we find that the bias of $\wh T$ compared to $T$, is at the order of $10^{-4}$ to $10^{-2}$ for various $p$.
In Figure~\ref{fig_T_hat_bias}, we find that on synthetic datasets, the bias of $\wh T$ is $9\times10^{-4}$ and the bias of $\overline{T}$ is $0.3$ averaged over various sampling probabilities, resulting in a reduction of $\mathbf{99\%}$.
In Figure \ref{fig_T_hat_ml}, we observe that for three MovieLens datasets, the bias of $\wh T$ is $6\times10^{-4}$ and the bias of $\overline{T}$ is $5\times10^{-3}$, resulting in a reduction of $\mathbf{88\%}$.

Additionally, we test a biased sampling procedure, where users who watch more movies tend to rate more movies as well (also known as ``snowball effects'' \citep{chen2020worst}).
For each $i = 1, \dots, n$, we sample its entries with probability $p_i$ equal to $C / d$ times the number of non-zeros in row $i$.
We find that under this biased sampling procedure, the bias of $\wh T$ is $3\times10^{-4}$ and the bias of $\overline{T}$ is $5\times10^{-3}$, resulting in a reduction of $93\%$.
As explained earlier, these results stem from the variance reduction effect of $\wh T$.

\subsection{Recovery Results for Synthetic Matrix Data}\label{sec_syn}

We now examine one-sided recovery error by keeping the number of samples $m$ fixed while varying $p$ and $d$ separately.
Since the sample complexity on $n$ grows linearly with $d$ times $\log(d)$, the error should remain roughly constant as we fix $m/d$ while slowly increasing $d$.
We report simulation results under three regimes.
For all three setting, we fix the number of rows $n = 10^4$.

First, we set the rank $r = 10$, noise injection variance $\sigma^2 = 1$, and vary $d$ from $10^3$ to $5\times10^3$ and $p$ from $2/d$ to $10/d$.
As shown in Figure \ref{fig_sample_complexity_fix_rank}, the estimation error remains flat for different values of $p$ and $m$, which suggests that sample complexity does not grow with $d$, after adjusting for $m / d$.

Second, we set the sampling probability $p = 2/d$, $\sigma^2 = 1$, and vary $d$ from $10^3$ to $5\times10^3$ and $r$ from $5$ to $25$.
As shown in Figure \ref{fig_sample_complexity_fix_p}, the estimation error remains flat for a given rank $r$, as $d$ grows.

Third, we set the dimension $d = 10^3$, $p = 2 / d$, $r = 10$, while varying $\sigma^2$ from $1$ to $50$.
We find that as $\sigma^2$ increases, the estimation error grows linearly, as shown in Figure \ref{fig_sample_complexity_sigma}.
This is consistent with our bounds in Theorem \ref{thm_gd}, where $n$ grows proportionally with $\sigma^2$.

\begin{figure}[t!]
    \centering
    \begin{subfigure}[b]{0.4900\textwidth}
        \centering
        \includegraphics[width=0.99\textwidth]{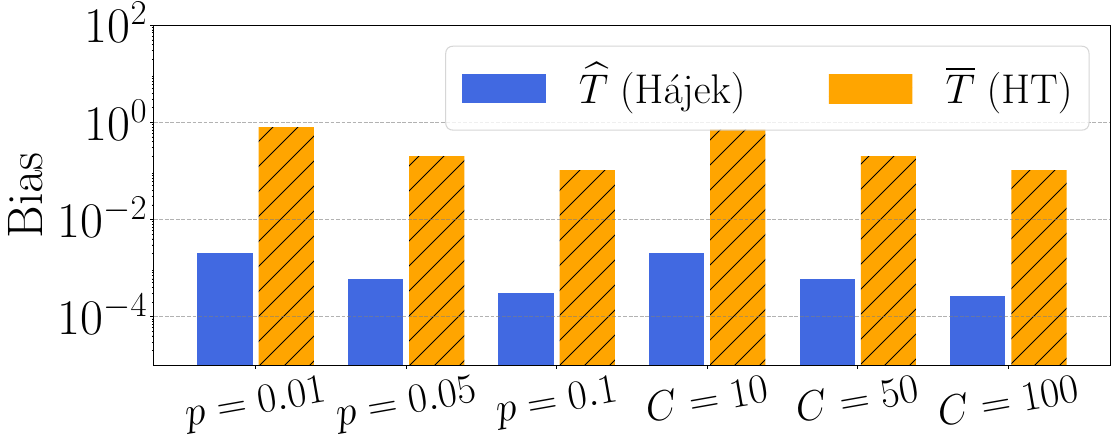}
        \caption{Comparing the bias of $\wh T$ with the bias of $\overline T$}\label{fig_T_hat_bias}
    \end{subfigure}\hfill
    \begin{subfigure}[b]{0.4900\textwidth}
        \centering
        \includegraphics[width=0.96\textwidth]{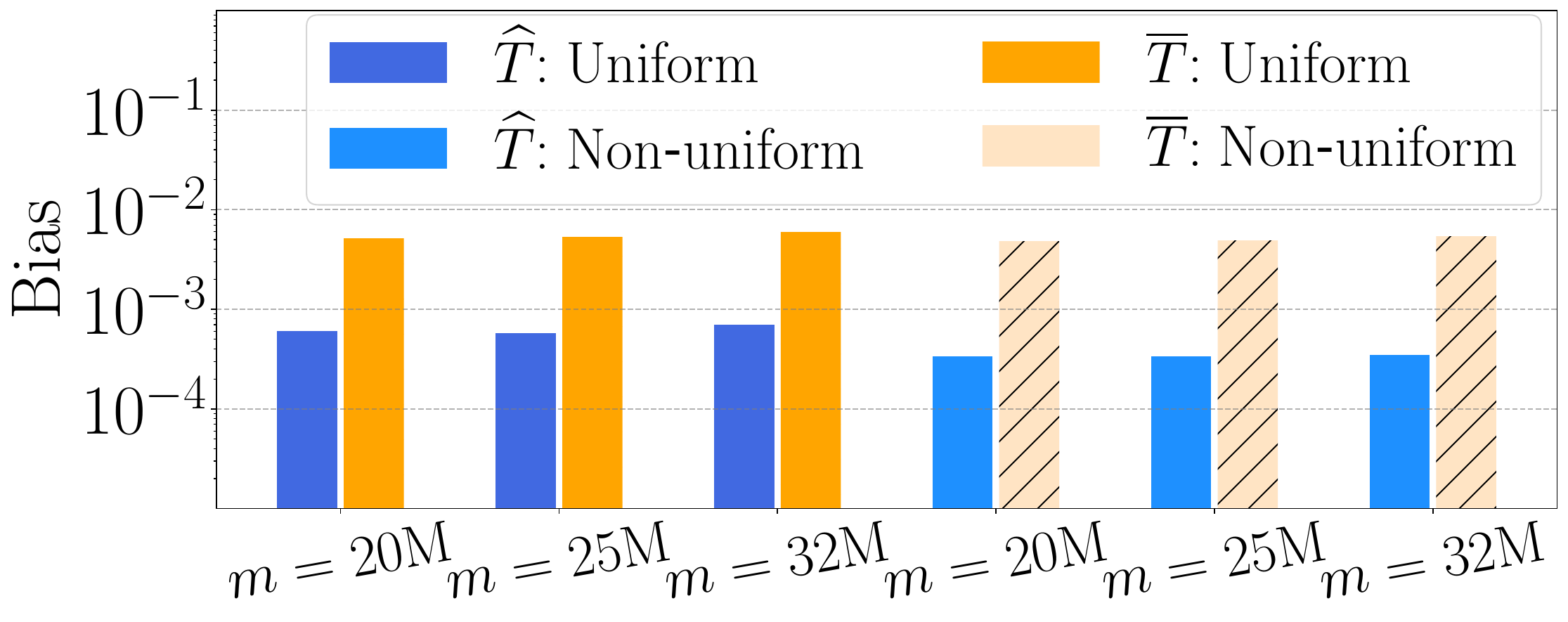}
        \caption{Comparing the bias of $\wh T$ with the bias of $\overline{T}$}\label{fig_T_hat_ml}
    \end{subfigure}
    \caption{Comparing the bias of the H\'ajek estimator vs. the Horvitz-Thompson estimator, measured as the mean squared error between $\wh T$ (or $\overline T$) and $T$ on the index set $\Omega$.
    Figure \ref{fig_T_hat_bias}: Reporting the bias on synthetic matrix data with $n = 10^4$ and $d = 10^3$.
    Figure \ref{fig_T_hat_ml}: Reporting the bias on MovieLens datasets.
    We consider both uniform sampling with probability $p$ as well as sampling $C$ entries from each row. In Figure \ref{fig_T_hat_ml}, we further consider a non-uniform sampling setting, where the sampling probability at each row is proportional to the number of nonzero entries in that row.}\label{fig_bias_approx}
\end{figure}

\begin{figure}[t!]
    \centering
    \begin{subfigure}[b]{0.325\textwidth}
        \centering
        \includegraphics[width=0.85\textwidth]{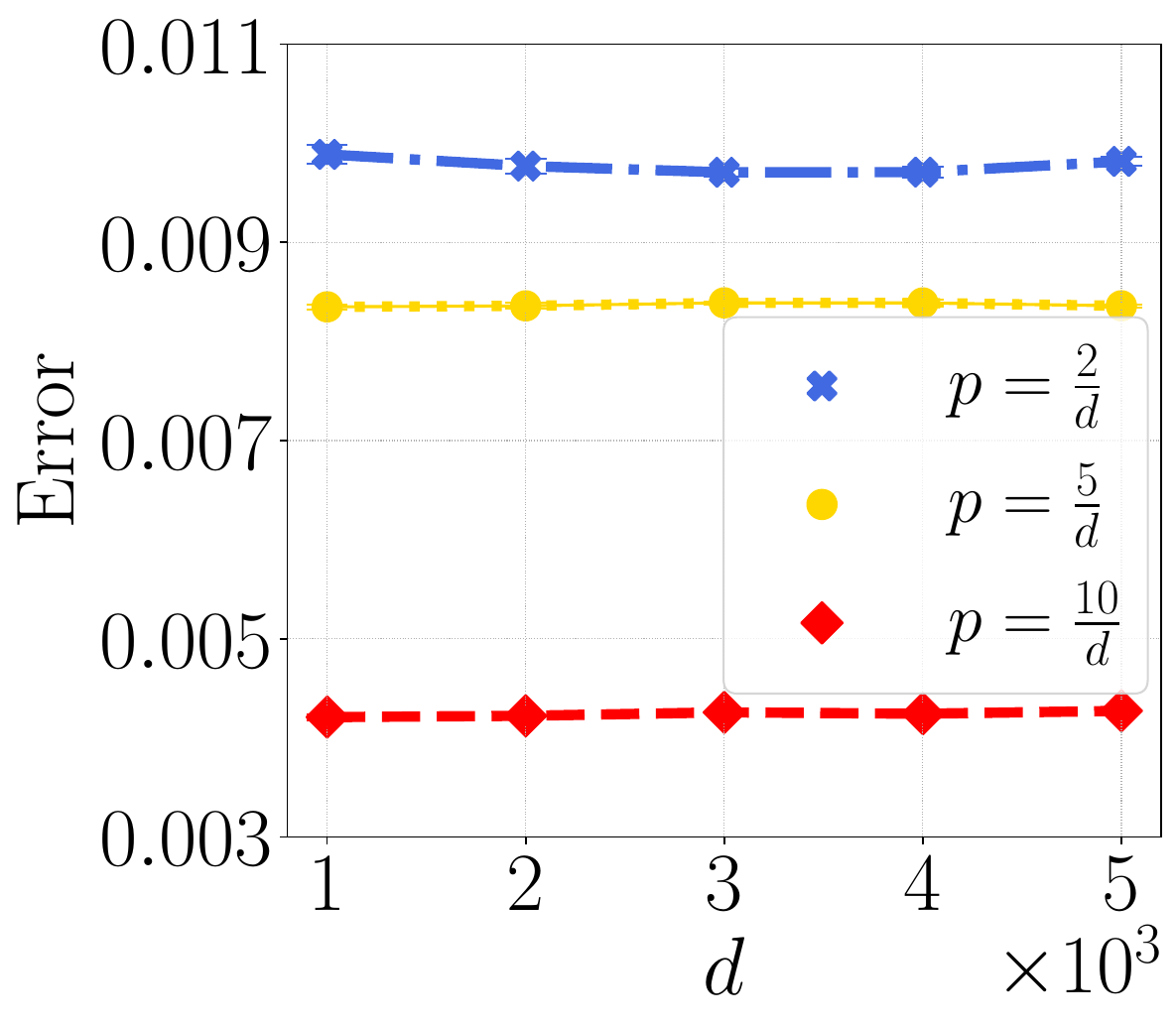}
        \caption{Varying $d$ for different $p$}\label{fig_sample_complexity_fix_rank}
    \end{subfigure}%
    \begin{subfigure}[b]{0.325\textwidth}
        \centering
        \includegraphics[width=0.85\textwidth]{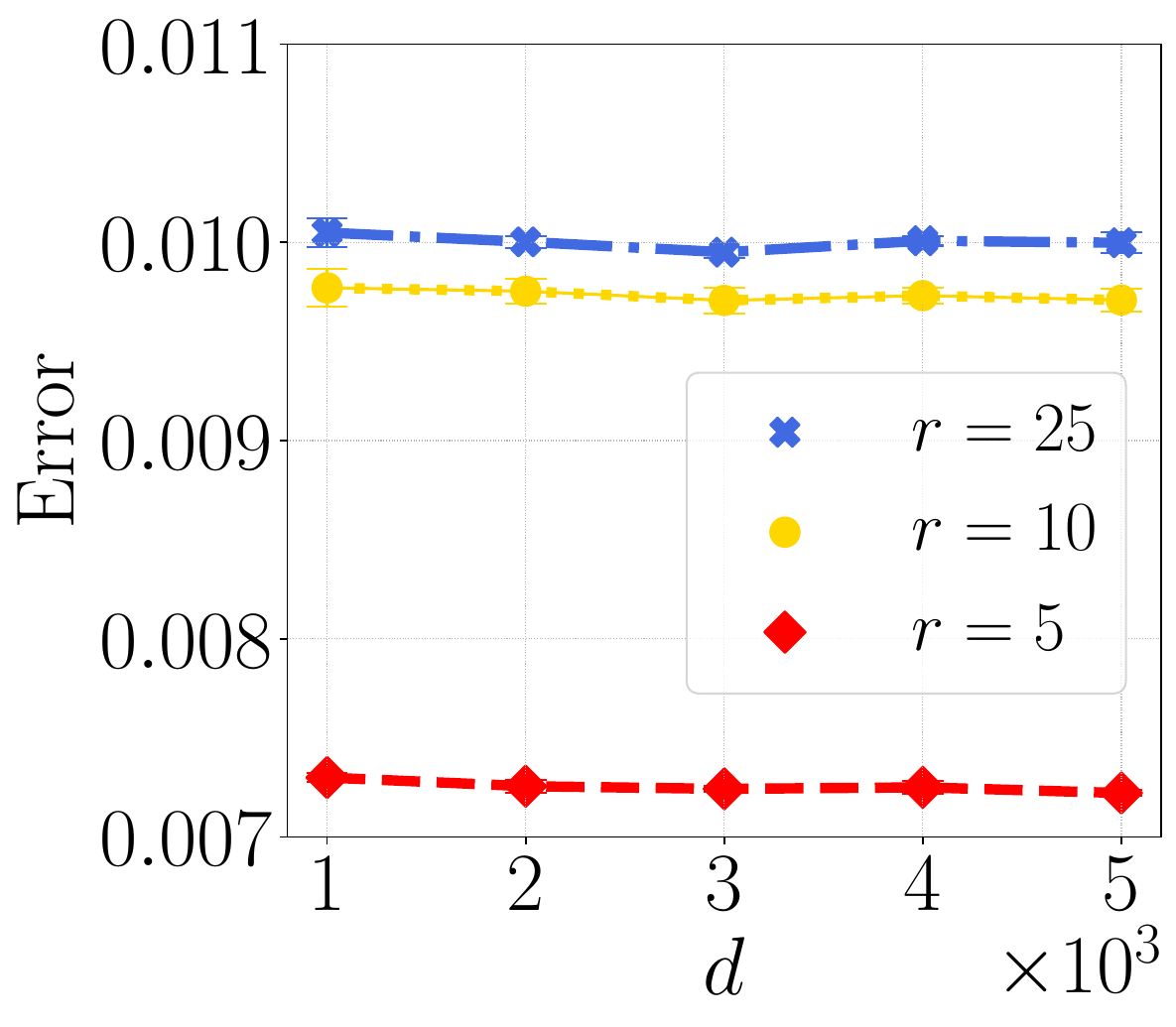}
        \caption{Varying $d$ for different $r$}\label{fig_sample_complexity_fix_p}
    \end{subfigure}%
    \begin{subfigure}[b]{0.325\textwidth}
        \centering
        \includegraphics[width=0.85\textwidth]{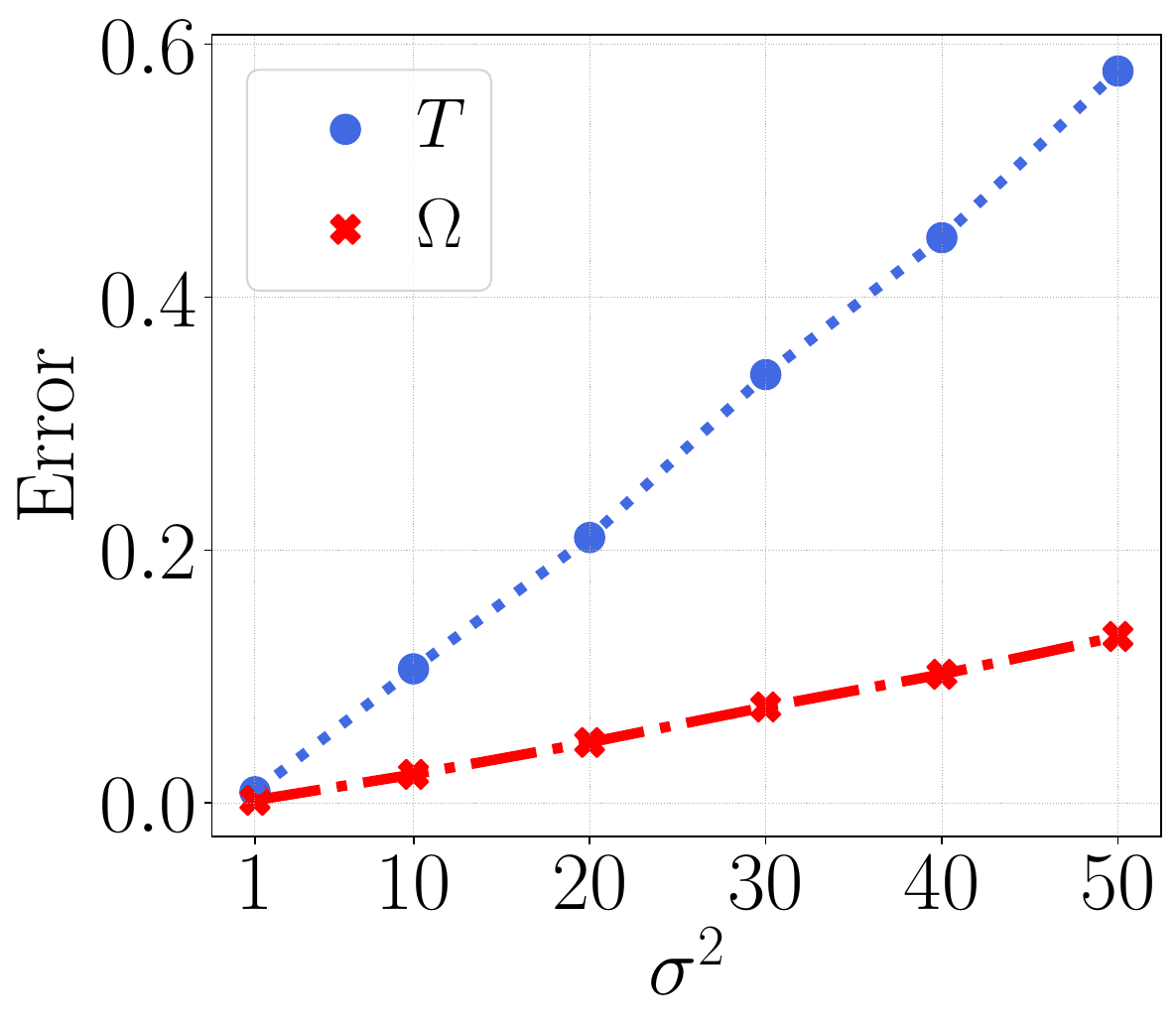}
        \caption{Varying noise variance $\sigma$}\label{fig_sample_complexity_sigma}
    \end{subfigure}%
    \caption{Illustration of the recovery error of Algorithm \ref{alg_ipw}.
    Figure \ref{fig_sample_complexity_fix_rank}: We vary the sampling probability and find that the error roughly stays constant after we also fix the number of samples.
    Figure \ref{fig_sample_complexity_fix_p}: We see similar results as we fix the number of samples but vary the rank of the underlying matrix.
    Figure \ref{fig_sample_complexity_sigma}: We gradually increase the noise level and find that the estimation errors also increase.
    The reported results are aggregated based on five independent runs.}\label{fig_sample_complexity}
\end{figure}

Similar results on synthetic data are further illustrated in Figure \ref{fig:main} (See Appendix \ref{app_omitted_experiments}).
We see a consistent trend in which our approach reduces the recovery error across different settings of $C$ (expected number of entries at each row) and $n$, compared to all three baseline methods, including alternating-GD, softImpute-ALS, and nuclear norm regularization.
In the setting with the fewest observed entries ($C=2$ and $n=10^4$), \algoname~achieves the lowest error of $0.10$, representing a $70\%$ reduction compared to the closest baseline, nuclear norm regularization, which yields an error of $0.48$.
In the setting with $C=10$ observed entries per row, \algoname~achieves an error of $0.06$, outperforming alternative-GD and nuclear norm regularization, whose error rates are $0.09$ and $0.17$, respectively.

\subsection{Recovery Results for Real-World Matrix Data}\label{sec_abl}

Next, we report the results on three real-world datasets in Table~\ref{tab_results}.
We find that our approach consistently achieves the lowest error for recovering $T$ and $M$, compared to two baseline methods.
For one-sided matrix completion, \algoname{} improves upon alternating-GD by ${42\%}$ and softImpute-ALS by $59\%$ on average.
For recovering the entire matrix, our approach reduces recovery error by ${21\%}$ relative to alternating-GD and by $38\%$ relative to softImpute-ALS, averaged over the three datasets.
In particular, on the Amazon reviews dataset, \algoname{} reduces estimation of $T$ by $\mathbf{59\%}$ and recovery of $M$ by $\mathbf{21\%}$ relative to the best performing baseline.

An important design in \algoname{} is the use of an incoherence regularization penalty of $R(\cdot)$ (see Section \ref{sec_sample_comp}).
We now describe the results from varying regularization parameter $\lambda$ and the threshold $\alpha$ (used in the truncation of $R(\cdot)$), as part of $R(\cdot)$, on a synthetic dataset with $p = 10 / d$ and also the Amazon Review dataset.
We vary $\lambda$ between $10^{-4}$, $10^{-3}, 10^{-2}$, and we vary $\alpha$ between $10^{-5}$, $10^{-4}$, $10^{-3}$, $10^{-2}, 10^{-1}$.
We find that on synthetic datasets, the lowest recovery error is achieved when $\lambda=10^{-4}$ and $\alpha=10^{-3}$, leading to an error of $ 2.8\times 10^ {-3}$, compared to the error rate of $3.6\times 10^ {-3}$ without using the regularization.
In other words, we can reduce the recovery error by $22\%$.

On the Amazon Reviews dataset, the lowest recovery error is achieved when $\lambda=10^{-2}$ and $\alpha=10^{-1}$, at $1.3\times10^{-2}$, compared to $2.7\times10^{-2}$ without using the incoherence regularizer.
As a result, incorporating the incoherence regularization penalty of $R(\cdot)$ reduces the recovery error by $52\%$.

\begin{table}[t!]
    \centering
    \caption{Results from applying our approach on three real-world datasets with up to $32$ million entries, as compared with two baseline methods, namely alternating-GD and softImpute-ALS. We report both the recovery errors and the running time (measured in the number of seconds when evaluated on an Ubuntu server). We run each experiment with five random seeds to calculate the mean and the standard deviation.}\label{tab_results}
    \resizebox{\columnwidth}{!}
    {\begin{tabular}{@{} l | ccc | ccc @{}}
    \toprule
    Dataset & MovieLens-32M & Amazon Reviews & Genomes & MovieLens-32M & Amazon Reviews & Genomes \\
    \midrule
    $T$ & \multicolumn{3}{c|}{One-sided recovery error} &  \multicolumn{3}{c}{Running time} \\
    \midrule
    AlternatingGD & $9.1_{\pm0.1}\times10^{-3}$ & $3.3_{\pm1.3}\times10^{-3}$ & $7.0_{\pm 1.6}\times10^{-5}$ & $7.2_{\pm0.1}\times10^2$ & $4.8_{\pm 0.1}\times10^2$ & $23.2_{\pm 5.0}$ \\
    SoftImputeALS & $9.1_{\pm0.1}\times10^{-3}$ & $3.2_{\pm0.3}\times10^{-3}$ & $1.9_{\pm 0.5}\times10^{-4}$ & $6.3_{\pm4}\times10^4$ & $5.1_{\pm0.3}\times10^3$ & $959.8_{\pm 40}$ \\
    {Algorithm \ref{alg_ipw}} & $\mathbf{4.7}_{\pm 0.1}\times10^{-3}$  &$\mathbf{1.3}_{\pm0.2}\times10^{-3}$ & $\mathbf{5.8}_{\pm 0.1}\times10^{-5}$ & $\mathbf{1.5}_{\pm0.1}\times10^2$ & $\mathbf{2.8}_{\pm 0.2}\times10^2$ & $\mathbf{1.3}_{\pm0.1}$ \\
    \midrule
    $M$ & \multicolumn{3}{c|}{Root mean squared recovery error} &  \multicolumn{3}{c}{Running time} \\
    \midrule
    AlternatingGD & $1.7_{\pm 0.1}$ & $2.6_{\pm 0.1}$ & $0.2_{\pm 0.1}$ & $5.7_{\pm 0.1}\times10^2$ & $2.2_{\pm 0.1}\times10^2$ & $23.2_{\pm 5.0}$ \\
    SoftImputeALS & $1.4_{\pm 0.1}$ & $3.3_{\pm 0.1}$ & $0.4_{\pm 0.1}$ & $6.3_{\pm 4}\times10^4$  & $4.8_{\pm 0.3}\times10^3$ & $959.9_{\pm 40}$ \\
    {Algorithm \ref{alg_user_level}} & $\mathbf{1.1_{\pm0.1}}$ & $\mathbf{1.9}_{\pm 0.1}$ & $\mathbf{0.2}_{\pm 0.1}$ & $\mathbf{5.6}_{\pm 0.1}\times10^2$ & $\mathbf{2.8}_{\pm 0.2}\times10^2$ & $\mathbf{14.7}_{\pm 0.2}$ \\
    \bottomrule
    \end{tabular}}
\end{table}

\paragraph{Runtime scaling as the dimensions grow.} \hl{Below, we report experiment results to illustrate the runtime scaling of \algoname{} as a function of both $d$ (number of columns) and $\abs{\Omega}$ (number of nonzero entries in $\wh T$).
In particular, we consider both synthetic data and real-world data.
In Figure \ref{fig_runtime_syn}, we illustrate the running time (measured in seconds) as a function of the dimension $d$. We also vary $C$ to ensure the robustness of our findings.
In Figure \ref{fig_runtime_real}, we illustrate the running time from \algoname{} on the MovieLens datasets with various sampling probabilities $p$.
We find that the runtime of \algoname{} scales near linearly with the number of observed entries $\abs{\Omega}$.
It is an interesting question to precisely analyze the runtime scaling of gradient descent in low-rank matrix completion.
See, e.g., several recent works for references \citep{li2020learning,ju2022robust,ju2023generalization,ma2024convergence,zhangnoise}.}

\section{Related Work}\label{sec_related}

The problem of recovering a low-rank matrix from a few potentially noisy entries has a rich history of studies in the literature, spanning a wide variety of areas, including collaborative filtering, machine learning, and compressed sensing \citep{candes2010matrix}.
One of the earliest results from the matrix completion literature involves a maximum-margin matrix factorization approach \citep{srebro2004maximum}, which minimizes the reconstruction error plus a trace norm penalty on the low-rank factors.
Under an incoherent condition on the low-rank factors plus a joint incoherence condition, \cite{candes2012exact} show that exact recovery of the unknown matrix is possible by minimizing the nuclear norm of the reconstructed matrix, subject to equality constraints on the observed entries.
The joint incoherence condition on the factors is further shown to be unnecessary \citep{chen2015incoherence}.
\cite{ding2020leave} sharpen the known bounds through a leave-one-out analysis, which leads to a convergence guarantee for projected gradient descent on a rank-constrained formulation.
Complementary to these results, \cite{zhang2019relative} show error bounds on the nuclear norm regularized objective, relative to the error of the best rank-$r$ approximation of $M$.
Further references about different algorithmic approaches to matrix completion can be found in the survey by \cite{sun2015matrix} and also a monograph by \cite{moitra2018algorithmic}.

\begin{figure}[t]%
    \begin{subfigure}[b]{0.48\textwidth}
        \centering
        \includegraphics[width=0.75\textwidth]{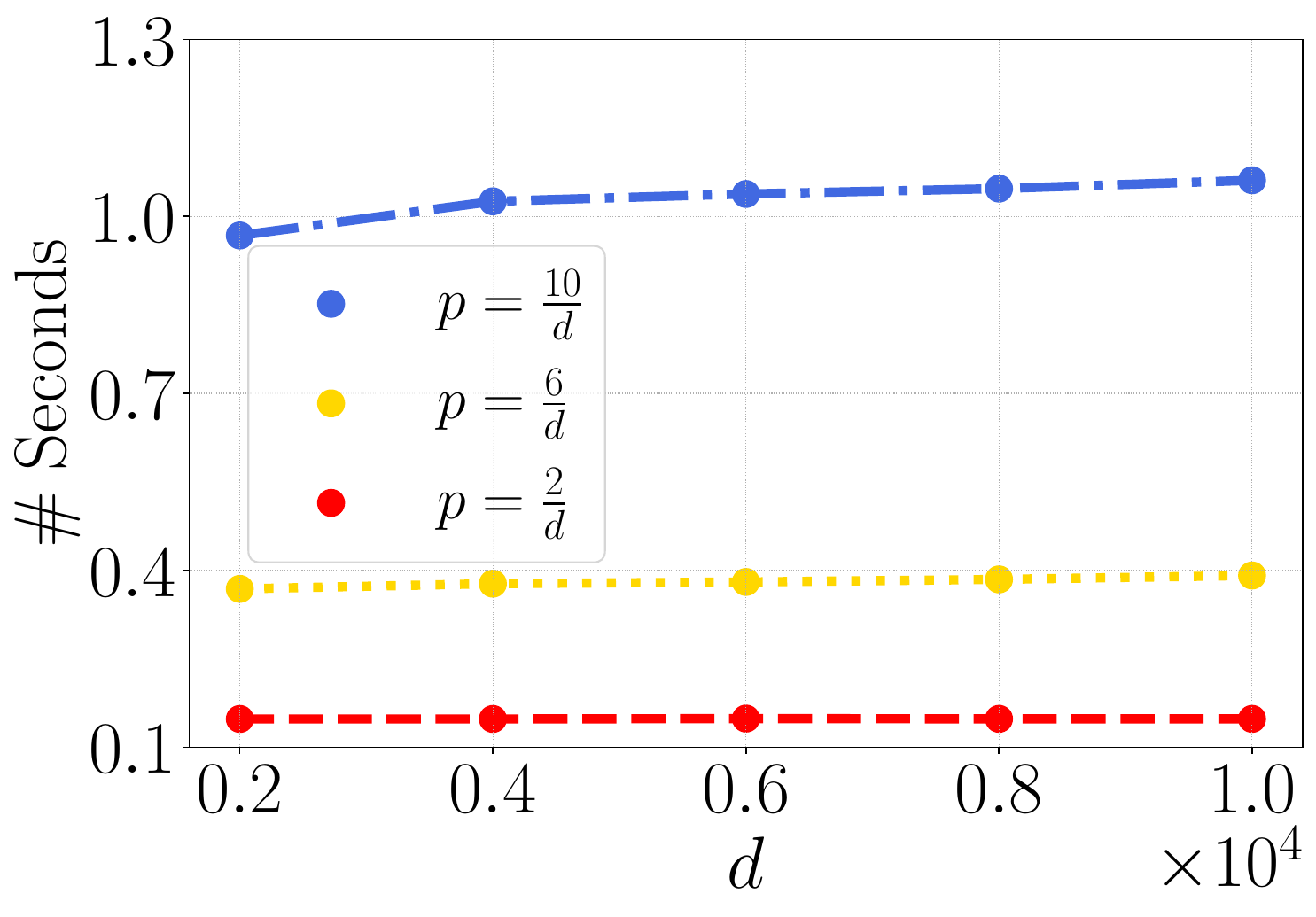}
        \caption{Runtime on synthetic data by varying $d$}\label{fig_runtime_syn}
    \end{subfigure}%
    \hfill
    \begin{subfigure}[b]{0.48\textwidth}
        \centering
        \includegraphics[width=0.75\textwidth]{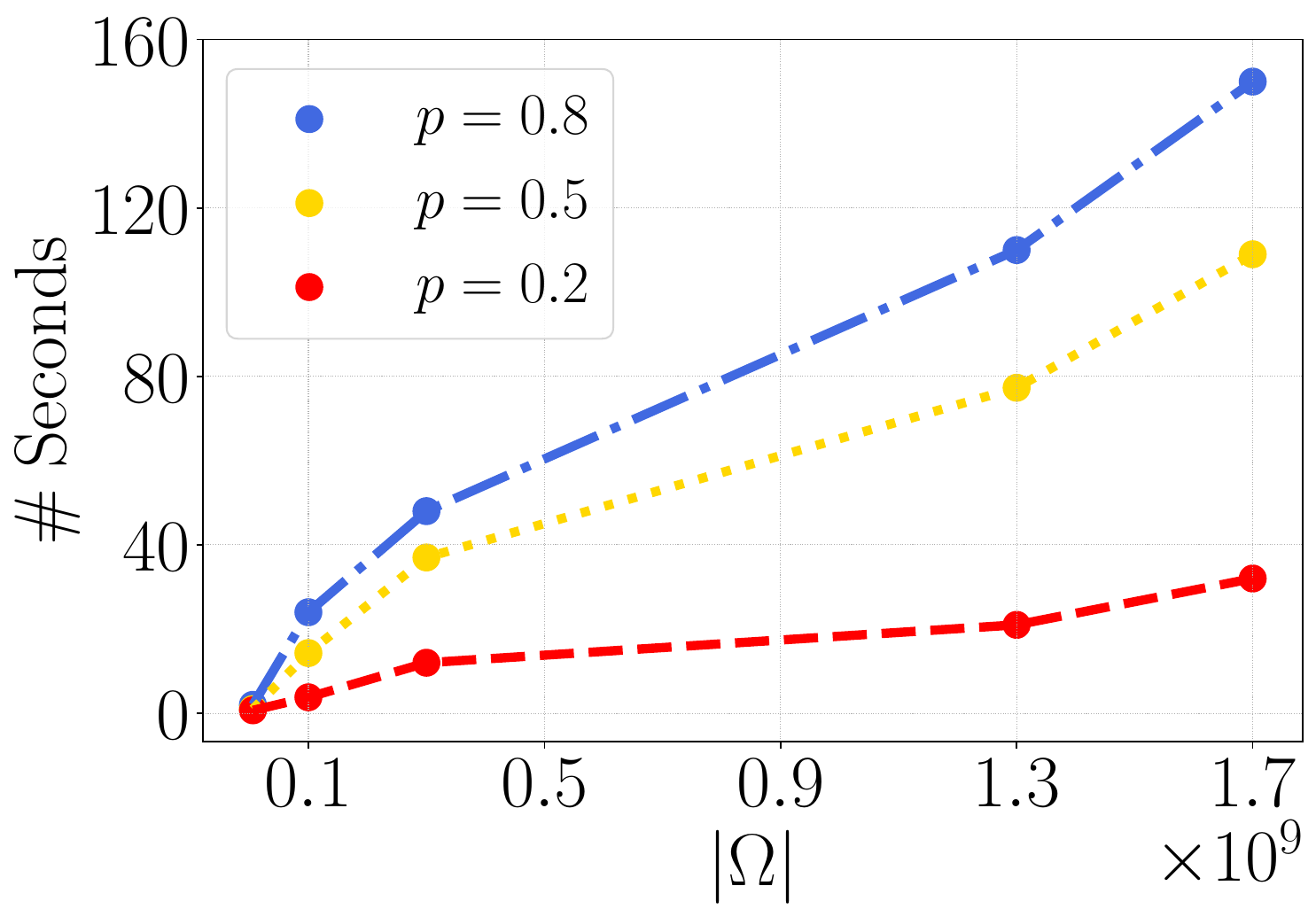}
        \caption{Runtime on real-world data by varying $\abs{\Omega}$}\label{fig_runtime_real}
    \end{subfigure}%
    \caption{\hl{Illustration of the runtime scaling of \algoname{} as $d$ and $m$  increase.
    Figure \ref{fig_runtime_syn}: We vary the dimension $d$ and find that the runtime of \algoname{} increases roughly linearly in $d$ as $d$ increases.
    To ensure the robustness of our findings, we report the results for three values of $C$ (recall that the sampling probability $p = C / d$).
    Figure \ref{fig_runtime_real}: We vary the number of entries $m$ between three MovieLens datasets and find that the runtime of \algoname{} increases as $m$ increases.
    The reported results are based on five independent runs.}}\label{fig_runtime}
\end{figure}

Recent literature has studied the optimization geometry of the loss landscape of matrix completion using rank-constrained first-order optimization methods \citep{sun2016guaranteed,ge2016matrix}.
Provided with a regularization on the incoherence (balance between the rows of low-rank factors), it is possible to prove that all local minimum solutions of the matrix completion reconstruction error objective are also global minimum \citep{ge2017no}.
By carefully analyzing the local geometry of the matrix factorization objective, it is possible to achieve exact recovery under the incoherence condition \citep{sun2016guaranteed}.
Provided with this characterization, one can obtain convergence rates through the literature on finding local minima in nonconvex optimization problems \citep{nesterov2006cubic,wang2019stochastic}.
In particular, it is known that many optimization algorithms, including cubic regularization, trust-region methods, and stochastic gradient descent, can efficiently find a local minimizer.
See a recent textbook on first-order and stochastic optimization methods for further references \citep{lan2020first}.

There is a closely related problem of low-rank matrix recovery, whose aim is to reconstruct a low-rank matrix based on a linear system of measurement equations of the unknown matrix \citep{recht2010guaranteed}.
The optimization landscape of low-rank matrix recovery from random linear measurements in the presence of arbitrary outliers is studied by \cite{li2020non}.
Besides, there are studies on the dynamics of gradient descent assuming the factorization is over-parameterized \citep{li2018algorithmic,ma2024convergence}.
Additionally, matrix factorization has connections to two-layer neural networks and random matrices, which have inspired studies on the dynamics of deep linear networks \citep{li2018algorithmic}.
Another related problem is nonnegative matrix factorization \citep{arora2012computing}, which has also been studied for ``tall-and-skinny'' matrices \citep{benson2014scalable}.
There is also a line of work on the recovery of a low-rank plus sparse matrix \citep{hsu2011robust}.
Another plausible direction in the ultra-sparse sampling setting is to instead recover a low-rank approximation of the underlying matrix from just $O(n)$ samples. See \cite{gamarnik2017matrix} for further references of this setting.

The one-sided matrix completion problem has been recently formulated and studied in the special case where each row contains only two randomly observed entries \citep{cao2023one}.
This work complements their paper in four aspects.
First, their sample complexity bound focuses on the setting where every row has two observed entries, while our result applies to more general settings that allow for any constant $C$.
Second, their result builds on proof techniques for matrix completion with nuclear norm regularization penalty. By contrast, our result applies to gradient-based optimization algorithms, which are faster and easier to implement in practice.
Third, our result builds on the incoherence assumption from the MC literature.
Finally, we explore the use of one-sided matrix completion for full matrix completion and find that this can also lead to improved results for imputing missing entries of the full matrix.

First-order stochastic gradient methods also have applications in large-scale matrix completion \citep{mackey2011divide}.
In particular, one can apply stochastic gradient updates in an asynchronous protocol, making it suitable for distributed platforms \citep{recht2013parallel}.
Besides the nuclear norm minimization approach, alternating minimization such as alternating least squares and low-rank matrix factorization are also widely used in practice \citep{hastie2015matrix}.
More recently, \citet{wang2023differentially} consider a low-rank matrix factorization approach to private matrix completion. Assuming the $M$ matrix is indeed of low rank, their work shows a sample complexity bound that scales linearly in dimension.
The difference between our work and this work is that we focus on an ultra-sparse sampling regime where there are only $O(n)$ randomly sampled entries, rendering the accurate recovery of the entire matrix information theoretically impossible.
In light of our new results, it may also be worth resisting the dynamics of zeroth-order and first-order methods for nonconvex optimization \citep{ghadimi2013stochastic,sun2020optimization} when high rates of noise are added during each iteration.

The idea of using the estimated probability to reweight a population statistic is known as the H\'ajek estimator \citep{hajek1971comment}.
It has often been used to tackle sparse and non-random sampling data \citep{sarndal2003model}.
This estimator is particularly effective for tackling sparse observational data such as in experimental design \citep{xiong2024data}.
It is known from the causal inference literature \citep{hirano2003efficient} that the H\'ajek estimator incurs lower variance than the Horvitz-Thompson estimator, which weights each observation inversely with the true probability.
To our knowledge, the analysis of this estimator for one-sided matrix completion appears to be new for the matrix completion literature.
Unlike conventional results \citep{hajek1971comment} where the H\'ajek estimator is shown to be asymptotically unbiased, for one-sided matrix completion, we find that it is unbiased even in the finite sample regime.
Recent work \citep{bai2021matrix} develops the estimation of counterfactual when potential outcomes follow a factor model of block-missing panel data.
\citet{xiong2023large} and \citet{duan2023target} develop the inferential theory for latent factor models in large-dimensional panel data with general non-random missing patterns by proposing a PCA-based estimator on an adjusted covariance matrix.

\section{Conclusion}\label{sec_conclude}

In this paper, we study the problem of matrix completion in an ultra-sparse sampling regime, where in each row, only a constant number of entries are observed in each row.
While recovering the entire matrix is not possible, we focus on the one-sided matrix completion problem.
We apply the H\'ajek estimator from the econometrics literature to this problem, and present a thorough analysis of this estimator.
We demonstrate that this estimator is unbiased and achieves a lower variance compared to the Horvitz-Thompson estimator.
Then, we use gradient descent to impute the missing entries of the second-moment matrix and analyze its sample complexity under a low-rank mixture model.
The sample complexity bound is optimal in its dependence on the dimension $d$.
Our result applies to the case of observing $C$ entries at each row, resolving an open question from prior work.

Extensive experiments on both synthetic and real-world datasets demonstrate that our approach is more efficient for sparse matrix datasets compared to several commonly used matrix completion methods.
Ablation studies validate the use of a low-rank matrix incoherence regularizer in the algorithm.

Our paper opens up several promising avenues for future work.
First, it would be valuable to theoretically analyze the sensitivity of the H\'ajek estimator to noise injection, which could inform the design of privacy-preserving matrix completion algorithms in the ultra-sparse sampling regime.
Second, our current sample-complexity bound exhibits a high dependence on the rank of the latent factor model; improving this rank dependence remains another interesting open question.
It would also be interesting to understand to what extent linear sample complexity of $n$ relative to $d$ can be achieved under ultra-sparse sampling, by further relaxing the assumptions required in our analysis.
Finally, extending our framework to other forms of panel data such as low-rank tensors or low-rank mixture models, or other matrix factorization frameworks such as nonnegative matrix factorization is another promising avenue left for future~work.

\section*{Acknowledgment}

Thanks to Steven Cao, Daniel Hsu, Oliver Hinder, and Jason Lee for discussions during various stages of this work.
We appreciate the editors and the anonymous referees for their constructive comments and suggestions.
The work of H. R. Zhang and  Z. Zhang is supported in part by NSF award IIS-2412008.
The views and opinions expressed in this paper are solely those of the authors and do not necessarily reflect those of the National Science Foundation.

\bibliography{ref}

\begin{thebibliography}{58}
\providecommand{\natexlab}[1]{#1}
\providecommand{\url}[1]{\texttt{#1}}
\expandafter\ifx\csname urlstyle\endcsname\relax
  \providecommand{\doi}[1]{doi: #1}\else
  \providecommand{\doi}{doi: \begingroup \urlstyle{rm}\Url}\fi

\bibitem[Arora et~al.(2012)Arora, Ge, Kannan, and Moitra]{arora2012computing}
Sanjeev Arora, Rong Ge, Ravindran Kannan, and Ankur Moitra.
\newblock Computing a nonnegative matrix factorization--provably.
\newblock In \emph{Proceedings of the forty-fourth annual ACM symposium on
  Theory of computing}, pp.\  145--162, 2012.

\bibitem[Athey et~al.(2021)Athey, Bayati, Doudchenko, Imbens, and
  Khosravi]{athey2021matrix}
Susan Athey, Mohsen Bayati, Nikolay Doudchenko, Guido Imbens, and Khashayar
  Khosravi.
\newblock Matrix completion methods for causal panel data models.
\newblock \emph{Journal of the American Statistical Association}, 116\penalty0
  (536):\penalty0 1716--1730, 2021.

\bibitem[Bai \& Ng(2021)Bai and Ng]{bai2021matrix}
Jushan Bai and Serena Ng.
\newblock Matrix completion, counterfactuals, and factor analysis of missing
  data.
\newblock \emph{Journal of the American Statistical Association}, 116\penalty0
  (536):\penalty0 1746--1763, 2021.

\bibitem[Benson et~al.(2014)Benson, Lee, Rajwa, and Gleich]{benson2014scalable}
Austin~R Benson, Jason~D Lee, Bartek Rajwa, and David~F Gleich.
\newblock Scalable methods for nonnegative matrix factorizations of
  near-separable tall-and-skinny matrices.
\newblock \emph{Advances in Neural Information Processing Systems}, 2014.

\bibitem[Boumal \& Absil(2011)Boumal and Absil]{boumal2011rtrmc}
Nicolas Boumal and Pierre-antoine Absil.
\newblock Rtrmc: A riemannian trust-region method for low-rank matrix
  completion.
\newblock \emph{Advances in neural information processing systems}, 24, 2011.

\bibitem[Brown \& Low(1996)Brown and Low]{brown1996asymptotic}
Lawrence~D Brown and Mark~G Low.
\newblock Asymptotic equivalence of nonparametric regression and white noise.
\newblock \emph{The Annals of Statistics}, 24\penalty0 (6):\penalty0
  2384--2398, 1996.

\bibitem[Candes \& Recht(2012)Candes and Recht]{candes2012exact}
Emmanuel Candes and Benjamin Recht.
\newblock Exact matrix completion via convex optimization.
\newblock \emph{Communications of the ACM}, 55\penalty0 (6):\penalty0 111--119,
  2012.

\bibitem[Candes \& Plan(2010)Candes and Plan]{candes2010matrix}
Emmanuel~J Candes and Yaniv Plan.
\newblock Matrix completion with noise.
\newblock \emph{Proceedings of the IEEE}, 98\penalty0 (6):\penalty0 925--936,
  2010.

\bibitem[Cao et~al.(2023)Cao, Liang, and Valiant]{cao2023one}
Steven Cao, Percy Liang, and Gregory Valiant.
\newblock One-sided matrix completion from two observations per row.
\newblock \emph{International Conference on Machine Learning}, 2023.

\bibitem[Chen et~al.(2020)Chen, Valiant, and Valiant]{chen2020worst}
Justin Chen, Gregory Valiant, and Paul Valiant.
\newblock Worst-case analysis for randomly collected data.
\newblock \emph{Advances in Neural Information Processing Systems},
  33:\penalty0 18183--18193, 2020.

\bibitem[Chen et~al.(2021)Chen, Ma, Poor, and Chen]{chen2021learning}
Yanxi Chen, Cong Ma, H~Vincent Poor, and Yuxin Chen.
\newblock Learning mixtures of low-rank models.
\newblock \emph{IEEE Transactions on Information Theory}, 67\penalty0
  (7):\penalty0 4613--4636, 2021.

\bibitem[Chen(2015)]{chen2015incoherence}
Yudong Chen.
\newblock Incoherence-optimal matrix completion.
\newblock \emph{IEEE Transactions on Information Theory}, 61\penalty0
  (5):\penalty0 2909--2923, 2015.

\bibitem[Chi et~al.(2019)Chi, Lu, and Chen]{chi2019nonconvex}
Yuejie Chi, Yue~M Lu, and Yuxin Chen.
\newblock Nonconvex optimization meets low-rank matrix factorization: An
  overview.
\newblock \emph{IEEE Transactions on Signal Processing}, 67\penalty0
  (20):\penalty0 5239--5269, 2019.

\bibitem[Ding \& Chen(2020)Ding and Chen]{ding2020leave}
Lijun Ding and Yudong Chen.
\newblock Leave-one-out approach for matrix completion: Primal and dual
  analysis.
\newblock \emph{IEEE Transactions on Information Theory}, 66\penalty0
  (11):\penalty0 7274--7301, 2020.

\bibitem[Dobriban \& Sheng(2020)Dobriban and Sheng]{dobriban2020wonder}
Edgar Dobriban and Yue Sheng.
\newblock Wonder: Weighted one-shot distributed ridge regression in high
  dimensions.
\newblock \emph{Journal of Machine Learning Research}, 21\penalty0
  (66):\penalty0 1--52, 2020.

\bibitem[Dong et~al.(2025)Dong, Zhang, Yao, and Sun]{dong2025towards}
Zhaorui Dong, Yushun Zhang, Jianfeng Yao, and Ruoyu Sun.
\newblock Towards quantifying the hessian structure of neural networks.
\newblock \emph{arXiv preprint arXiv:2505.02809}, 2025.

\bibitem[Duan et~al.(2023)Duan, Pelger, and Xiong]{duan2023target}
Junting Duan, Markus Pelger, and Ruoxuan Xiong.
\newblock Target pca: Transfer learning large dimensional panel data.
\newblock \emph{Journal of Econometrics}, pp.\  105521, 2023.

\bibitem[Dwork \& Roth(2014)Dwork and Roth]{dwork2014algorithmic}
Cynthia Dwork and Aaron Roth.
\newblock The algorithmic foundations of differential privacy.
\newblock \emph{Foundations and Trends{\textregistered} in Theoretical Computer
  Science}, 9\penalty0 (3--4):\penalty0 211--407, 2014.

\bibitem[Fan et~al.(2013)Fan, Liao, and Mincheva]{fan2013large}
Jianqing Fan, Yuan Liao, and Martina Mincheva.
\newblock Large covariance estimation by thresholding principal orthogonal
  complements.
\newblock \emph{Journal of the Royal Statistical Society Series B: Statistical
  Methodology}, 75\penalty0 (4):\penalty0 603--680, 2013.

\bibitem[Gamarnik et~al.(2017)Gamarnik, Li, and Zhang]{gamarnik2017matrix}
David Gamarnik, Quan Li, and Hongyi Zhang.
\newblock Matrix completion from ${O}(n)$ samples in linear time.
\newblock In \emph{Conference on Learning Theory}, 2017.

\bibitem[Ge et~al.(2016)Ge, Lee, and Ma]{ge2016matrix}
Rong Ge, Jason~D Lee, and Tengyu Ma.
\newblock Matrix completion has no spurious local minimum.
\newblock \emph{Advances in Neural Information Processing Systems}, 2016.

\bibitem[Ge et~al.(2017)Ge, Jin, and Zheng]{ge2017no}
Rong Ge, Chi Jin, and Yi~Zheng.
\newblock No spurious local minima in nonconvex low rank problems: A unified
  geometric analysis.
\newblock In \emph{International Conference on Machine Learning}, 2017.

\bibitem[Ghadimi \& Lan(2013)Ghadimi and Lan]{ghadimi2013stochastic}
Saeed Ghadimi and Guanghui Lan.
\newblock Stochastic first-and zeroth-order methods for nonconvex stochastic
  programming.
\newblock \emph{SIAM Journal on Optimization}, 23\penalty0 (4):\penalty0
  2341--2368, 2013.

\bibitem[H{\'a}jek(1971)]{hajek1971comment}
Jaroslav H{\'a}jek.
\newblock Comment on “an essay on the logical foundations of survey sampling,
  part one”.
\newblock \emph{The foundations of survey sampling}, 236, 1971.

\bibitem[Hastie et~al.(2015)Hastie, Mazumder, Lee, and Zadeh]{hastie2015matrix}
Trevor Hastie, Rahul Mazumder, Jason~D Lee, and Reza Zadeh.
\newblock Matrix completion and low-rank svd via fast alternating least
  squares.
\newblock \emph{The Journal of Machine Learning Research}, 16\penalty0
  (1):\penalty0 3367--3402, 2015.

\bibitem[Hirano et~al.(2003)Hirano, Imbens, and Ridder]{hirano2003efficient}
Keisuke Hirano, Guido~W Imbens, and Geert Ridder.
\newblock Efficient estimation of average treatment effects using the estimated
  propensity score.
\newblock \emph{Econometrica}, 71\penalty0 (4):\penalty0 1161--1189, 2003.

\bibitem[Horvitz \& Thompson(1952)Horvitz and
  Thompson]{horvitz1952generalization}
Daniel~G Horvitz and Donovan~J Thompson.
\newblock A generalization of sampling without replacement from a finite
  universe.
\newblock \emph{Journal of the American statistical Association}, 47\penalty0
  (260):\penalty0 663--685, 1952.

\bibitem[Hou et~al.(2024)Hou, Li, He, Yan, Chen, and McAuley]{hou2024bridging}
Yupeng Hou, Jiacheng Li, Zhankui He, An~Yan, Xiusi Chen, and Julian McAuley.
\newblock Bridging language and items for retrieval and recommendation.
\newblock \emph{arXiv preprint arXiv:2403.03952}, 2024.

\bibitem[Hsu et~al.(2011)Hsu, Kakade, and Zhang]{hsu2011robust}
Daniel Hsu, Sham~M Kakade, and Tong Zhang.
\newblock Robust matrix decomposition with sparse corruptions.
\newblock \emph{IEEE Transactions on Information Theory}, 57\penalty0
  (11):\penalty0 7221--7234, 2011.

\bibitem[Ju et~al.(2022)Ju, Li, and Zhang]{ju2022robust}
Haotian Ju, Dongyue Li, and Hongyang~R. Zhang.
\newblock Robust fine-tuning of deep neural networks with hessian-based
  generalization guarantees.
\newblock In \emph{International conference on machine learning}, pp.\
  10431--10461. PMLR, 2022.

\bibitem[Ju et~al.(2023)Ju, Li, Sharma, and Zhang]{ju2023generalization}
Haotian Ju, Dongyue Li, Aneesh Sharma, and Hongyang~R. Zhang.
\newblock Generalization in graph neural networks: Improved pac-bayesian bounds
  on graph diffusion.
\newblock In \emph{International conference on artificial intelligence and
  statistics}, pp.\  6314--6341. PMLR, 2023.

\bibitem[Kolar et~al.(2011)Kolar, Lafferty, and Wasserman]{kolar2011union}
Mladen Kolar, John Lafferty, and Larry Wasserman.
\newblock Union support recovery in multi-task learning.
\newblock \emph{Journal of Machine Learning Research}, 12\penalty0 (7), 2011.

\bibitem[Lan(2020)]{lan2020first}
Guanghui Lan.
\newblock \emph{First-order and stochastic optimization methods for machine
  learning}, volume~1.
\newblock Springer, 2020.

\bibitem[Li et~al.(2020{\natexlab{a}})Li, Chi, Zhang, and Liang]{li2020non}
Yuanxin Li, Yuejie Chi, Huishuai Zhang, and Yingbin Liang.
\newblock Non-convex low-rank matrix recovery with arbitrary outliers via
  median-truncated gradient descent.
\newblock \emph{Information and Inference: A Journal of the IMA}, 9\penalty0
  (2):\penalty0 289--325, 2020{\natexlab{a}}.

\bibitem[Li et~al.(2018)Li, Ma, and Zhang]{li2018algorithmic}
Yuanzhi Li, Tengyu Ma, and Hongyang~R. Zhang.
\newblock Algorithmic regularization in over-parameterized matrix sensing and
  neural networks with quadratic activations.
\newblock In \emph{Conference On Learning Theory}, 2018.

\bibitem[Li et~al.(2020{\natexlab{b}})Li, Ma, and Zhang]{li2020learning}
Yuanzhi Li, Tengyu Ma, and Hongyang~R. Zhang.
\newblock Learning over-parametrized two-layer neural networks beyond ntk.
\newblock In \emph{Conference on learning theory}, pp.\  2613--2682. PMLR,
  2020{\natexlab{b}}.

\bibitem[Liu et~al.(2015)Liu, Wang, and Smola]{liu2015fast}
Ziqi Liu, Yu-Xiang Wang, and Alexander Smola.
\newblock Fast differentially private matrix factorization.
\newblock In \emph{Proceedings of the 9th ACM Conference on Recommender
  Systems}, pp.\  171--178, 2015.

\bibitem[Ma \& Fattahi(2024)Ma and Fattahi]{ma2024convergence}
Jianhao Ma and Salar Fattahi.
\newblock Convergence of gradient descent with small initialization for
  unregularized matrix completion.
\newblock In \emph{Conference on Learning Theory}, 2024.

\bibitem[Mackey et~al.(2011)Mackey, Jordan, and Talwalkar]{mackey2011divide}
Lester Mackey, Michael Jordan, and Ameet Talwalkar.
\newblock Divide-and-conquer matrix factorization.
\newblock \emph{Advances in Neural Information Processing Systems}, 2011.

\bibitem[Moitra(2018)]{moitra2018algorithmic}
Ankur Moitra.
\newblock \emph{Algorithmic aspects of machine learning}.
\newblock Cambridge University Press, 2018.

\bibitem[Nesterov \& Polyak(2006)Nesterov and Polyak]{nesterov2006cubic}
Yurii Nesterov and Boris~T Polyak.
\newblock Cubic regularization of newton method and its global performance.
\newblock \emph{Mathematical Programming}, 108\penalty0 (1):\penalty0 177--205,
  2006.

\bibitem[Recht(2011)]{recht2011simpler}
Benjamin Recht.
\newblock A simpler approach to matrix completion.
\newblock \emph{Journal of Machine Learning Research}, 12\penalty0 (12), 2011.

\bibitem[Recht \& R{\'e}(2013)Recht and R{\'e}]{recht2013parallel}
Benjamin Recht and Christopher R{\'e}.
\newblock Parallel stochastic gradient algorithms for large-scale matrix
  completion.
\newblock \emph{Mathematical Programming Computation}, 5\penalty0 (2):\penalty0
  201--226, 2013.

\bibitem[Recht et~al.(2010)Recht, Fazel, and Parrilo]{recht2010guaranteed}
Benjamin Recht, Maryam Fazel, and Pablo~A Parrilo.
\newblock Guaranteed minimum-rank solutions of linear matrix equations via
  nuclear norm minimization.
\newblock \emph{SIAM review}, 52\penalty0 (3):\penalty0 471--501, 2010.

\bibitem[S{\"a}rndal et~al.(2003)S{\"a}rndal, Swensson, and
  Wretman]{sarndal2003model}
Carl-Erik S{\"a}rndal, Bengt Swensson, and Jan Wretman.
\newblock \emph{Model assisted survey sampling}.
\newblock Springer Science \& Business Media, 2003.

\bibitem[Srebro et~al.(2004)Srebro, Rennie, and Jaakkola]{srebro2004maximum}
Nathan Srebro, Jason Rennie, and Tommi Jaakkola.
\newblock Maximum-margin matrix factorization.
\newblock \emph{Advances in Neural Information Processing Systems}, 2004.

\bibitem[Sun(2020)]{sun2020optimization}
Ruo-Yu Sun.
\newblock Optimization for deep learning: An overview.
\newblock \emph{Journal of the Operations Research Society of China},
  8\penalty0 (2):\penalty0 249--294, 2020.

\bibitem[Sun(2015)]{sun2015matrix}
Ruoyu Sun.
\newblock \emph{Matrix completion via nonconvex factorization: Algorithms and
  theory}.
\newblock PhD thesis, University of Minnesota, 2015.

\bibitem[Sun \& Luo(2016)Sun and Luo]{sun2016guaranteed}
Ruoyu Sun and Zhi-Quan Luo.
\newblock Guaranteed matrix completion via non-convex factorization.
\newblock \emph{IEEE Transactions on Information Theory}, 62\penalty0
  (11):\penalty0 6535--6579, 2016.

\bibitem[Wang et~al.(2023)Wang, Zhao, and Kolar]{wang2023differentially}
Lingxiao Wang, Boxin Zhao, and Mladen Kolar.
\newblock Differentially private matrix completion through low-rank matrix
  factorization.
\newblock In \emph{International Conference on Artificial Intelligence and
  Statistics}, 2023.

\bibitem[Wang et~al.(2019)Wang, Zhou, Liang, and Lan]{wang2019stochastic}
Zhe Wang, Yi~Zhou, Yingbin Liang, and Guanghui Lan.
\newblock Stochastic variance-reduced cubic regularization for nonconvex
  optimization.
\newblock In \emph{The 22nd International Conference on Artificial Intelligence
  and Statistics}, 2019.

\bibitem[Xiong \& Pelger(2023)Xiong and Pelger]{xiong2023large}
Ruoxuan Xiong and Markus Pelger.
\newblock Large dimensional latent factor modeling with missing observations
  and applications to causal inference.
\newblock \emph{Journal of Econometrics}, 233\penalty0 (1):\penalty0 271--301,
  2023.

\bibitem[Xiong et~al.(2024)Xiong, Chin, and Taylor]{xiong2024data}
Ruoxuan Xiong, Alex Chin, and Sean~J Taylor.
\newblock Data-driven switchback experiments: Theoretical tradeoffs and
  empirical bayes designs.
\newblock \emph{arXiv preprint arXiv:2406.06768}, 2024.

\bibitem[Zhang et~al.(2019{\natexlab{a}})Zhang, Sharan, Charikar, and
  Liang]{zhang2019recovery}
Hongyang~R. Zhang, Vatsal Sharan, Moses Charikar, and Yingyu Liang.
\newblock Recovery guarantees for quadratic tensors with limited observations.
\newblock In \emph{International Conference on Artificial Intelligence and
  Statistics}, 2019{\natexlab{a}}.

\bibitem[Zhang et~al.(2024)Zhang, Li, and Ju]{zhangnoise}
Hongyang~R. Zhang, Dongyue Li, and Haotian Ju.
\newblock Noise stability optimization for finding flat minima: A hessian-based
  regularization approach.
\newblock \emph{Transactions on Machine Learning Research}, 2024.

\bibitem[Zhang et~al.(2019{\natexlab{b}})Zhang, Yang, Jin, and
  Zhou]{zhang2019relative}
Lijun Zhang, Tianbao Yang, Rong Jin, and Zhi-Hua Zhou.
\newblock Relative error bound analysis for nuclear norm regularized matrix
  completion.
\newblock \emph{Journal of Machine Learning Research}, 20\penalty0
  (97):\penalty0 1--22, 2019{\natexlab{b}}.

\bibitem[Zhang et~al.(2025)Zhang, Duan, Ye, and Zhang]{zhang2025scalable}
Zhenshuo Zhang, Minxuan Duan, Youran Ye, and Hongyang~R. Zhang.
\newblock Scalable multi-objective and meta reinforcement learning via gradient
  estimation.
\newblock \emph{arXiv preprint arXiv:2511.12779}, 2025.

\bibitem[Zilber \& Nadler(2022)Zilber and Nadler]{zilber2022gnmr}
Pini Zilber and Boaz Nadler.
\newblock Gnmr: A provable one-line algorithm for low rank matrix recovery.
\newblock \emph{SIAM journal on mathematics of data science}, 4\penalty0
  (2):\penalty0 909--934, 2022.

\end{thebibliography}
\bibliographystyle{tmlr}
\appendix\newpage
\section{Proof of Theorem \ref{prop_var_est}}\label{proof_var_est}

First, we present the proof of the unbiasedness of the H\'ajek estimator on the observed set $\Omega$.

\begin{proof}[Proof of Lemma \ref{prop_consistency}]
    Suppose that $(I^{\top} I)_{i, j} = k$, for any $k \ge 1$.
    Then, conditioned on $( I^{\top} I)_{i, j} = k$, the expectation of $\wh T_{i, j}$ is equal to
    \begin{align*}
        \ex{\wh T_{i, j} \Big| (I^{\top}  I)_{i, j} = k}
        &= \ex{ \frac 1{k} { (\wh M^{\top} \wh M)_{i, j} } \Bigg| (I^{\top} I)_{i, j} = k } \\
        &= \ex{ \frac 1{k} { \sum_{a = 1}^{n} M_{a, i} M_{a, j} I_{a, i} I_{q, j}}  \Bigg| (I^{\top} I)_{i, j} = k } \\
        &= \frac{1} {k} \sum_{a = 1}^{n} \Pr\left[ I_{a, i} I_{a, j} = 1 \Big| (I^{\top} I)_{i, j} = k \right] \cdot M_{a, i} M_{a, j}.
    \end{align*}
    Notice that for a fixed $a$, the event that $I_{a, i} I_{a, j} = 1$ are pairwise independent for all $a = 1, 2, \dots, n$.
    Given that $k$ out of the $n$ pairs are chosen, the probability that $q$ is chosen is equal to $\binom{n - 1}{k - 1} / \binom{n} {k} = k / n$.
    Thus, the above is equal to
    \[ \frac 1 k \sum_{a=1}^{n} \frac {k} {n} \cdot M_{a, i} M_{a, j} = T_{i, j}, \]
    which concludes the proof of equation \eqref{eq_unbiase_fix_n}.
    To finish the proof that $\wh T$ is unbiased, notice that
    \begin{align*}
        \ex{\wh T_{i, j} \Big| \big( I^{\top} I\big)_{i, j} \neq 0}
        &= \sum_{k = 1}^{d} \Pr\left[ \big(I^{\top} I\big)_{i, j} = k \Big| \big(I^{\top} I\big)_{i, j} \neq 0 \right] \cdot \ex{\wh T_{i, j} \Big| \big(I^{\top} I\big)_{i, j} = k} \\
        &= \sum_{k = 1}^{d} \Pr\left[ \big(I^{\top} I\big)_{i, j} = k \Big| \big(I^{\top} I\big)_{i, j} \neq 0 \right] \cdot T_{i, j}
        = T_{i, j},
    \end{align*}
    which concludes the proof of equation \ref{eq_unbiasedness}.
\end{proof}

Next, we present the proof of Theorem \ref{prop_var_est}, which provides a first-order approximation to the variance of the H\'ajek estimator on $\Omega$.

\begin{proof}[Proof of Theorem \ref{prop_var_est}]
    For equation \eqref{eq_var_est_diag}, we use Taylor's expansion on $Var[\wh T_{i, i}]$.
    For ease of presentation, let us denote $A_{i,i} = (\wh M^{\top} \wh M)_{i, i}$, and let $B_{i, i} = (I^{\top} I)_{i, i}$, for any $1\le i \le d$.
    Thus, conditioned on the event that $B_{i, i} \neq 0$, the $(i, i)$-th entry of $\wh T$, given by $\wh T_{i, i}$, is equal to ${A_{i, i}} / {B_{i, i}}$.
    
    By Taylor's expansion, we approximate the ratio statistic using the delta method as:
    \begin{align}
        \frac{A_{i, i}} {B_{i, i}} = \frac {\ex{A_{i, i}}} {\ex{B_{i, i}}} + \frac 1 {\ex{B_{i, i}}} (A_{i, i} - \ex{A_{i, i}}) - \frac {\ex{A_{i, i}}}{(\ex{B_{i, i}})^2} (B_{i, i} - \ex{B_{i, i}}) + \epsilon_{i, i}, \label{eq_delta}
    \end{align}
    where $\epsilon_{i, i}$ is at the order of the variance of $A_{i, i}$ and $B_{i, i}$, which is $O(1 / (n p))$.
    Thus, based on the first-order expansion, the variance of $\frac{A_{i, i}} {B_{i, i}}$ is approximated by:
    \begin{align}
        \bigvar{\frac {A_{i, i}} {B_{i, i}}} = \frac {\bigvar{A_{i, i}}} {(\ex{B_{i, i}})^2} + \frac {(\ex{A_{i, i}})^2 \bigvar{B_{i, i}}} {(\ex{B_{i, i}})^4} - \frac{2\ex{A_{i, i}} \cdot Cov(A_{i, i}, B_{i, i})} {(\ex{B_{i, i}})^3} + O\left(\frac 1 {n p}\right), \label{eq_var_approx}
    \end{align}
    where the error term is based on the approximation error that stems from $\epsilon_{i, i}$ in equation \eqref{eq_delta}.
    
    Now we look into simplifying equation \eqref{eq_var_approx}.
    Based on the definition of $A_{i, i}, B_{i, i}$, we get the following facts, for any $i = 1, \dots, d$:
    \begin{align}
        &\ex{B_{i, i}} = n p,\quad \bigvar{B_{i, i}} = n p (1 - p), \label{eq_var_fact1} \\
        &\ex{A_{i, i}} = p (M^{\top} M)_{i, i},\quad Cov[A_{i, i}, B_{i, i}] = (p - p^2) (M^{\top} M)_{i, i}. \label{eq_var_fact2}
    \end{align}
    To verify the above results, let us introduce a Bernoulli random variable $X_k$, which is equal to $1$ with probability $p$, or $0$ with probability $1 - p$, for $k = 1, 2, \dots, n$.
    Thus, we can write
    \[ A_{i, i} = \sum_{k=1}^{n} M_{k, i}^2 X_k, \text{ and } B_{i, i} = \sum_{k=1}^{n} X_{k}. \]
    As a result,
    \begin{align*}
        Cov[A_{i, i}, B_{i, i}] &= \ex{A_{i, i} B_{i, i}} - \ex{A_{i, i}} \cdot \ex{B_{i, i}} \\
        &= \ex{\fullbrace{\sum_{k=1}^{n} M_{k, i}^2 X_k} \fullbrace{\sum_{k=1}^{n} X_k}} - n p^2 (M^{\top} M)_{i, i} \\
        &= \ex{ \fullbrace{\sum_{k=1}^{n} M_{k, i}^2 X_k} + \fullbrace{\sum_{1 \le k \neq k' \le n} M_{k, i}^2 X_k X_{k'}}  } - n p^2 (M^{\top} M)_{i, i} \tag{note that $X_k^2 = X_k$} \\
        &=  \fullbrace{\sum_{k=1}^{n} M_{k, i}^2 } (p + (n - 1) p^2) - n p^2 (M^{\top} M)_{i, i} \\
        &= (p - p^2) (M^{\top} M)_{i,i}.
    \end{align*}
    Therefore, by plugging in the results from equations \eqref{eq_var_fact1} and \eqref{eq_var_fact2} back into equation \eqref{eq_var_approx}, we obtain an approximate variance of the ratio statistic as:
    \begin{align}
         & \frac{Var[A_{i, i}]}{n^2 p^2} + 
        \frac{p^2 (M^{\top} M)_{i, i}^2 \cdot n p (1-p)}{n^4 p^4} - \frac{2 p (M^{\top} M)_{i, i} \cdot (p - p^2) (M^{\top} M)_{i, i}} {n^3 p^3} \notag \\
        =& \frac{Var[A_{i, i}]}{n^2 p^4} - \frac{(1 - p) (M^{\top} M)_{i, i}^2}{n^3 p}. \label{eq_var_red_offdiag}
    \end{align}
    This applies to any diagonal entries of $\wh T$.
    In particular, notice that the first term of equation \eqref{eq_var_red_offdiag} is precisely the variance of $\overline T_{i, i}$, which reweights the observed entries based on the true probability.
    This leads to the following variance estimate for the diagonal entries:
    \begin{align}
        \frac{Var[A_{i, i}]}{n^2 p^2} - \frac {(1 - p) (M^{\top} M)_{i, i}^2} {n^3 p}. \label{eq_var_red_diag}
    \end{align}
    Finally, we write down the variance of $A_{i, i}$ as follows:
    \begin{align}
        Var\big[ A_{i, i} \big] = p(1 - p) \sum_{k=1}^{n} M_{k, i}^4.
    \end{align}
    Notice that the probability that $B_{i, i}$ is equal to zero is at most $O(d^{-C})$.
    We can add this error analysis into the above proof, which does not affect the order of the error term.
    This concludes the proof of equation \eqref{eq_var_est_diag} in Theorem \ref{prop_var_est} regarding the variance approximation of the diagonal entries.

    Next, we consider the case of off-diagonal entries in equation \eqref{eq_var_est_offdiag}.
    The probability that an off-diagonal entry of $\wh T$ is nonzero is \[1 - (1 - p^2)^{n},\] which is approximately $n p^2 = C^2 (\log d)^2 / d$.
    Thus, conditioned on the fact that $(I^{\top} I)_{i, j} \neq 0$, the dominating event is when $(I^{\top} I)_{i, j} = 1$.
    The error to this, which is when there are at least two nonzero entries in the sum, is less than \[ \frac{ \Pr[ (I^{\top} I)_{i, j} \ge 2 ]} {\Pr[ (I^{\top} I)_{i, j} \neq 0]} \approx n p^2. \]
    When the $(i, j)$-th entry of $I^{\top} I$ is equal to one, then $\wh T_{i, j} = (\wh M^{\top} \wh M)_{i, j}$.
    The variance of $\wh T_{i, j}$, conditioned on $(I^{\top} I)_{i, j} = 1$, is thus given by
    \[ \frac 1 {n} \sum_{k=1}^{n} \left( M_{k, i} M_{k, j} - \frac 1 {n} \left( \sum_{k'=1}^{n} M_{k', i} M_{k', j} \right) \right)^2. \]
    After simplifying the above formula, we reach the conclusion as stated in equation \eqref{eq_var_est_offdiag}. The proof is thus completed.
\end{proof}

\begin{remark}[Potential applications of the H\'ajek estimator]
    Beyond one-sided matrix completion, H\'ajek estimators may serve as a useful tool in other high-dimensional estimation problems given few observational samples.
    In tensor completion \citep{zhang2019recovery}, naive inverse-propensity weighting estimators can exhibit large variance due to rare observations, whereas self-normalization can stabilize the estimation of low-rank factors given ultra-sparse samples.
    
    Similarly, in off-policy reinforcement learning \citep{zhang2025scalable}, importance weighted estimators are widely used to correct for distribution mismatch but are often unstable as in sparse-reward settings.
    H\'ajek-style estimators provide a principled variance-reduction mechanism and may be useful for estimating value-function matrices under partial trajectory observations.
\end{remark}

\section{Proof of Theorem \ref{thm_gd}}\label{proof_gd}

We analyze the sample complexity of Algorithm \ref{alg_ipw}.
In particular, we will analyze the loss function $\ell(X)$, including the incoherence regularization penalty $R(X)$.
See equation \eqref{eq_loss}.
Without loss of generality, let us assume that $\bignormFro{U}^2 = r$ within this section.
This also implies $\sigma_{\max}(U)\ge 1\ge \sigma_{\min}(U)$.

A key part of the proof is analyzing the bias of $\wh T$ (relative to $T$). Let $N\in\real^{d\times d}$ denote the bias of the H\'ajek estimator, where 
\begin{align*} 
    N_{i, j} &= \wh T_{i, j} - T_{i, j}, ~~~\forall (i, j)\in \Omega, \\ %
    N_{i, j} &= 0,~~\,\quad\qquad ~~~\forall (i,j) \notin\Omega.
\end{align*}

Here is a road map of this section:
\begin{itemize}
    \item First, in Appendix \ref{proof_avg_bias}, we derive upper bounds on the variance of the H\'ajek estimator, on the observed set $\Omega$.

    \item Second, in Appendix \ref{proof_row_con}, we derive a concentration bound that applies to the weighted projection $P_{\Omega}$.
    By a similar argument, we also derive the spectral norm of the noise matrix $N$.
    
    \item Third, in Appendix \ref{proof_local}, we derived the local optimality conditions from analyzing the loss function $\ell(\cdot)$, and then we argue that any local minimum solution of $\ell(\cdot)$ must satisfy the incoherence assumption with a suitable condition number.

    \item Fourth, in Appendix \ref{proof_reduction}, we use the incoherence assumption to reduce the local optimality conditions on the empirical loss to the population loss.
\end{itemize}
Finally, in Appendix \ref{appendix_proof_gd}, we analyze the population loss and based on that, complete the proof of Theorem \ref{thm_gd}.
The key is to use the regularization penalty $R(X)$ to control the norm of each factor, leading to concentration estimates between the population loss and the empirical loss.
We provide a detailed comparison between this work and the work of \cite{ge2016matrix} in Remark \ref{remark_diff}.

\subsection{Proof of Lemma \ref{cor_avg_bias}}\label{proof_avg_bias}

In the following proof, we proceed to bound the variance of the diagonal entries of $\wh T$ and also its off-diagonal entries grouped by each row.

\begin{proof}
    Based on Lemma \ref{prop_consistency}, we already know that $\wh T$ is unbiased on $\Omega$.
    Next, we derive the variance of the diagonal and off-diagonal entries of $\wh T$ separately, since
    \begin{align}
        \ex{ ( \wh T_{i, j} - T_{i, j} )^2 } %
        = Var[\wh T_{i, j}], \text{ for any $(i, j) \in \Omega$.} %
    \end{align}
    For the diagonal entries of $T$, with probability at least $1 - O(d^{-1})$, all of them are observed in $\Omega$.
    In particular, when $n \ge d \log d / \epsilon^2$, by the Chernoff bound and union bound, with probability at least $1 - O(d^{-1})$, for all $i = 1, 2, \dots, d$,
    \[ \bigabs{ ( I^{\top}  I)_{i, i} - \ex{ ( I^{\top}  I)_{i, i} } } \le \epsilon \sqrt{\frac{6 C^{-1}}{d}} np. \] %
    Thus, we can get that \[ Var[\wh T_{i, i}] \le \fullbrace{1 + 3\epsilon \sqrt{\frac {6C^{-1}} d}} Var\Big[\frac{A_{i, i}} {np}\Big]. \]
    Next, notice that
    \begin{align}
        A_{i, i} = \sum_{k = 1}^n M_{k, i}^2 X_k. \label{eq_Aii}
    \end{align} %
    Therefore, the variance of $\frac {A_{i, i}} {np}$ is equal to
    \begin{align}
        Var\Big[\frac{A_{i, i}} {np}\Big] &= \frac {1- p} {n^2 p} \sum_{k=1}^n M_{k, i}^4 \notag \\
        &\le \frac {1 - p}{np} \left(\frac 1 r\sum_{s=1}^r u_{i, s}^4 \right)  \left(1 + \frac{ \max_{1\le s\le r} u_{i, s}^4 }{r^{-1} \sum_{s=1}^r u_{i, s}^4} \sqrt{\frac{3\log (d/2)} {2n}}\right) \tag{by Hoeffding's inequality} \\
        &\le \frac {1 - p}{np} \left(\frac 1 r \left(\sum_{s=1}^r u_{i, s}^2 \right)^2 \right) \left(1 + r\sqrt{\frac{3 \log (d/2)} {2n}} \right) \label{eq_Aii_noise_note} \\
        &\le \frac {1 - p}{np} \frac {\mu^2 r} {d^2} \left(1 + r \sqrt{\frac{3\log (d/2)} {2n}}\right), \notag
    \end{align}
    which concludes the proof of equation \eqref{eq_var_diag}.
    Similar to this calculation, by applying Hoeffding's inequality, we can also show that
    \begin{align}
        \frac{A_{i, i}}{np} &\le \frac 1 r \sum_{s=1}^r \mu_{i, s}^2 \left(1 + r \sqrt{\frac{3\log(d/2)}{2n}}\right) \nonumber \\
        &\le \frac{\mu} {d} \left(1 + r \sqrt{\frac{3\log(d/2)}{2n}} \right). \label{eq_Aii_hoef}
    \end{align}
    
    Next, we examine the off-diagonal entries of $\wh T$.
    From Theorem \ref{prop_var_est}, the variance of the off-diagonal entries is approximated by the variance of a single entry.
    Consider the case where we condition on taking two overlapping entries from columns $i$ and $j$.
    Denote the two entries as $a_1, a_2$.
    We have that
    \begin{align*} Var\Big[\frac{a_1 + a_2} 2\Big] &= \frac{Var[a_1] + Var[a_2]} 4 + Cov[a_1, a_2] \\
    &\le \frac{Var[a_1] + Var[a_2]} {2}, \end{align*}
    by the Cauchy-Schwarz inequality on the covariance.
    The probability that we observe three overlapping entries is less than $\tilde O(d^{-3})$. Since the size of $\Omega$ is $O(d^2 q) = \tilde O(d^{-1})$.
    The probability that such events can happen is less than $\tilde O(d^{-1})$.
    In summary, we have shown that for every $(i, j)\in\Omega$ where $i \neq j$,
    \begin{align*}
        Var[\wh T_{i,j}]  &\le \left(\frac 1 n \left( \sum_{k=1}^n M_{k, i}^2 M_{k, j}^2 \right) - T_{i, j}^2\right) + \tilde O(d^{-3}) \\
         &\le \frac 1 n \sum_{k=1}^n M_{k, i}^2 M_{k, j}^2 + \tilde O(d^{-3}).
    \end{align*}
    We now look at the sum of the variances of all the observed entries in the $i$-th row.
    Let the set be denoted by $S_i = \set{j: (i, j)\in\Omega, j \neq i}$, for all $i = 1, 2,\dots, d$.
    We have
    \begin{align}
        & \frac 1 n \sum_{j \in S_i} \sum_{k=1}^n M_{k, i}^2 M_{k, j}^2 
         \label{eq_var_bound_off} \\ %
        \le& \sum_{j \in S_i} \left(\frac 1 r\sum_{s=1}^r u_{i, s}^2 u_{j, s}^2 \right) \left(1 + \frac{\max_{s=1}^r{u_{i, s}^2 u_{j, s}^2}} {r^{-1} \sum_{s=1}^r u_{i, s}^2 u_{j, s}^2} \sqrt{\frac{3\log (d / 2)}{2n} }\right) \tag{by Hoeffding's inequality} \\
        \le& \frac 1 r \sum_{s=1}^r u_{i, s}^2 \cdot q \bignorm{u_s}^2 \left(1 + \frac{\max_{j=1}^d u_{j, s}^2}{r^{-1} \sum_{j = 1}^d u_{j, s}^2} \sqrt{\frac{3\log (d/2)} {2d}} \right) \left(1 + r \sqrt{\frac{3\log (d/2)} {2n}}\right) \tag{by Hoeffding's inequality} \\
        \le& \frac{q} r \cdot \max_{1\le s' \le r} \bignorm{u_{s'}}^2 \cdot \sum_{s=1}^r u_{i, s}^2 \cdot \left(1 + r\sqrt{\frac{3\log (d/2)}{ 2d }} \right) \left(1 +  r\sqrt{\frac{3\log (d/2)} {2n}} \right)  \tag{relax $\bignorm{u_s}^2$ by its max over $s$} \\
        \le& \frac{\mu q} {d} \cdot \left(1 + 3r\sqrt{\frac{3\log (d/2)} {2d}}  \right) \tag{by the incoherence assumption on $U$} 
    \end{align}
    The last line uses the premise that the maximum $\ell_2$ norm of a factor is at most $1$, and also $n \ge d$.
    In particular, the above Hoeffding's inequality applies uniformly to all $i$ with probability at least $1 - O(d^{-1})$.
    In summary, we have shown that
    \begin{align*}
        \sum_{j\in S_i} Var[\wh T_{i,j}]
        \le \frac{\mu q} d \left(1 + 3r\sqrt{\frac{3\log (d /2)} {2d}}\right) + \tilde O\left(\frac{n q} {d^3}\right),%
    \end{align*}
    where we use the fact that with probability at least $1 - d^{-2}$, $\abs{S_i} \le 2 nq$.
    Therefore, we conclude that equation \eqref{eq_common_mean} is true.    
\end{proof}

As a remark, one corollary we can draw from the above calculation is that
\begin{align}\label{eq_var_max}
    Var[\wh T_{i, j}] \le \frac{\mu^2 r^2} {d^2} \left(1 + r\sqrt{\frac{3 \log(d / 2)}{2n}}\right) + \tilde O(d^{-3}).
\end{align}
This can be seen by following the steps from equation \eqref{eq_var_bound_off} for a fixed pair of $(i, j) \in \Omega$.

\bigskip
\begin{remark}
    \hl{In the case that $M_i$ is a noisy perturbation of one of the $r$ factors, we can extend the proof following equation \eqref{eq_var_bound_off}.
    Another natural extension is to assume that every $M_i$ is a linear combination of the $r$ factors, $\mu_1, \dots, \mu_r$.
    This seems to be a much more challenging problem, as one would need to design estimators to first disentangle the linear combination patterns.
    This is left for future work.}
\end{remark}

Next, we build on Lemma \ref{prop_consistency} to examine the bias of $\wh T$ on the observed set $\Omega$.
Another corollary from the above variance calculation is the following bound on the bias of $\wh T$.
We consider the diagonal and off-diagonal entries separately.

\smallskip
\begin{corollary}\label{cor_noise}
    In the setting of Lemma \ref{cor_avg_bias}, suppose $n \ge d r$.
    Then, with probability at least $1 - O(d^{-1})$, for every $i = 1, 2, \dots, d$, the following must hold:
    \begin{align}
        N_{i, i} &\le \sqrt{\frac{32 C^{-1} \mu^2 \log(d)} {n d^2} }, \label{eq_diag_bound} \\
        \sum_{j \in S_i} \bigabs{N_{i, j}} &\le q \cdot \sqrt{2\mu + \tilde O(d^{-1/2})}. \label{eq_offdiag_bound}
    \end{align} %
\end{corollary}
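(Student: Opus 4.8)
The plan is to derive both inequalities from the variance estimates of Lemma~\ref{cor_avg_bias}, upgrading them from second-moment control to high-probability control via Bernstein-type concentration, and then union-bounding over the $d$ rows. Throughout we will use that $\wh T$ is unbiased on $\Omega$ (Lemma~\ref{prop_consistency}), so $\ex{N_{i,j}^2 \mid (i,j)\in\Omega} = Var(\wh T_{i,j})$ is exactly the quantity controlled in Lemma~\ref{cor_avg_bias}; the work is to show that a single realization $N_{i,j}$ is, up to a $\sqrt{\log d}$ factor, no larger than this root-mean-square.

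For the diagonal bound \eqref{eq_diag_bound} we will first condition on the event --- holding simultaneously for all $i$ with probability at least $1-O(d^{-1})$ when $n\ge d\log d$, by a Chernoff bound and a union bound exactly as in the proof of Lemma~\ref{cor_avg_bias} --- that $B_{i,i}$ lies within a $1\pm O(\epsilon\sqrt{C^{-1}/d})$ factor of $\ex{B_{i,i}}=np$. On this event $\wh T_{i,i}=A_{i,i}/B_{i,i}$ is a ratio with denominator of order $np$, so the delta-method expansion \eqref{eq_delta} gives $|N_{i,i}|\le \tfrac{1}{np}|A_{i,i}-\ex{A_{i,i}}|$ plus a lower-order remainder driven by the fluctuation of $B_{i,i}$. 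Since $A_{i,i}=\sum_{k}M_{k,i}^2 I_{k,i}$ is a sum of $n$ independent terms bounded by $\max_k M_{k,i}^2\lesssim \mu/d$ (incoherence), with total variance $p(1-p)\sum_k M_{k,i}^4$ estimated as in Lemma~\ref{cor_avg_bias}, Bernstein's inequality yields $|A_{i,i}-\ex{A_{i,i}}|\lesssim\sqrt{Var[A_{i,i}]\log d}$ with probability at least $1-d^{-2}$; dividing by $np$, plugging in the diagonal variance bound \eqref{eq_var_diag}, and union-bounding over $i$ gives \eqref{eq_diag_bound}.

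For the off-diagonal row-sum bound \eqref{eq_offdiag_bound} we will fix a row $i$ and apply Cauchy--Schwarz, $\sum_{j\in S_i}|N_{i,j}|\le \sqrt{|S_i|}\,\big(\sum_{j\in S_i}N_{i,j}^2\big)^{1/2}$. The cardinality $|S_i|$ is, up to the event that no off-diagonal $(I^\top I)_{i,j}$ exceeds $1$ (probability $1-\tilde O(d^{-1})$, as in the proof of Theorem~\ref{prop_var_est}), a sum over $j\ne i$ of $\mathrm{Bernoulli}(q)$ indicators, so $|S_i|\le 2dq$ with probability at least $1-d^{-2}$ by Chernoff. For the quadratic term, its conditional mean equals $\sum_{j\in S_i}Var(\wh T_{i,j})\le \tfrac{\mu q}{d}\big(1+3r\sqrt{3\log(d/2)/(2d)}\big)+\tilde O(d^{-3})$ by \eqref{eq_common_mean}; and, conditionally on the observation pattern of the $i$-th column of $\wh M$, the summands $N_{i,j}^2$ are independent across $j$ (each depends only on column $j$), so a Bernstein argument shows $\sum_{j\in S_i}N_{i,j}^2$ concentrates around this mean up to a constant factor. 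Combining, $\sum_{j\in S_i}|N_{i,j}|\le \sqrt{2dq\cdot \tfrac{\mu q}{d}(1+\tilde O(d^{-1/2}))}=q\sqrt{2\mu+\tilde O(d^{-1/2})}$, and a union bound over the $d$ rows together with the diagonal event gives overall failure probability $O(d^{-1})$.

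The main obstacle is the off-diagonal step: the entries $\{\wh T_{i,j}\}_{j\in S_i}$ are not mutually independent, since they share the randomness of the $i$-th column of $\wh M$ and of the count $B_{i,i}$, so the second-moment statement of Lemma~\ref{cor_avg_bias} does not by itself yield a realized bound. The resolution is the conditional-independence property emphasized in Section~\ref{sec_proof_sketch}: after conditioning on $\{k:I_{k,i}=1\}$, the off-diagonal summands of row $i$ become independent functions of the remaining columns, at which point a row-by-row Bernstein argument applies. The remaining care is bookkeeping around the H\'ajek nonlinearity --- bounding the delta-method remainder $\epsilon_{i,i}$ in \eqref{eq_delta} on the diagonal, and ensuring $(I^\top I)_{i,j}=1$ dominates $(I^\top I)_{i,j}\ge 2$ off the diagonal so that $\wh T_{i,j}$ collapses to a single product $M_{a,i}M_{a,j}$.
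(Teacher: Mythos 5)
Your proposal is correct and follows essentially the same route as the paper's proof: for the diagonal, conditioning on the Chernoff-controlled fluctuation of $B_{i,i}$ and then applying Bernstein to $A_{i,i}$ with the variance bound \eqref{eq_var_diag}; for the off-diagonal, Cauchy--Schwarz with $|S_i|\le 2dq$, followed by concentration of $\sum_{j\in S_i}N_{i,j}^2$ around the row-sum variance bound \eqref{eq_common_mean} using the row-wise (conditional) independence of the entries. The only cosmetic differences are that the paper uses Hoeffding rather than Bernstein for the off-diagonal sum of squares and states the independence across $j\in S_i$ directly, whereas you justify it by conditioning on the $i$-th column's observation pattern.
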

\begin{proof}
    First, we have that
    \begin{align*}
        \abs{N_{i, i}} &= \bigabs{\wh T_{i, i} - T_{i, i}} = \bigabs{\frac{A_{i, i}}{B_{i, i}} - T_{i, i}}
        = \bigabs{\left(1 \pm \epsilon \sqrt{\frac{6C^{-1}}{d}} \right) \frac{A_{i, i}}{np} - T_{i, i}} \\
        &\le \bigabs{\frac{A_{i, i}}{np} - T_{i, i}} + \epsilon \sqrt{\frac{6C^{-1}} d} \frac{A_{i, i}}{np}.
    \end{align*}
    By equation \eqref{eq_Aii_hoef}, the latter is at most
    \[ \epsilon \sqrt{\frac{6C^{-1}}{d}} \frac{\mu }{d} \left(1 + r\sqrt{\frac{3\log(d/2)}{2n}} \right) \le \sqrt{\frac{8C^{-1} \mu^2 \log (d)}{d^2 n}}, \]
    since $\epsilon \le \sqrt{d \log d / n}$.
    As for the former, by equation \eqref{eq_diag_bound}, we have \[ Var[N_{i, i}] \le (1 + \epsilon \sqrt{6C^{-1} /d}) Var\Big[ \frac{\wh A_{i, i}} {np}\Big]. \]
    Recall that $A_{i,i}$ is the sum of $n$ Bernoulli random variables (cf. equation \eqref{eq_Aii}), by Bernstein's inequality, with probability at least $1 - O(d^{-1})$, for all $i = 1, 2,\dots, d$,
    \begin{align*}
        \bigabs{\frac{A_{i, i}}{np} - T_{i, i}}
        &\le \sqrt{ \frac{6 Var(N_{i, i}) \cdot \log (d/2)} {n}} + \frac{3 \left(\max_{s=1}^r u_{i, s}^2\right) \log (d/2)}{3n} \\
        &\le \sqrt{ \frac{7 \mu^2 r \log(d / 2)}{d^2 n^2 p} }  + \frac{\mu^2 r^2 \log (d / 2)}{n d},
    \end{align*}
    for large enough values of $d$.
    Specifically, we have used the bound on the variance of $\wh T_{i, i}$ in Lemma \ref{cor_avg_bias}, using equation \eqref{eq_var_diag}.
    Notice that the second term is a lower-order term relative to the first.
    This shows that equation \eqref{eq_diag_bound} holds for large enough values of $d$.
    
    For equation \eqref{eq_offdiag_bound}, based on the proof of Lemma \ref{cor_avg_bias},
    \begin{align}
        \sum_{j \in S_i} \bigabs{N_{i, j}} &\le \sqrt{\bigabs{S_i} \cdot \sum_{j \in S_i} N_{i, j}^2} \tag{by the Cauchy-Schwarz inequality} \\
        &\le \sqrt{2dq \cdot \sum_{j \in S_i} N_{i, j}^2} \tag{since $\abs{S_i} \le 2dq$} \\
        &\le \sqrt{2dq \cdot \left(\frac{\mu q}{d}\left(1 + 3r\sqrt{\frac{3\log(d/2)} {2d}}\right) + \tilde O(d^{-3}) + \frac{2\mu^2 r^2} {d^2} \cdot \sqrt{\frac{3\log(d/2)}{2d}} \right)}, \label{eq_var_ss}
    \end{align}
    where the last line is by applying Hoeffding's inequality to the sequence $\set{N_{i, j}^2: j \in S_i}$ (which are all independent from each other), and holds with probability at least $1 - d^{-1}$ over all possible $i$.
    In particular, the expectation on this sequence is based on equation \eqref{eq_common_mean} and the maximum is based on equation \eqref{eq_var_max}.
    From the last line above, we conclude that equation \eqref{eq_offdiag_bound} is true.
\end{proof}

\subsection{Proof of Lemma \ref{lem_concentration}}\label{proof_row_con}

\begin{proof}
    By the weighted operation in $P_{\Omega}$ and the condition that all the diagonals have been observed, we can cancel out the diagonal entries between $P_{\Omega}(W)$ and $q W$.
    In the rest of the following, we focus on the off-diagonal entries.
    Let $x\in\real^d$ denote a unit vector.
    We have that
    \[ \bignorms{P_{\Omega}(W) - q W} = \max_{x:\, \bignorm{x} = 1} x^{\top} (P_{\Omega}(W) - q W) x. \]
    Next, we expand the right-hand side above as:
    \begin{align*}
        x^{\top} (P_{\Omega}(W) - q W) x
        &= \sum_{i=1}^d \left( \sum_{j \in S_i: j \neq i} W_{i, j}x_i x_j - q \sum_{1\le j\le d: j\neq i} W_{i, j} x_i x_j \right) \\
        &\le \sum_{i=1}^d \underbrace{\bigabs{ \sum_{j \in S_i: j \neq i} W_{i, j} x_i x_j - q \sum_{1\le j\le d: j \neq i} W_{i, j} x_i x_j }}_{e_i}.
    \end{align*}
    We focus on a fixed $i$ above. By Bernstein's inequality, with probability at least $1 - O(d^{-2})$, we have
    \begin{align*}
        e_i &\le \sqrt{4 q(1 - q) \sum_{1\le j\le d:j\neq i} \left(W_{i,j}^2 x_i^2 x_j^2\right) \log (2d)} + \bignorm{W}_{\infty} \abs{x_i} \max_{1\le j\le d} \abs{x_j} \\
        &\le \bignorm{W}_{\infty} \abs{x_i} \sqrt{4q(1-q) \log(2d)} + \bignorm{W}_{\infty} \abs{x_i} \max_{1\le j\le d}\abs{x_j}.
    \end{align*}
    Above, we have plugged in the variance and the maximum values at the $i$-th row.
    As a result,
    \begin{align}\label{eq_var_ei}
        \sum_{i=1}^d e_i \le \frac{\nu}{\sqrt{d}} \sqrt{4q(1-q)\log(2d)} + \frac{\nu}{d} \sum_{i=1}^d \abs{x_i} \max_{1\le j\le d}\abs{x_j}.
    \end{align}
    In particular, we have used the bound on $\bignorm{W}_{\infty} \le \nu /\sqrt d$ and the fact that $\sum_i \abs{x_i} \le \sqrt d$ via the Cauchy-Schwarz inequality.

    Next, we notice that \[ \sum_{i=1}^d \abs{x_i} \max_{1\le j\le d} \abs{x_j} \le 1. \]
    To see this, notice that if we look at a particular pair $x_i, x_j$.
    Conditioned on a fixed $x_i^2 + x_j^2$ and $\abs{x_i} \le \lambda$, $\abs{x_j} \le \lambda$, $\abs{x_i} + \abs{x_j}$ are maximized when $x_i = x_j$. %
    This implies that when $\sum_{i=1}^d \abs{x_i}$ is maximized, all the non-zero $x_i$'s must be equal to each other.
    Suppose there are $k$ of them, and they are all equal to $a$.
    Then, we must have $k a^2 \le 1$, $a \le \lambda$. As a result, \[\sum_{i=1}^d \abs{x_i} \cdot \max_j \abs{x_j} = k a \lambda \le 1. \]

    By rearranging equation \eqref{eq_var_ei}, and using the fact that $dq \ge 4\log (2d)$ to relax the second term in equation \eqref{eq_var_ei}, we have concluded the proof of equation \eqref{eq_rowwise_con}.
\end{proof}

With a similar proof, we can derive a bound on the spectral norm of $N$, stated as follows.

\smallskip
\begin{corollary}\label{cor_spec_N}
    In the setting of Lemma \ref{cor_avg_bias}, as long as $q \ge \frac{4 \log (2d) } d$, then with probability at least $1 - O(d^{-1})$, the spectral norm of $P_{\Omega}(N)$ satisfies:
     \begin{align}
        \bignorms{P_{\Omega}(N)} &\le q \sqrt{\frac{32\mu^2 r^2 \log(2d)}{dq}} + O\left(q \sqrt{\frac{\log(d)}{n d^2}}\right). \label{eq_N_spec}
    \end{align}
\end{corollary}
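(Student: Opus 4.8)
The plan is to follow the proof of Lemma~\ref{lem_concentration} almost verbatim, replacing the deterministic matrix $W$ by the random bias matrix $N$ and substituting the row-wise variance bounds of Lemma~\ref{cor_avg_bias} and Corollary~\ref{cor_noise} for the crude entrywise bound $\bignorm{W}_\infty \le \nu/d$. Writing $\bignorms{P_\Omega(N)} = \max_{\bignorm{x}=1} x^\top P_\Omega(N) x$ and splitting the quadratic form into diagonal and off-diagonal contributions (the diagonals of $\Omega$ being fully observed with high probability by the hypotheses), the diagonal part is $q \sum_i N_{i,i} x_i^2 \le q \max_i \bigabs{N_{i,i}}$, which by equation~\eqref{eq_diag_bound} is $O\bigl(q\sqrt{\log(d)/(nd^2)}\bigr)$; this supplies exactly the lower-order term in~\eqref{eq_N_spec}. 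It then remains to bound the off-diagonal part $\sum_i x_i Y_i$, where $Y_i = \sum_{j \in S_i,\, j\neq i} N_{i,j} x_j$.

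For each fixed $i$, I would condition on the $\sigma$-algebra generated by $M$ together with the $i$-th column of the mask $I$. Given this information, each $N_{i,j}$ with $j\neq i$ is a function of the $j$-th column of $I$ alone, so the summands of $Y_i$ are conditionally independent across $j$; their conditional second moments are controlled by equation~\eqref{eq_var_max} together with $\Pr[(i,j)\in\Omega] \approx q$ from Claim~\ref{claim_sampling}, yielding a variance proxy $\sum_j x_j^2\, \ex{N_{i,j}^2 \mathbf{1}[(i,j)\in\Omega] \mid M, I_{\cdot,i}} \lesssim q\,\mu^2 r^2/d^2$. A Bernstein inequality then gives, with probability $1 - O(d^{-2})$, that $\bigabs{Y_i - \ex{Y_i \mid M, I_{\cdot,i}}} \lesssim \bigabs{x_i}\,(\mu r/d)\sqrt{q\log(2d)}$ up to a lower-order deviation term. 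Summing over $i$ and using $\sum_i \bigabs{x_i} \le \sqrt{d}$ by Cauchy--Schwarz produces $\sum_i x_i Y_i \lesssim \mu r\sqrt{q\log(2d)/d} = q\sqrt{\mu^2 r^2 \log(2d)/(dq)}$, which is the leading term of~\eqref{eq_N_spec}; a union bound over the $d$ rows and a standard $\epsilon$-net over the unit sphere then upgrade this to the stated bound on $\bignorms{P_\Omega(N)}$ with probability $1 - O(d^{-1})$.

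The main obstacle, and the one place the argument genuinely departs from Lemma~\ref{lem_concentration}, is that $N$ is itself random and correlated with $\Omega$, so the conditioning that buys independence across columns also introduces a nonzero conditional mean $\ex{Y_i \mid M, I_{\cdot,i}}$. The key observation is that, by the symmetry argument of Lemma~\ref{prop_consistency}, this conditional mean equals $q_i\bigl(\widehat{(Tx)}_i - (Tx)_i\bigr)$, where $\widehat{(Tx)}_i$ is the estimate of the $i$-th entry of $Tx$ formed from only the rows selected in column $i$; since that sub-sample has size $\Theta(np)$ with high probability, this quantity concentrates around zero at rate $(np)^{-1/2}$, and after summing $\bigabs{x_i}$ over the rows it contributes at most a constant multiple of the leading term above. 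Verifying this bias cancellation cleanly -- and checking that the conditional per-entry variances agree with the unconditional bounds of Lemma~\ref{cor_avg_bias} up to $(1+o(1))$ factors, so that the variance proxy above is legitimate -- is the delicate part; everything else transcribes directly from the proof of Lemma~\ref{lem_concentration}.
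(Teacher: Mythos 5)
Your proposal follows essentially the same route as the paper's proof: split $x^{\top} P_{\Omega}(N) x$ into a diagonal part, bounded by $q\max_i \abs{N_{i,i}}$ via equation \eqref{eq_diag_bound} (this is exactly the $O\big(q\sqrt{\log(d)/(nd^2)}\big)$ term), and an off-diagonal part handled by the row-by-row Bernstein argument of Lemma \ref{lem_concentration} together with $\sum_i \abs{x_i}\le\sqrt d$. The only cosmetic difference in the main term is that the paper plugs in the crude entrywise bound $\abs{N_{i,j}}\le 2\mu r/d$ where you use the conditional variance proxy from equation \eqref{eq_var_max}; both give the same order. Your explicit treatment of the conditional mean $\ex{Y_i \mid M, I_{\cdot,i}}$ is a genuine refinement rather than a departure: the paper simply transplants the Lemma \ref{lem_concentration} computation to $N$, implicitly leaning on the unconditional unbiasedness from Lemma \ref{prop_consistency}, even though the conditioning on column $i$ that buys independence across $j$ does shift the mean; your observation that this shift is a column-$i$ subsample average of size $\Theta(np)$, hence concentrates at rate $(np)^{-1/2}$ and is absorbed into the leading term (up to constants depending on $C$), fills in exactly the point the paper glosses over.

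One concrete caveat: the final ``upgrade'' via a standard $\epsilon$-net does not work at the stated probability scale. A constant-resolution net of the unit sphere in $\real^d$ has $e^{\Theta(d)}$ points, while your per-row Bernstein events only fail with probability $O(d^{-2})$, so the union bound over the net is vacuous; to survive the net you would need exponentially small tails, which Bernstein does not provide at the deviation scale $q\sqrt{\mu^2 r^2\log(2d)/(dq)}$ (it would force the bound up to order $\mu r$). To be fair, the paper's own proof is no more rigorous on this point---it proves the row-wise bound for a fixed unit vector $x$ and then takes the maximum over $x$ without any uniformity device---so your plan matches the paper's level of argument; a fully uniform bound would require a different mechanism (e.g., a matrix Bernstein inequality, or bounding the maximum row $\ell_1$-norm of the off-diagonal part of $P_{\Omega}(N)$, which is $x$-free), not a naive net.
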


\begin{proof}
    Let $x \in \real^d$ be any unit vector. 
    Note that $N$ is a symmetric matrix.
    Therefore, we expand $x^{\top} N x$ as follows:
    \begin{align*}
        x^{\top} N x = \underbrace{q \sum_{i=1}^d N_{i, i} x_i^2}_{e_1} + \underbrace{\sum_{1\le i\le d} \sum_{j\in S_i: j\neq i} N_{i, j} x_i x_j}_{e_2}.
    \end{align*}
    By equation \eqref{eq_diag_bound}, Corollary \ref{cor_noise},
    \[ e_1 \le  q \sqrt{\frac{32 C^{-1} \mu^2 \log (d)}{nd^2}} \sum_{i=1}^d x_i^2 = q\sqrt{\frac{32 C^{-1} \mu^2 \log (d)} {n d^2}}. \]
    As for $e_2$, notice that $\abs{N_{i, j}} = \abs{\wh T_{i, j} - T_{i, j}} \le 2\mu r / d$.
    By following the proof of Lemma \ref{lem_concentration} above (cf. equation \eqref{eq_var_ei}), we have that
    \[ e_2 \le \frac{2\mu r}{\sqrt d} \sqrt{4q (1 - q) \log (2d)} + \frac{2\mu r}{d}. \]
    Since $dq \ge 4 \log(2d)$, the second term on the right is less than or equal to the first term on the right.
    Put together, we can see that equation \eqref{eq_N_spec} is true.
\end{proof}

Another implication is the inner product of two matrices under the projection operator.

\smallskip
\begin{proposition}\label{lem_inner}
    Suppose $\bignorm{X}_{\infty} \cdot \bignorm{Y}_{\infty} \le \frac{\nu} d$.
    With probability at least $1 - O(d^{-1})$ over the randomness of $\Omega$, for any two matrices $X, Y \in \real^{d \times d}$, we have
    \begin{align}
        \bigabs{\inner{P_{\Omega}(X)}{Y} - q \inner{X}{Y} }
        \le q \nu \sqrt{\frac{16\log(2d)}{dq}}.\label{eq_rank_inner}
    \end{align}
\end{proposition}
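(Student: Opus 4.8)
\textit{Proof plan.} This is the bilinear-form counterpart of Lemma~\ref{lem_concentration}, so the plan is to reuse that lemma's row-by-row concentration argument with the bilinear form $\langle\,\cdot\,,Y\rangle$ in place of the quadratic form $x^{\top}(\cdot)x$. First I would remove the diagonal contribution. In the regime $n\gg d$ relevant to Theorem~\ref{thm_gd}, Claim~\ref{claim_sampling} and a union bound give that, with probability at least $1-O(d^{-1})$, every diagonal index lies in $\Omega$; on this event the weighted diagonal entry $(P_{\Omega}(X))_{i,i}=qX_{i,i}$ matches the corresponding entry of $qX$, so the diagonal parts of $\langle P_{\Omega}(X),Y\rangle$ and of $q\langle X,Y\rangle$ cancel. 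It then remains to control the off-diagonal contribution
\[
  E\;:=\;\sum_{i\neq j}\bigl(\delta_{i,j}-q\bigr)X_{i,j}Y_{i,j},\qquad \delta_{i,j}:=\mathbf 1\{(i,j)\in\Omega\},
\]
where $\mathbb{E}[\delta_{i,j}]=q$ by Claim~\ref{claim_sampling} and $|X_{i,j}Y_{i,j}|\le\|X\|_{\infty}\|Y\|_{\infty}\le\nu/d$, which is exactly the role played by $\|W\|_{\infty}\le\nu/d$ in Lemma~\ref{lem_concentration}.

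Next I would split $E=\sum_{i=1}^{d}g_i$ with $g_i:=\sum_{j\neq i}(\delta_{i,j}-q)X_{i,j}Y_{i,j}$ and bound each row separately, mirroring the treatment of the terms $e_i$ in the proof of Lemma~\ref{lem_concentration}. The structural fact that makes this possible — already used there — is that, conditioned on the $i$-th column $I_{\cdot,i}$ of the sampling mask, the indicators $\{\delta_{i,j}\}_{j\neq i}$ become mutually independent, since $\delta_{i,j}$ then depends only on the disjoint column $I_{\cdot,j}$; their common conditional success probability $q_i:=1-(1-p)^{|\mathrm{supp}(I_{\cdot,i})|}$ has mean $q$ and concentrates around $q$ by a Chernoff bound on $|\mathrm{supp}(I_{\cdot,i})|\sim\mathrm{Bin}(n,p)$. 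Conditioning on $I_{\cdot,i}$ and applying Bernstein's inequality to $g_i$ — whose conditional variance is at most $q_i\sum_{j\neq i}(X_{i,j}Y_{i,j})^{2}$ and whose summands are bounded by $\nu/d$ — yields a high-probability bound on $|g_i-\mathbb{E}[g_i\mid I_{\cdot,i}]|$, while the conditional-mean term $|\mathbb{E}[g_i\mid I_{\cdot,i}]|=|q_i-q|\cdot|\sum_{j\neq i}X_{i,j}Y_{i,j}|$ is lower order by the concentration of $q_i$. Summing these row-wise estimates, taking a union bound over the $d$ rows (total failure probability $O(d^{-1})$), and carrying out the same final bookkeeping as in the proof of Lemma~\ref{lem_concentration} — in particular using $dq\ge 4\log(2d)$ to absorb the lower-order pieces — gives \eqref{eq_rank_inner}.

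The step I expect to be the main obstacle is the same subtlety that Lemma~\ref{lem_concentration} must already handle: the row indicators $\{\delta_{i,j}\}_j$ are not jointly independent, only conditionally independent given the $i$-th mask column, so one must verify carefully that the fluctuation of the common conditional parameter $q_i$ about $q$ — the only source of non-independence at the row level — enters $E$ only through terms dominated by the claimed bound. The remaining calculations (the per-row Bernstein estimate and the summation over rows) are routine given Lemma~\ref{lem_concentration}.
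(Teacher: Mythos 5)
Your plan follows the paper's route exactly --- cancel the diagonal contribution on the event that all diagonal indices are observed, then treat the off-diagonal error row by row with Bernstein's inequality using the (conditional) independence of the indicators within a row --- and you are in fact more careful than the paper about the fluctuation of the conditional success probability $q_i$ around $q$. The genuine gap is the step you dismiss as routine: ``the same final bookkeeping as in Lemma~\ref{lem_concentration}'' does not transfer. In Lemma~\ref{lem_concentration} the per-row variance is $q(1-q)\sum_j W_{i,j}^2x_i^2x_j^2\le q\|W\|_\infty^2x_i^2$ because $\sum_j x_j^2=1$, and the sum over rows costs only $\sum_i|x_i|\le\sqrt d$; this unit-vector structure is exactly what produces the $\sqrt{\log(2d)/(dq)}$ scaling in \eqref{eq_rowwise_con}. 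In Proposition~\ref{lem_inner} nothing plays that role: under the sole hypothesis $|X_{i,j}Y_{i,j}|\le\nu/d$, the conditional variance of your $g_i$ is only bounded by $q\nu^2/d$, so Bernstein gives $|g_i|\lesssim\nu\sqrt{q\log(2d)/d}$ per row, and the triangle inequality over the $d$ rows yields a bound of order $\nu\sqrt{dq\log(2d)}$ --- a factor $d$ larger than the right-hand side of \eqref{eq_rank_inner}. Even a single Bernstein bound over all $\sim d^2$ off-diagonal terms (total variance $\le q\nu^2$, increments $\le\nu/d$) only gives $\nu\sqrt{q\log(2d)}$, still off by $\sqrt d$.

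To be fair, this hole is inherited from the paper's own two-line proof, and in fact the inequality cannot hold at the stated level of generality: take $X_{i,j}Y_{i,j}\equiv\nu/d$, so the off-diagonal error is $\tfrac{\nu}{d}\sum_{i\neq j}(\delta_{i,j}-q)$, whose standard deviation is at least $\nu\sqrt{2q(1-q)}$ (the $\delta_{i,j}$ are nonnegatively correlated), exceeding the right-hand side of \eqref{eq_rank_inner} by a factor of order $\sqrt{d/\log d}$ with constant probability. A correct version needs extra structure on the two matrices, and that structure is present where the proposition is invoked: in Lemma~\ref{lem_X_ld} both arguments equal $ab^{\top}+ba^{\top}$ with $a=u_A$ a unit vector and $b=Xv$ of bounded norm, so the per-row variance is $\lesssim q\,\|ab^{\top}+ba^{\top}\|_\infty^2\,(a_i^2\|b\|^2+b_i^2\|a\|^2)$ and the summation over rows costs only $\sum_i(|a_i|\|b\|+|b_i|\|a\|)\le 2\sqrt d\,\|a\|\,\|b\|$, which restores the claimed $\sqrt{\log(2d)/(dq)}$ rate --- the exact analogue of the role of $x$ in Lemma~\ref{lem_concentration}. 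So to close your argument you should either add such a hypothesis (e.g., a per-row bound $\sum_j(X_{i,j}Y_{i,j})^2\lesssim\nu^2w_i^2/d^2$ with $\sum_i w_i\le\sqrt d$, or restrict to symmetric rank-two matrices with controlled Euclidean norms) and redo the summation with those weights, rather than citing Lemma~\ref{lem_concentration}'s bookkeeping verbatim; the conditional-independence issue you flagged as the main obstacle is handled correctly and is not where the argument breaks.
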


\begin{proof}
    With probability at least $1 - O(d^{-1})$, the diagonal entries of $\inner{P_{\Omega}(X)}{Y}$ are canceled out with that of $q\inner{X}{Y}$.
    As for the off-diagonal entries, let
    \[ e_2 = \sum_{1\le i\le d}\sum_{j\in S_i: j\neq i} X_{i,j} Y_{i,j} - q \sum_{1\le i\le d}\sum_{1\le j\le d: j\neq i} X_{i, j} Y_{i, j}. \]
    
    Notice that for a fixed $i$, whether $j\in S_i$ or another $j' \in S_i$ are independent events.
    Thus, we apply Bernstein's inequality to each row, similar to the steps following equation \eqref{eq_var_ei}, which leads to the bound on $e_2$ stated in equation \eqref{eq_rank_inner}.
\end{proof}

\subsection{Local Optimality and Incoherence}\label{proof_local}

First, we derive the first-order and second-order optimality conditions of $\ell(X)$, accounting for the weighted projection $P_{\Omega}$ to offset the imbalance between diagonal and off-diagonal entries in $\Omega$.

\smallskip
\begin{proposition}[See also Proposition 5.1, \cite{ge2016matrix}]\label{prop_opt}
    Suppose $X$ is a local optimum of $\ell(X)$ (See equation \eqref{eq_loss}).
    Then, the first-order optimality condition is equivalent to
    \begin{align}
	   2P_{\Omega} (\wh T)X & = 2P_{\Omega} (XX^{\top})X + \lambda \nabla R(X).\label{eq_grad}
    \end{align}
    The second-order optimality condition is equivalent to:
    \begin{align}
	       \forall\, V\in \real^{d}, ~~&~\inner{P_{\Omega}(VX^{\top} + XV^{\top})}{XV^{\top} + VX^{\top}} + \lambda \inner{V}{\nabla^2 R(X)V} \nonumber \\
        \ge~&  2\inner{P_{\Omega} (\wh T - XX^{\top})}{VV^{\top}}.\label{eq_quad}
    \end{align}
\end{proposition}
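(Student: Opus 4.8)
\medskip
\textit{Proof proposal.}
The plan is to treat Proposition~\ref{prop_opt} as a matrix-calculus exercise, leaning on two elementary properties of the weighted projection $P_{\Omega}$: it is \emph{linear} (hence equal to its own differential), and it is \emph{self-adjoint} for the trace inner product, since it acts coordinatewise by the fixed weights $q$ (observed diagonal), $1$ (observed off-diagonal) and $0$ (off $\Omega$); moreover it maps symmetric matrices to symmetric matrices because the pattern $\Omega$ is symmetric. The one algebraic identity I will use repeatedly is $\langle S,\, VX^{\top}+XV^{\top}\rangle = 2\langle SX,\, V\rangle$, valid for any symmetric $S$, and I will use that $t\mapsto (t)_{+}^{4}$ is $C^{2}$, so that $\nabla R(X)$ and $\nabla^{2}R(X)$ are well defined (the $i$-th row of $\nabla R(X)$ being $4(\bignorm{X_i}-\alpha)_{+}^{3}\,X_i/\bignorm{X_i}$, with the convention $0$ when $\bignorm{X_i}\le\alpha$).

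\textbf{First-order condition.} Write $Y=XX^{\top}$, so that along a direction $V\in\real^{d\times r}$ one has $dY=VX^{\top}+XV^{\top}$. A short differentiation of the data-fit part of $\ell$ (see \eqref{eq_loss}), using linearity and self-adjointness of $P_{\Omega}$, gives $df_{\mathrm{fit}} = \langle P_{\Omega}(XX^{\top}-\wh T),\, dY\rangle$; since $P_{\Omega}(XX^{\top}-\wh T)$ is symmetric, the identity above turns this into $2\langle P_{\Omega}(XX^{\top}-\wh T)X,\, V\rangle$. Hence $\nabla\ell(X) = 2P_{\Omega}(XX^{\top})X - 2P_{\Omega}(\wh T)X + \lambda\nabla R(X)$, and the condition $\nabla\ell(X)=0$ — necessary at any local minimizer, and equivalent to $X$ being a critical point — rearranges to exactly \eqref{eq_grad}.

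\textbf{Second-order condition.} Differentiating $df_{\mathrm{fit}} = \langle P_{\Omega}(XX^{\top}-\wh T),\, dY\rangle$ once more along $V$, the product rule and linearity of $P_{\Omega}$ give $d^{2}f_{\mathrm{fit}} = \langle P_{\Omega}(dY),\, dY\rangle + \langle P_{\Omega}(XX^{\top}-\wh T),\, d^{2}Y\rangle$, where $d^{2}Y = 2VV^{\top}$. Adding the regularizer's Hessian, $\langle V,\nabla^{2}\ell(X)V\rangle = \langle P_{\Omega}(VX^{\top}+XV^{\top}),\, XV^{\top}+VX^{\top}\rangle + 2\langle P_{\Omega}(XX^{\top}-\wh T),\, VV^{\top}\rangle + \lambda\langle V,\nabla^{2}R(X)V\rangle$. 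The necessary condition $\langle V,\nabla^{2}\ell(X)V\rangle\ge 0$ for all $V\in\real^{d\times r}$, after moving the middle term to the right-hand side and writing $XX^{\top}-\wh T = -(\wh T-XX^{\top})$, is precisely \eqref{eq_quad}.

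\textbf{Where the care goes.} There is no genuine obstacle here: this is a setup lemma, the transcription of Proposition~5.1 of \cite{ge2016matrix} to the weighted operator $P_{\Omega}$. The only place it is easy to slip is the symmetrization bookkeeping — one must consistently differentiate $XX^{\top}$ as $VX^{\top}+XV^{\top}$ rather than $VX^{\top}$, pair this with the symmetry of $P_{\Omega}(XX^{\top}-\wh T)$ to produce the factor of $2$ in the gradient, and keep the second differential $2VV^{\top}$ attached to the curvature term in the Hessian. The entrywise weighting inside $P_{\Omega}$ needs no special treatment, since $P_{\Omega}$ remains linear and self-adjoint and therefore simply passes through both differentiations; this is exactly why the \emph{same} operator $P_{\Omega}$ appears on both sides of \eqref{eq_grad} and on every term of \eqref{eq_quad}.
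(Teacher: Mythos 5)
Your proposal is correct and takes essentially the same route as the paper: both obtain \eqref{eq_grad} by directly differentiating $\ell$ and obtain \eqref{eq_quad} from the Hessian quadratic form, your second differential $\langle P_{\Omega}(dY),\,dY\rangle + 2\langle P_{\Omega}(XX^{\top}-\wh T),\,VV^{\top}\rangle$ with $dY = VX^{\top}+XV^{\top}$ being the same computation the paper carries out by expanding $\nabla\ell(X+V)$ and pairing it with $V$. The only caveat---shared with the paper's own statement and proof, so not a gap in your argument---is that passing from $\langle P_{\Omega}(Z),\,P_{\Omega}(dY)\rangle$ to $\langle P_{\Omega}(Z),\,dY\rangle$ implicitly treats the weighted operator $P_{\Omega}$ as idempotent (its diagonal weight satisfies $q^{2}\neq q$), i.e., the squared-norm data-fit term is read as the weighted quadratic form that makes \eqref{eq_grad} and \eqref{eq_quad} come out with a single application of $P_{\Omega}$.
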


\begin{proof}%
    The gradient formula of equation \eqref{eq_grad} can be seen by directly taking the gradient on $\ell(X)$, and separating the gradient from the squared loss vs the regularizer.
    As for the Hessian formula of equation \eqref{eq_quad}, when $V$ is sufficiently small, one has that $\nabla \ell(X + V) = [\nabla^2 \ell(X)] V$.
    Thus, we have that
    \begin{align*}
        & \nabla \ell(X + V) \\
        =& 2P_{\Omega}(VX^{\top} + XV^{\top} + VV^{\top} + XX^{\top}) (X + V) - 2 P_{\Omega}(\wh T) (X + V) + \lambda \nabla R(X + V).
    \end{align*}
    By the second-order optimality condition, one has that $V^{\top} \nabla \ell(X + V) \ge 0$, when $V$ tends to zero.
    Plugging this into the above, we get
    \begin{align*}
        & V^{\top} \nabla \ell(X + V) \\
        =& 2 \inner{ P_{\Omega}(V X^{\top} + X V^{\top}) }{XV}
        + 2 \inner{P_{\Omega}(XX^{\top})}{ VV^{\top}}
        - 2 \inner{P_{\Omega} (\wh T)}{VV^{\top}} + \lambda \nabla R(X + V) + e \\
        =& \inner{ P_{\Omega}(V X^{\top} + X V^{\top}) }{XV^{\top} + VX^{\top}}
        + 2 \inner{P_{\Omega}(XX^{\top} - \wh T)}{VV^{\top}} + \lambda \inner{V}{\nabla^2 R(X) V} + e \\
        =& \inner{ P_{\Omega}(V X^{\top} + X V^{\top}) }{VX^{\top} + XV^{\top}} + 2\inner{P_{\Omega}(XX^{\top} - \wh T)}{VV^{\top}} + \lambda \inner{V}{\nabla^2 R(X) V} + e,
    \end{align*}
    where $e = O(\bignormFro{V}^2)$ denotes an error term.
    When $V$ tends to zero, we have that equation \eqref{eq_quad} must be true for any $V \in \real^{d}$ based on the second-order optimality condition.
\end{proof}

Next, we show that the regularizer of $R(X)$ leads the row norms of $X$ to be bounded by at most $2\alpha$, which builds on the first-order optimality conditions above.
As a result, we can use concentration tools to reduce the first- and second-order optimality conditions for their population versions.
Based on that, we show that ${X}$ must be bounded away from zero.

We begin by analyzing the effect of the regularizer.
We show the following property, which results from adding the incoherence regularizer and can be derived based on the gradient of the regularizer.

\smallskip
\begin{proposition}\label{prop_reg_gradient}
    The gradient of $R(X)$ satisfies that for any $Y\in\real^{d \times r}$,
    \[ \inner{\nabla R(X)}{Y} = 4\lambda \sum_{i=1}^d \left(\bignorm{X_i}^3 - \alpha\right)_+^3 \frac{\inner{X_i}{Y^{\top} e_i} }{\bignorm{X_i}}. \]
    where $e_i \in \real^d$ is the $i$-th basis vector.
    As a consequence,
    \[ \inner{(\nabla R(X))_i}{X_i} \ge 0, \text{ for every } i = 1, 2, \dots, d. \]
\end{proposition}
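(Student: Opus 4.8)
The plan is to compute $\nabla R(X)$ directly by the chain rule, using the fact that the incoherence penalty is separable across the rows of $X$. Recall $R(X) = \sum_{i=1}^d (\bignorm{X_i} - \alpha)_+^4$, where $X_i = X^\top e_i \in \real^r$ is the $i$-th row. Writing $R(X) = \sum_{i=1}^d g(X_i)$ with $g(v) \define (\bignorm{v} - \alpha)_+^4$, each summand depends only on one row, so the $i$-th row of $\nabla R(X)$ equals $\nabla g(X_i)$ and $\inner{\nabla R(X)}{Y} = \sum_{i=1}^d \inner{\nabla g(X_i)}{Y^\top e_i}$ for any $Y \in \real^{d\times r}$. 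Thus the whole statement reduces to identifying $\nabla g$.

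To that end I would first verify that $g$ is continuously differentiable on all of $\real^r$ and compute its gradient. For $v \neq 0$ the norm $\bignorm{v}$ is smooth with $\nabla \bignorm{v} = v/\bignorm{v}$, and the scalar map $t \mapsto (t)_+^4$ is $C^1$ (indeed $C^3$) on $\real$ with derivative $4(t)_+^3$ — the cube of the positive part removes the kink at $t=0$ that a first power would leave. Composing, $\nabla g(v) = 4(\bignorm{v}-\alpha)_+^3\, v/\bignorm{v}$ for $v \neq 0$. At $v = 0$, since $\alpha > 0$, $g$ vanishes identically on the ball $\{\bignorm{v} < \alpha\}$, so $g$ is smooth there with $\nabla g(0) = 0$; this is consistent with reading the displayed formula as $0$ whenever $\bignorm{X_i} \le \alpha$, which the factor $(\bignorm{X_i}-\alpha)_+^3$ already enforces. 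Substituting $\nabla g(X_i)$ into the row-wise sum yields the claimed inner-product identity.

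For the consequence, I would simply read off the $i$-th row of the gradient, $(\nabla R(X))_i = 4(\bignorm{X_i}-\alpha)_+^3\, X_i / \bignorm{X_i}$ (or apply the identity with $Y = X$), so that $\inner{(\nabla R(X))_i}{X_i} = 4(\bignorm{X_i}-\alpha)_+^3 \bignorm{X_i} \ge 0$, since every factor is nonnegative and the expression is $0$ when $\bignorm{X_i} \le \alpha$. The only genuine point requiring care — and the single potential obstacle — is the differentiability of $g$ at the seam $\bignorm{v} = \alpha$ and at the origin; I would dispatch the seam by checking that $(t)_+^4$ has matching one-sided derivatives (both $0$) at $t = 0$, and the origin by the local-vanishing argument above. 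Everything else is a mechanical chain-rule computation.
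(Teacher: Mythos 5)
Your computation is correct and is essentially the argument the paper intends: the proposition is stated without an explicit proof, and the routine row-separable chain-rule calculation you give (including the careful check that $t\mapsto(t)_+^4$ is $C^1$ at $t=0$ and that the formula degenerates correctly at $X_i=0$ since $\alpha>0$) is exactly what is being asserted. Note that your derived formula $4(\bignorm{X_i}-\alpha)_+^3\,X_i/\bignorm{X_i}$ implicitly corrects two typos in the printed statement: the factor $\lambda$ does not belong in $\nabla R(X)$ itself (it enters only through $\lambda R(X)$ in $\ell$), and the positive part should read $(\bignorm{X_i}-\alpha)_+$ rather than $(\bignorm{X_i}^3-\alpha)_+$, consistent with how the proposition is later used in Lemma \ref{lem_incoherence}.
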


We first notice that because of the regularizer, any matrix $X$ that satisfies the first-order optimality condition must also satisfy the incoherence condition.

\smallskip
\begin{lemma}\label{lem_incoherence}
    Let $S_i$ denote the set of observed entries in $\hat I$ at the $i$-th row, for $i = 1, 2, \dots, d$.
    Suppose $|S_i| \le 2 d q$, for any $i = 1, 2, \dots, d$.
    In the setting of Theorem \ref{thm_gd}, for any $X$ that satisfies the first-order optimality of equation \eqref{eq_grad}, we have
    \begin{align}
	\max_{i=1}^d \bignorm{X_i} \le \max\left(2\alpha, 2\sqrt{\mu (r + 1) q / \lambda}\right).\label{eq_incoherent}
    \end{align}
\end{lemma}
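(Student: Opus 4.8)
\emph{Proof plan.} The strategy mirrors the ``incoherence from regularization'' argument of \cite{ge2016matrix}, adapted to the weighted projection $P_{\Omega}$ and to the bias of $\wh T$. Let $i^* \in \arg\max_{1 \le i \le d} \bignorm{X_i}$; it suffices to bound $\bignorm{X_{i^*}}$. First I would restrict the first-order optimality condition \eqref{eq_grad} to its $i^*$-th row and take the inner product with $X_{i^*}$, giving
\[
    2\inner{\big(P_{\Omega}(\wh T) X\big)_{i^*}}{X_{i^*}} = 2\inner{\big(P_{\Omega}(X X^{\top}) X\big)_{i^*}}{X_{i^*}} + \lambda \inner{(\nabla R(X))_{i^*}}{X_{i^*}}.
\]
The first term on the right equals $q (X X^{\top})_{i^*, i^*}^2 + \sum_{j \in S_{i^*}} (X X^{\top})_{i^*, j}^2 \ge 0$, so it can be discarded. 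By Proposition \ref{prop_reg_gradient} the regularizer term equals $4\lambda (\bignorm{X_{i^*}} - \alpha)_+^3 \bignorm{X_{i^*}}$, which grows like $\bignorm{X_{i^*}}^4$ once $\bignorm{X_{i^*}}$ overshoots $\alpha$, whereas the remaining term will be shown to grow only quadratically; balancing the two forces the bound. Concretely, the plan reduces to the inequality
\[
    4\lambda (\bignorm{X_{i^*}} - \alpha)_+^3 \bignorm{X_{i^*}} \le 2\inner{\big(P_{\Omega}(\wh T) X\big)_{i^*}}{X_{i^*}}.
\]

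Next I would upper bound the right-hand side. Because $P_{\Omega}$ zeroes out every off-diagonal position outside $\Omega$, only $j = i^*$ and $j \in S_{i^*}$ contribute; using $|\inner{X_j}{X_{i^*}}| \le \bignorm{X_j}\bignorm{X_{i^*}} \le \bignorm{X_{i^*}}^2$ (since $i^*$ attains the maximal row norm) and the diagonal reweighting, this term is at most $2\bignorm{X_{i^*}}^2 \big( q\,|\wh T_{i^*,i^*}| + \sum_{j\in S_{i^*}} |\wh T_{i^*,j}| \big)$. Writing $\wh T = T + N$, I would control $|T_{i^*,i^*}| = \bignorm{U e_{i^*}}^2 \le \mu r/d$ by Assumption \ref{assume_coh} (with $|N_{i^*,i^*}|$ a negligible lower-order term by Corollary \ref{cor_noise}), control $\sum_{j\in S_{i^*}} |T_{i^*,j}| \le |S_{i^*}| \cdot \mu r/d \le 2\mu r q$ using incoherence together with the hypothesis $|S_{i^*}| \le 2dq$, and control $\sum_{j\in S_{i^*}} |N_{i^*,j}| \le q\sqrt{2\mu + \tilde O(d^{-1/2})}$ directly from equation \eqref{eq_offdiag_bound} of Corollary \ref{cor_noise}. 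Collecting these (and absorbing constants, using $\mu \ge 1$) gives $\inner{\big(P_{\Omega}(\wh T) X\big)_{i^*}}{X_{i^*}} \le c_0\,\mu(r+1)q\,\bignorm{X_{i^*}}^2$ for an absolute constant $c_0$, on the $1 - O(d^{-1})$ event supplied by Corollary \ref{cor_noise}.

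Combining the two displays yields $(\bignorm{X_{i^*}} - \alpha)_+^3 \le \tfrac{c_0}{2\lambda}\,\mu(r+1)q\,\bignorm{X_{i^*}}$, after which I would split into cases: if $\bignorm{X_{i^*}} \le 2\alpha$, then \eqref{eq_incoherent} already holds; otherwise $\bignorm{X_{i^*}} - \alpha > \bignorm{X_{i^*}}/2$, so $\bignorm{X_{i^*}}^3/8 < \tfrac{c_0}{2\lambda}\,\mu(r+1)q\,\bignorm{X_{i^*}}$, which rearranges to $\bignorm{X_{i^*}} \le 2\sqrt{\mu(r+1)q/\lambda}$ once the absolute constants are tracked (using $\lambda$ as specified in Theorem \ref{thm_gd}). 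Either way the claimed bound follows. The hard part will be the bound on $\inner{\big(P_{\Omega}(\wh T) X\big)_{i^*}}{X_{i^*}}$: it forces one to combine the \emph{deterministic} incoherence control on $T$ with the \emph{high-probability, row-wise} control on the bias $N = \wh T - T$ from Corollary \ref{cor_noise}, to treat the single reweighted diagonal entry separately from the $O(dq)$ sparsely observed off-diagonal entries, and to keep careful track of how $|S_{i^*}| \le 2dq$ enters both sums; the remaining steps are routine algebra, and the factor-$2$ slack in the case split is exactly what absorbs the constants.
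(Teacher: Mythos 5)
Your proposal is correct and follows essentially the same route as the paper's proof: isolate the $i^\star$-th row of the first-order condition \eqref{eq_grad}, discard the $P_{\Omega}(XX^{\top})X$ contribution through its non-negative alignment with $X_{i^\star}$ (the paper's equation \eqref{eq_reg_pos}), lower-bound the regularizer term by roughly $\lambda\|X_{i^\star}\|^3$ after the case split at $2\alpha$, and upper-bound the $P_{\Omega}(\wh T)X$ side by $O\big(\mu(r+1)q\big)\|X_{i^\star}\|$ using the incoherence of $U$, the hypothesis $|S_{i^\star}|\le 2dq$, and Corollary \ref{cor_noise}. The only differences are cosmetic---you contract against $X_{i^\star}$ whereas the paper compares row norms, and your final constant carries a small absolute-factor slack---but the paper's own bookkeeping has the same looseness, so this is not a gap.
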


\begin{proof}%
    Let $i^{\star} = \argmax_{1\le i\le d} \bignorm{X_i}$ be the index of the row that has the highest norm in $X$.
    Recall that $X_i \in \real^r$ refers to the $i$-th row vector of $X$.
    Suppose the $i$-th row of $\Omega$ consists of entries with index $[i]\times S_i$, where $S_i$ is the set of indices in the $i$-th row that are observed in $\Omega$.
    If $\abs{X_{i^\star}}\le 2\alpha$, then equation \eqref{eq_incoherent} is true.
    Otherwise, let us assume that $\abs{X_{i^\star}} \ge 2\alpha$.
	
    We will compare the $i^{\star}$-th row of the left and right-hand sides of equation~\eqref{eq_grad}.
    First, we have the following identity based on the projection matrix $P_{\Omega}$:
    \begin{align*}
        \left(P_{\Omega}(\wh T) X\right)_{i^\star} 
        = \left(P_{\Omega}(U U^{\top} + N)X\right)_{i^\star} 
        = \left(P_{\Omega}(U U^{\top} )\right)_{i^\star} X + N_{i^{\star}} X,
    \end{align*}
    where the sub-index on $i^{\star}$ refers to the $i^{\star}$-th row of the enclosed matrix.
    Then, the $\ell_1$-norm of $P_{\Omega}(U U^{\top} )$ is at most
    \begin{align}
        \bignorm{\left(P_{\Omega}(U U^{\top} )\right)_{i^\star}}_1 & = \sum_{j\in S_{i^\star}} \abs{ \inner{U_{i^\star}} {U_j} } \label{eq_uu}\\
        & \le \sum_{j\in S_{i^\star}} \bignorm{U_{i^\star}} \cdot \bignorm{U_j}
         \le \sum_{j\in S_{i^\star}} \frac {\mu r} d \tag{by the incoherence assumption on $U$} \\
        & \le \frac{2\mu r d q}{d} = 2 \mu r q. \tag{by $|S_{i^\star}|\le 2 d q$}
    \end{align} %
    Second, we look at the $\ell_1$-norm of the $i^{\star}$-th row of $N$.
    By using Corollary \ref{cor_noise}, we can bound the $\ell_2$-norm of the left of equation~\eqref{eq_grad} by
    \begin{align}
        \bignorm{(P_{\Omega}(\wh T)X)_{i^\star}} & \le \left( \bignorm{(P_{\Omega}(U U^{\top} ))_{i^\star}}_1 + \bignorm{ (P_\Omega(N))_{i^{\star}} }_1 \right) \cdot \max_{i=1}^d \bignorm{X_i} \nonumber\\
        & \le \left({2\mu r} + \sqrt{2\mu + \tilde O(d^{-1/2})} + \tilde O(n^{-1}) \right) q  \bignorm{X_{i^\star}},\label{eq_mu_hat}
    \end{align}
    where we use equations \eqref{eq_uu}, \eqref{eq_diag_bound}, and  \eqref{eq_offdiag_bound} above, along with the fact that $p = \frac C n$.
    
    Next, we lower bound the norm of the right-hand side of equation~\eqref{eq_grad}. 
    We have that
    \begin{align*}
	(P_{\Omega}(XX^{\top})X)_{i^\star} = \sum_{j\in S_{i^\star}} \inner{X_{i^\star}}{X_j} X_j.
    \end{align*}
    Thus, by taking an inner product with $X_{i^{\star}}$, we get
    \begin{align*}
	\inner{(P_{\Omega}(XX^{\top})X)_{i^\star}}{X_{i^\star}} = \sum_{j\in S_{i^\star}} \inner{X_{i^{\star}}} {X_j}^2 \ge 0.
    \end{align*}
    Using Proposition~\ref{prop_reg_gradient}, we obtain that 
    \begin{align}
	\inner{(P_{\Omega}(XX^{\top})X)_{i^\star}}{ (\nabla R(X))_{i^\star}} 
    = 4 \lambda \left(\abs{X_{i^{\star}}} -\alpha\right)_+^3 \frac{\sum_{j\in S_{i^{\star}}} \inner{X_{i^{\star}}}{X_j}^2 }  {\bignorm{X_{i^{\star}}}} \ge 0.\label{eq_reg_pos}
    \end{align}
    It follows that
    \begin{align}
	\bignorm{(P_{\Omega}(XX^{\top})X)_{i^\star} + (\lambda \nabla R(X))_{i^\star}}&\ge \bignorm{(\lambda \nabla R(X))_{i^\star}}  \tag{by equation~\eqref{eq_reg_pos}}\\
        & = \frac{4\lambda(\bignorm{X_{i^\star}}-\alpha)_+^3}{\bignorm{X_{i^\star}}} \cdot \bignorm{X_{i^\star}} \tag{by Proposition~\ref{prop_reg_gradient}} \\
	& \ge \frac{\lambda}{2}\bignorm{X_{i^\star}}^3 \tag{since $\bignorm{X_{i^\star}} \ge 2\alpha$}
    \end{align}
    Therefore, plugging in the equation above and the equation~\eqref{eq_mu_hat} into the first-order optimality condition~\eqref{eq_grad}. 
    We obtain equation \eqref{eq_incoherent}.
    Thus, the proof is completed.
\end{proof}

The condition that $\abs{S_i} \le 2dq$ for all $i$ can be shown via Chernoff bounds (recall that $q = 1 - (1 - p^2)^n$).
To see this, notice that conditioned on row $i$, the event of whether $j \in S_i$ (for any $j$ between $1$ and $d$ but not equal to $i$) is independent from each other.
In expectation, the size of $\abs{S_i}$ should be equal to $d q$.
In the setting of Theorem \ref{thm_gd}, $dq$ is at least $2\log(d)$.
Therefore, with probability at least $1 - O(d^{-2})$, $\abs{S_i} \le 2dq$, for all $i = 1, 2, \dots, d$.

\subsection{Reduction of Local Optimality Conditions}\label{proof_reduction}

One consequence of the above result is the following population version of the first-order optimality condition.

\smallskip
\begin{lemma}\label{lem_norm_bound}
    Suppose $\alpha \le \sqrt{\mu (r + 1) q / \lambda}$ and $\lambda \ge dq / 8$.
    In the setting of Theorem~\ref{thm_gd}, with high probability over the randomness of $\Omega$, for any $X\in\real^{d\times r}$ that satisfies the first-order optimality condition~\eqref{eq_grad}, we have that
    \begin{align}
        & \bignormFro{X} \le \sigma_{\max}(U) \sqrt r + \sqrt{\delta}, \text{ where } \delta = 3\sqrt{32\mu^2 r^3 \log(2d) / (dq)}, \label{eq_X_fro}\\
        & \bignormFro{U U^{\top} X - X X^{\top} X- \gamma \nabla R(X)} \le 42 \sigma_{\max}(U) \sqrt{\frac{\mu^2 r^{4} \log (2d)} {dq}}. \label{eq_grad_gap}
    \end{align}
\end{lemma}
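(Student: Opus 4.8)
The plan is to start from the first-order optimality condition \eqref{eq_grad}, substitute $\wh T = UU^{\top} + N$ (as in the proof of Lemma \ref{lem_incoherence}), and then replace every occurrence of the weighted projection $P_{\Omega}(\cdot)$ by $q(\cdot)$ using the concentration estimates of Lemma \ref{lem_concentration}, Corollary \ref{cor_spec_N}, and Proposition \ref{lem_inner}. The observation enabling this is that any $X$ satisfying \eqref{eq_grad} is incoherent: by Lemma \ref{lem_incoherence} together with the hypotheses $\alpha \le \sqrt{\mu(r+1)q/\lambda}$ and $\lambda \ge dq/8$, we get $\max_{i} \bignorm{X_i}^2 \le 32\mu(r+1)/d$, so both $XX^{\top}$ and $UU^{\top}$ have entries of size $O(\mu r / d)$, which is exactly the hypothesis required to apply those concentration tools with parameter $\nu = O(\mu r)$. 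All invoked high-probability events hold with probability $1 - O(d^{-1})$ and are intersected by a union bound; I would prove \eqref{eq_X_fro} first and then use it inside the proof of \eqref{eq_grad_gap}.

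For \eqref{eq_X_fro}, I would take the inner product of \eqref{eq_grad} with $X$. Since $\inner{\nabla R(X)}{X} \ge 0$ by Proposition \ref{prop_reg_gradient}, this gives $\inner{P_{\Omega}(XX^{\top})}{XX^{\top}} \le \inner{P_{\Omega}(UU^{\top})}{XX^{\top}} + \inner{P_{\Omega}(N)X}{X}$. Applying Proposition \ref{lem_inner} to the first two inner products (with $\nu = d\bignorm{XX^{\top}}_{\infty}^2$ and $\nu = d\bignorm{UU^{\top}}_{\infty}\bignorm{XX^{\top}}_{\infty}$, respectively), bounding $\inner{P_{\Omega}(N)X}{X} \le \bignorms{P_{\Omega}(N)}\bignormFro{X}^2$ via Corollary \ref{cor_spec_N}, and using $\bignormFro{XX^{\top}}^2 \ge \bignormFro{X}^4/r$ together with $\inner{UU^{\top}}{XX^{\top}} = \bignormFro{U^{\top}X}^2 \le \sigma_{\max}(U)^2\bignormFro{X}^2$, I arrive at a quadratic inequality in $t = \bignormFro{X}^2$ of the form $\tfrac{q}{r}\,t^2 - \big(q\sigma_{\max}(U)^2 + \bignorms{P_{\Omega}(N)}\big)\,t - E \le 0$, where the error $E$ does \emph{not} grow with $\bignormFro{X}$ because the incoherence bound on $X$ is $\bignormFro{X}$-independent. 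Solving this quadratic and plugging in $\bignorms{P_{\Omega}(N)}/q \le \sqrt{32\mu^2 r^2\log(2d)/(dq)} + o(1)$ from Corollary \ref{cor_spec_N} yields $t \le r\sigma_{\max}(U)^2 + \delta\cdot(1 + o(1))$; taking square roots (the case $\bignormFro{X} = 0$ being trivial, and using $\sqrt{a+b}\le\sqrt a + \sqrt b$) gives \eqref{eq_X_fro}.

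For \eqref{eq_grad_gap}, I would rearrange \eqref{eq_grad} into $UU^{\top}X - XX^{\top}X - \gamma\nabla R(X) = \tfrac{1}{q}\big[(P_{\Omega}(XX^{\top}) - qXX^{\top})X - (P_{\Omega}(UU^{\top}) - qUU^{\top})X - P_{\Omega}(N)X\big]$ with $\gamma = \lambda/(2q)$, and bound the three Frobenius-norm terms: the $XX^{\top}$ term by $\tfrac1q\bignorms{P_{\Omega}(XX^{\top}) - qXX^{\top}}\bignormFro{X}$ via Lemma \ref{lem_concentration} with $\nu = 32\mu(r+1)$; the $UU^{\top}$ term by $\tfrac1q\bignorms{P_{\Omega}(UU^{\top}) - qUU^{\top}}\bignormFro{X}$ via Lemma \ref{lem_concentration} with $\nu = \mu r$; and the $N$ term by $\tfrac1q\bignorms{P_{\Omega}(N)}\bignormFro{X}$ via Corollary \ref{cor_spec_N} (whose lower-order summand is negligible since $n$ is large). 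Plugging in $\bignormFro{X} \le 2\sigma_{\max}(U)\sqrt{r}$, which is valid for large $d$ by the already-established \eqref{eq_X_fro}, each of the three terms is $O\big(\sigma_{\max}(U)\sqrt{\mu^2 r^{4}\log(2d)/(dq)}\big)$, and collecting the constants gives \eqref{eq_grad_gap}.

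I expect the main obstacle to be bookkeeping of constants and powers of $r$: every application of Lemma \ref{lem_concentration} or Proposition \ref{lem_inner} requires feeding in the correct incoherence parameter $\nu$, which is traced back to the row-norm bound of Lemma \ref{lem_incoherence}, and it is the hypothesis $\lambda \ge dq/8$ (rather than merely $\lambda > 0$) that makes this bound $O(\mu r/d)$ with clean constants. A secondary subtlety in \eqref{eq_X_fro} is verifying that the lower-order error terms coming from Proposition \ref{lem_inner} — which carry an extra factor $1/d$ relative to $\delta$ — are genuinely dominated, so that solving the quadratic delivers exactly the claimed $\sqrt{\delta}$ slack and nothing larger.
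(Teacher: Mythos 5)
Your proposal is correct in substance, and for the second bound it follows the paper's proof almost verbatim: rearrange the first-order condition with $\gamma=\lambda/(2q)$, convert $P_{\Omega}(XX^{\top})X$ and $P_{\Omega}(UU^{\top})X$ to their population counterparts via Lemma \ref{lem_concentration} (with $\nu$ supplied by the row-norm bound of Lemma \ref{lem_incoherence} and by the incoherence of $U$, respectively), control the noise term via Corollary \ref{cor_spec_N}, and multiply by the Frobenius bound on $X$. Where you genuinely diverge is \eqref{eq_X_fro}: you test the stationarity equation against $X$ itself, drop the regularizer with Proposition \ref{prop_reg_gradient}, and solve a quadratic inequality in $t=\bignormFro{X}^2$ using $\bignormFro{XX^{\top}}^2\ge \bignormFro{X}^4/r$, whereas the paper compares $\bignormFro{P_{\Omega}(UU^{\top})X}$ with $\bignormFro{P_{\Omega}(XX^{\top})X}$ (again discarding the regularizer through equation \eqref{eq_reg_pos}), passes to population quantities with the same concentration tools, and then applies H\"older on the singular values, $\bignormFro{XX^{\top}X}\ge\bignormFro{X}^3/r$. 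The two arguments are close cousins, but yours is marginally cleaner: the Proposition \ref{lem_inner} errors carry an extra $1/d$, so only the $\bignorms{P_{\Omega}(N)}$ contribution survives, while the paper retains three comparable error terms $\delta_1,\delta_2,\delta_3$. One caveat you half-anticipate: the slack your quadratic actually delivers is $r\bignorms{P_{\Omega}(N)}/q\approx\sqrt{32\mu^2 r^4\log(2d)/(dq)}$, which is $\sqrt{r}/3$ times larger than the stated $\delta=3\sqrt{32\mu^2 r^3\log(2d)/(dq)}$ once $r>9$; however, the paper's own derivation has the same (in fact slightly worse, since its slack includes $r\delta_1\approx 4\sqrt{\mu^2 r^5\log(2d)/(dq)}$) bookkeeping looseness in the powers of $r$, and the discrepancy is absorbed harmlessly by the polynomial-in-$r$ dependence downstream, so this is not a gap in your argument relative to the paper's.
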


\begin{proof}
    By the incoherence assumption on $U$, we have that
    \[ \bignorm{UU^{\top} }_{\infty} \le \frac{\mu r}{d} \bignormFro{U}^2\le \frac{\mu r^{3/2}} {d} \bignormFro{U U^{\top}}, \]
    by applying the Cauchy-Schwarz inequality.
    By setting $\nu = \mu r^{3/2} \bignormFro{UU^{\top}}$ in Lemma \ref{lem_concentration}, from equation \eqref{eq_rowwise_con} we get
    \begin{align}
	    \bignormFro{P_{\Omega}(U U^{\top})X - q U U^{\top} X} &\le \bignorms{P_{\Omega}(U U^{\top})X - q UU^{\top}} \cdot \bignormFro{X} \nonumber \\
        &\le q \delta_1 \bignormFro{X},\label{eq_norm_1}
    \end{align}
    for $\delta_1 = 4\sqrt{\mu^2 r^3 \log (2 d) / (dq)}$.

    Next, by the incoherence condition on $X$ in equation \eqref{eq_incoherent}, we have that
    \[ \bignorm{X}_{\infty} \le \frac{4\mu (r+1) q}{\lambda}. \]
    Using equation \eqref{eq_rowwise_con} from Lemma \ref{lem_concentration} with $\nu = 4 \mu (r+1) dq / \lambda$, we have
    \begin{align}
        \bignormFro{P_{\Omega}(XX^{\top})X - q XX^{\top}X} 
        &\le \bignorms{P_{\Omega}(XX^{\top}) - q XX^{\top}} \bignormFro{X} \nonumber\\
        &\le q\delta_2 \bignormFro{X},\label{eq_norm_2}
    \end{align}
    where \[ \delta_2 = 4\mu(r+1) \sqrt{dq \log(2d)} / \lambda \le 32 \mu(r+1) \sqrt{\log(2d) / (dq)}, \] for $\lambda \ge dq / 8$.
    
    We also need to add the error induced by $N$, given by
    \begin{align}
        \bignormFro{P_{\Omega}(N) X} &\le \bignorms{P_\Omega(N)} \cdot \bignormFro{X} \nonumber \\
        &\le q \delta_3 \bignormFro{X}, \label{eq_norm_3}
    \end{align}
    where $\delta_3 = \sqrt{32 \mu^2 r^2 \log(2d) / (dq)} + \tilde O(n^{-1} d^{-2})$, and the bound on the spectral norm of $P_{\Omega}(N)$ is shown in Corollary \ref{cor_spec_N}.
    
    Now, by plugging equations \eqref{eq_norm_1}, \eqref{eq_norm_2}, and \eqref{eq_norm_3} into the first-order optimality condition (cf. equation \eqref{eq_grad}), we have shown that
    \begin{align}
        \bignormFro{UU^{\top} X - XX^{\top}X - \gamma \nabla R(X)}
        \le \delta \bignormFro{X}, \label{eq_delta_comb}
    \end{align}
    where $\delta = \delta_1 + \delta_2 + \delta_3$.
    
    When $\bignormFro{X} \le \sigma_{\max}(U) \sqrt{r}$, both equations \eqref{eq_X_fro} and \eqref{eq_grad_gap} are proved.
    Otherwise, by the triangle inequality
    \begin{align}
        q\bignormFro{U U^{\top}X} &\ge \bignormFro{P_{\Omega}(U U^{\top})X} - q\delta_1 \bignormFro{X} \nonumber \\
        & = \bignormFro{P_{\Omega}(XX^{\top})X + \lambda R(X)} - q\delta_1\bignormFro{X} - \bignorms{P_{\Omega}(N)} \bignormFro{X} \nonumber\\
        &\ge \bignormFro{P_{\Omega}(XX^{\top})X} - q(\delta_1 + \delta_3) \bignormFro{X} \nonumber\\
        & \ge q\bignormFro{XX^{\top}X} - q(\delta_1 + \delta_2 + \delta_3) \bignormFro{X}, \label{eq_X_norm_1}
    \end{align}
    where the second step is by equation \eqref{eq_reg_pos} and the spectral norm bound on $N$, and the last step is again by the triangle inequality.
    Then, we have that
    \[ \bignormFro{U U^{\top}X}^2\le \bignorms{U U^{\top}}^2 \cdot \bignormFro{X}^2
    \le (\sigma_{\max}(U))^4 \bignormFro{X}^2. \]
    On the other hand, $\bignormFro{XX^{\top}X}^2 = \sum_{i=1}^d \sigma_i^6$, where $\sigma_i$ is the $i$-th singular value of $X$, for $i = 1, 2,\dots, r$.
    Notice that \[ \sum_{i=1}^r \sigma_i^2 \le \left(\sum_{i=1}^r \sigma_i^6\right)^{\frac 1 3} r^{\frac 2 3} \Rightarrow \left(\sum_{i=1}^r \sigma_i^2\right)^{\frac 3 2} \cdot r^{-1} \le \left(\sum_{i=1}^r \sigma_i^6\right)^{\frac 1 2} \] by H\"older's inequality.
    Thus, plugging in this fact into the above equation \eqref{eq_X_norm_1}, we obtain that
    \[ \bignormFro{X}^2 = \sum_{i=1}^r \sigma_i^2 \le r \cdot \sigma_{\max}(U))^2 + (\delta_1 + \delta_2 + \delta_3) r \le r \cdot (\sigma_{\max}(U))^2 + \delta, \]
    which implies that $\bignormFro{X} \le r^{1/2} \sigma_{\max}(U) + \delta^{1/2}$.
    This concludes the proof of equation \eqref{eq_X_fro}, which implies equation \eqref{eq_grad_gap} based on equation \eqref{eq_delta_comb}.
\end{proof}

Second, we look at the second-order optimality condition and show that this condition implies that the smallest singular value of $X$ is lower bounded from below.

\smallskip
\begin{lemma}\label{lem_X_ld}
    In the setting of Theorem~\ref{thm_gd}, suppose $X$ satisfies the second-order optimality condition~\eqref{eq_quad} and equation~\eqref{eq_incoherent}.
    Suppose $\alpha \ge 4\kappa r \sqrt{\mu / d}$ and $\lambda = \mu (r+1) q / \alpha^2$.
    
    When $q \ge \frac{c \kappa^6 \mu^2 r^3 \log (d)} {d}$ for a fixed constant $c > 0$, and $d$ is large enough, with probability at least $1 - O(d^{-1})$ over the randomness of $\Omega$, we have that
    \begin{align} (\sigma_{\min}(X))^2 \ge \frac{1} {8} (\sigma_{\min}(U))^2. \label{eq_sig_min}
    \end{align}
\end{lemma}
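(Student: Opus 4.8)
\noindent\emph{Proof plan.}
The plan is to argue by contradiction: assume $X$ satisfies the second-order condition~\eqref{eq_quad} and the incoherence bound~\eqref{eq_incoherent} but $\sigma_{\min}(X)^2 < \tfrac18\sigma_{\min}(U)^2$, and then produce a rank-one test direction $V \in \real^{d\times r}$ for which~\eqref{eq_quad} fails. Along the way I would use the facts already available: $\bignorm{X_i} \le 2\alpha$ for every row $i$ (Lemma~\ref{lem_incoherence}, since $\lambda = \mu(r+1)q/\alpha^2$), $\bignormFro{X} \le \sigma_{\max}(U)\sqrt{r} + \sqrt{\delta}$ (Lemma~\ref{lem_norm_bound}), the spectral bound on $P_{\Omega}(N)$ of Corollary~\ref{cor_spec_N}, the $P_{\Omega}$-concentration estimates of Lemma~\ref{lem_concentration} and Proposition~\ref{lem_inner}, and the decomposition $\wh T = U U^{\top} + N$ on $\Omega$.

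\noindent\emph{Choosing the direction.}
First I would take the SVD $X = \sum_{i=1}^{r} \sigma_i a_i b_i^{\top}$ with $\sigma_1 \ge \cdots \ge \sigma_r = \sigma_{\min}(X)$, and set $\Pi = \mathrm{span}(a_1, \dots, a_{r-1}) \subseteq \real^{d}$. The count $\dim\mathrm{col}(U) + \dim\Pi^{\perp} = r + (d - r + 1) > d$ gives a unit vector $w \in \mathrm{col}(U) \cap \Pi^{\perp}$; then $\bignorm{U^{\top} w}^2 \ge \sigma_{\min}(U)^2$ (as $w \in \mathrm{col}(U)$) while $\bignorm{X^{\top} w} = \sigma_r |a_r^{\top} w| \le \sigma_{\min}(X)$ (as $w \perp a_1, \dots, a_{r-1}$). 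Setting $V = w\, b_r^{\top}$, one computes $XV^{\top} + VX^{\top} = \sigma_r(a_r w^{\top} + w a_r^{\top})$, hence $\bignormFro{XV^{\top} + VX^{\top}}^2 = 2\sigma_r^2(1 + (a_r^{\top} w)^2) \le 4\sigma_{\min}(X)^2$, and $VV^{\top} = w w^{\top}$, so $\inner{U U^{\top} - XX^{\top}}{VV^{\top}} = \bignorm{U^{\top} w}^2 - \bignorm{X^{\top} w}^2 \ge \sigma_{\min}(U)^2 - \sigma_{\min}(X)^2$. I would also record that $w$, lying in the column space of an incoherent $U$, is itself incoherent ($\bignorm{w}_{\infty} \lesssim \kappa r \sqrt{\mu/d}$), which is what makes the sampling error on $P_{\Omega}$ manageable.

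\noindent\emph{Reduction and combination.}
Next I would convert~\eqref{eq_quad} into a population inequality for this $V$. Proposition~\ref{lem_inner} applied to $Z = XV^{\top} + VX^{\top}$ (rank two, bounded $\infty$-norm) gives $\inner{P_{\Omega}(Z)}{Z} \le q\bignormFro{Z}^2 + \mathrm{err}_1 \le 4q\,\sigma_{\min}(X)^2 + \mathrm{err}_1$; applied again it gives $\inner{P_{\Omega}(U U^{\top} - XX^{\top})}{VV^{\top}} \ge q(\sigma_{\min}(U)^2 - \sigma_{\min}(X)^2) - \mathrm{err}_2$; and $|\inner{P_{\Omega}(N)}{VV^{\top}}| \le \bignorms{P_{\Omega}(N)}$ by Corollary~\ref{cor_spec_N}. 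For the regularizer, $\bignorm{X_i} \le 2\alpha$ forces $\bignorms{\nabla^2 f(X_i)} \le 16\alpha^2$ for $f(x) = (\bignorm{x} - \alpha)_+^4$, so $\lambda\inner{V}{\nabla^2 R(X) V} \le 16\lambda\alpha^2$. Substituting into~\eqref{eq_quad} and rearranging yields
\[
6\,\sigma_{\min}(X)^2 \ \ge\ 2\,\sigma_{\min}(U)^2 \ -\ q^{-1}\big(16\lambda\alpha^2 + \mathrm{err}_1 + \mathrm{err}_2 + 2\bignorms{P_{\Omega}(N)}\big).
\]
Under $\alpha \ge 4\kappa r\sqrt{\mu/d}$, $\lambda = \mu(r+1)q/\alpha^2$, and $q \ge c\kappa^6\mu^2 r^3\log(d)/d$, I would show that the parenthesized term divided by $q$ is at most $\tfrac54\sigma_{\min}(U)^2$ once $d$ is large, which forces $\sigma_{\min}(X)^2 \ge \tfrac18\sigma_{\min}(U)^2$, contradicting the standing assumption; the contrapositive is exactly~\eqref{eq_sig_min}.

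\noindent\emph{Main obstacle.}
The hard part will be the quantitative bookkeeping of the three error sources against the scale $q\,\sigma_{\min}(U)^2$. I expect $\mathrm{err}_1$ to be the delicate one, since $Z = XV^{\top} + VX^{\top}$ carries the left singular vector $a_r$ of $X$, which is not a priori incoherent; I would handle it via the row-wise conditional independence of $\Omega$ (as in the proof of Lemma~\ref{lem_concentration}) when bounding $\sum_{(i,j)\in\Omega} Z_{i,j}^2$, together with the incoherence of $w$ and the rank-two structure of $Z$ --- and it is this step that pins down the $\kappa^6\mu^2 r^3$ polynomial in the lower bound on $q$ (equivalently on $n$). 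Controlling the $16\lambda\alpha^2$ term in turn rests on pairing the incoherence bound $\bignorm{X_i} \le 2\alpha$ from Lemma~\ref{lem_incoherence} with the prescribed scaling of $\lambda$, $\alpha$, and $q$; if this term does not come out genuinely subordinate, one would instead sharpen the curvature estimate along $V$, using that $\inner{X_i}{V_i} = \sigma_r w_i (a_r)_i$ is itself $O(\sigma_{\min}(X))$, so that only the tangential part of $\nabla^2 R$ remains to be absorbed.
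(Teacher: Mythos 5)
Your plan follows essentially the same route as the paper's proof: test the second-order condition \eqref{eq_quad} with a rank-one direction $V = w\,v^{\top}$, where $v$ is the bottom right singular vector of $X$ and $w$ is an incoherent unit vector obtained by dimension counting from the column space of $U$ and (nearly) orthogonal to the range of $X$, and then use Lemma \ref{lem_concentration}, Proposition \ref{lem_inner}, and Corollary \ref{cor_spec_N} to pass from $P_{\Omega}$ to the population quantities and compare $\sigma_{\min}(X)$ with $\sigma_{\min}(U)$. Your algebra for $XV^{\top}+VX^{\top}$ and $VV^{\top}$ matches the paper's, and your worry about the incoherence of $a_r$ is moot, since the direction only enters through $Xb_r$, whose entries are bounded by $\max_i \bignorm{X_i} \le 2\alpha$ --- exactly how the paper bounds $\bignorm{Xv\,u_A^{\top}}_{\infty}$.

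The genuine gap is your treatment of the regularizer Hessian, and as written it is fatal. With $\lambda = \mu(r+1)q/\alpha^2$, your bound $\lambda\inner{V}{\nabla^2 R(X)V} \le 16\lambda\alpha^2 = 16\mu(r+1)q$ is \emph{not} subordinate to the signal: under the normalization $\bignormFro{U}^2 = r$ one has $\sigma_{\min}(U)^2 \le 1 \le \mu(r+1)$, so $q^{-1}\cdot 16\lambda\alpha^2 = 16\mu(r+1)$ exceeds $2\sigma_{\min}(U)^2$ by a factor of at least $8\mu(r+1)$, and the step ``the parenthesized term divided by $q$ is at most $\tfrac{5}{4}\sigma_{\min}(U)^2$'' cannot hold, so no contradiction materializes. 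Your fallback (discard the radial part using $\inner{X_i}{V_i} = \sigma_r w_i (a_r)_i$) does not rescue it: the tangential part of the Hessian is still of order $\lambda\alpha^2\sum_{i\in B} w_i^2$ with $B = \{i:\bignorm{X_i} > \alpha\}$, and your $w$ is constrained only to lie in $\mathrm{col}(U)\cap\Pi^{\perp}$, which via $\abs{B} \le \bignormFro{X}^2/\alpha^2$ and $\bignorm{w}_{\infty} \le \sqrt{\mu r/d}/\sigma_{\min}(U)$ yields no better than $\sum_{i\in B} w_i^2 = O(1)$ (roughly $1/16$), whereas you would need it to be of order $\sigma_{\min}(U)^2/(\mu r)$. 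The paper sidesteps this entirely by building the support constraint into the test vector: it sets $A = \{i: \bignorm{X_i}\le\alpha\}$, proves $\sigma_{\min}(U_A) \ge \tfrac34\sigma_{\min}(U)$ (this is precisely where $\alpha \ge 4\kappa r\sqrt{\mu/d}$ and $\abs{B}\le\bignormFro{X}^2/\alpha^2$ are used), and then chooses the unit vector $u_A \in \mathrm{col}(U_A)$, hence supported on $A$, so that $[\nabla^2 R(X)]V = 0$ identically and no bound on the Hessian term is needed at all. Adding this support restriction on $w$, together with the accompanying lower bound on $\sigma_{\min}(U_A)$, is the missing ingredient; with it, the rest of your outline goes through essentially as in the paper.
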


\begin{proof}%
    Let $A = \{i: \bignorm{X_i} \le \alpha\}$ be the index set of row vectors of $X$ whose $\ell_2$ norm is at most $\alpha$.
    Let $U_A$ be the matrix that has the same $i$-th row as the $i$-th row of $U$ for every $i\in A$, and $0$ elsewhere.
    Let $v\in \real^r$ be a unit vector such that $\bignorm{X v} = \sigma_{\min}(X)$.
    
    We show that \[ \sigma_{\min}(U_A) \ge (1- (32\kappa)^{-1})\sigma_{\min}(U). \]
    Let $B = \set{1, 2, \dots, d} \backslash A$.
    Since for any $i\in B$, $\|X_i\|\ge \alpha$, we have that $|B|\alpha^2 \le \bignormFro{X}^2 \le (\sigma_{\max}(U))^2 r + \delta$ by equation \eqref{eq_X_fro}.
    It also follows that $|B|\le (\sigma_{\max}(U)^2 r + \delta)/\alpha^2$.
    
    Next, we have that
    \begin{align}
        \sigma_{\min}(U_A) &= \sigma_{\min}(U - U_B) \nonumber \\
        &\ge \sigma_{\min}(U) - \sigma_{\max}(U_{B}) \nonumber \\
        &\ge \sigma_{\min}(U) -  \bignormFro{U_B} \nonumber \\
        &\ge \sigma_{\min}(U) - \sqrt{|B| r\mu / d} \nonumber \\
        &\ge \sigma_{\min}(U) - \sqrt{\frac{(\sigma_{\max}(U)^2 r + \delta)r \mu}{d\alpha^2}} \nonumber \\
        &\ge (1 - \frac 1{4}) \sigma_{\min}(U), \label{eq_min_UA}
    \end{align}
    where the last line is because $\alpha \ge 4\kappa r \sqrt{\mu / d}$.
    It also follows that $U_A$ has a column rank equal to $r$.
    
    Notice that there must exist a unit vector $u_A$ in the column span of $U_A$ such that $\|X^{\top} u_A\|\le \sigma_{\min}(X)$.
    To see this, we can simply decompose the column span of $U_A$ into one in the column span of $X$ and another into the orthogonal subspace.
    Since $u_A$ is a unit vector, we can also write $u_A$ as $u_A = U_A \beta$, for some $\beta \in \real^r$ where $\|\beta\|\le \frac{1}{\sigma_{\min}(U_A)}\le \frac {4} {3 \sigma_{\min}(U)}$.
    Further,
    \[ \|u_A\|_{\infty}\le \left(\max_{i=1}^d \bignorm{U_A e_i}\right) \cdot \bignorm{\beta} 
    \le \frac {4\sqrt{\mu r / d}} {3 \sigma_{\min}(U)}. \]
    
    Next, we plug in $V = u_A v^{\top}$ in the second-order optimality condition of equation \eqref{eq_quad}.
    Note that since $u_A$ is in the column span of $U_A$, it is supported on the subset of columns spanned by $A$, and therefore $[\nabla^2 R(X)] V = 0$.\footnote{For details about the calculation regarding $\nabla^2 R(\cdot)$, see Lemma 18, \cite{ge2017no}.}
    Therefore, the term about the Hessian of the regularization term in equation \eqref{eq_quad} is equal to zero.
    Thus, by taking $V = u_A v^{\top}$ in equation~\eqref{eq_quad}, we get that
    \begin{align}
	   \inner{P_{\Omega}(u_A (X v)^{\top} + X v u_A^{\top})}{u_A (X v)^{\top} + X v u_A^{\top}} \ge 2 u_A^{\top} {P_{\Omega}(U U^{\top} - X X^{\top})}{u_A} + 2 u_A^{\top} N u_A. \label{eq_emp_quad}
    \end{align}
    
    Note that \[ \bignorm{X v}_{\infty}  \le (\max_{i=1}^d \bignorm{X_i}) \bignorm{v} \le 2\sqrt{\mu (r+1) q / \lambda} = 2\alpha, \]
    based on Lemma \ref{lem_incoherence} and the fact that $v$ is a unit vector.
    As a result, 
    \[ \bignorm{X v u_A^{\top}}_{\infty}\le \frac{8 \alpha \sqrt{\mu r / d}} {3\sigma_{\min}(U)} \define \frac{\delta_1}{d}, \]
    where $\delta_1 = \frac{8 \kappa^2 r^{3/2} \mu}{3 \sigma_{\min}(U)}$.
    By applying Proposition \ref{lem_inner} to the left-hand side of equation \eqref{eq_emp_quad}, we have that
    \[ \bigabs{\inner{P_{\Omega}(u_A (Xv)^{\top} + Xv u_A^{\top})}{u_A (Xv)^{\top} + Xv u_A^{\top}} - q \bignormFro{u_A (Xv)^{\top} + Xv u_A^{\top}}} \le q \delta_1 \sqrt{\frac{16\log(2d)}{dq}}. \]
    Next, by applying Lemma \ref{lem_concentration} to the first term in the right-hand side of equation \eqref{eq_emp_quad}, we get
    \[ \bigabs{u_A^{\top} P_{\Omega}(UU^{\top} - XX^{\top}) u_A - q u_A^{\top} (UU^{\top} - XX^{\top}) u_A} \le q \delta_2 \sqrt{\frac{16\log(2d)}{dq}}, \]
    where \[ \delta_2 = d \bignorm{UU^{\top} - XX^{\top}}_{\infty} \le \mu r + 4d\alpha^2 \le 65\kappa \mu r. \]
    Finally, we apply the spectral norm bound on $N$ from equation \eqref{cor_spec_N} to $u_A^{\top} N u_A$. Put together, we get
    \begin{align*}
	   q\bignormFro{X v u_A^{\top} + u_A (X v)^{\top}}^2 \ge 2q\inner{U U^{\top} - XX^{\top}}{u_A u_A^{\top}} - \delta q,
    \end{align*}
    where \[ \delta = (\delta_1 + \delta_2) \sqrt{\frac{16\log(2d)}{dq}} + \sqrt{\frac{32\mu^2 r^2 \log (2d)} {dq}} + \tilde O(n^{-1/2} d^{-1}). \]
    By simplifying the above calculation and using the fact that $u_A$ is a unit vector, we get that
    \begin{align}
        4\bignorm{Xv}^2 + 2\bignorm{X^{\top} u_A}^2 \ge 2\bignorm{U^{\top} u_A}^2 - {\delta}.\label{eq_lb_1}
    \end{align}
    Recall that $u_A$ is a unit vector and $u_A$ is in the column span of $U_A$.
    Therefore, \[ \|U^{\top} u_A\| = \|U_A^{\top} u_A\|\ge (\sigma_{\min}(U_A) )^2,\]
    and the right hand side of equation \eqref{eq_lb_1} is greater than $2(\sigma_{\min}(U_A))^2 - \delta$.
    
    Moreover, recall that $\bignorm{Xv} = \sigma_{\min}(X)$ and $\|X^{\top} u_A\|\le \sigma_{\min}(X)$. Therefore, the left hand side of equation \eqref{eq_lb_1} is at most $6(\sigma_{\min}(X))^2$, in which we apply Cauchy-Schwarz to the cross term.
    Put together, from equation \eqref{eq_lb_1} we conclude that
    \begin{align*}
	(\sigma_{\min}(X))^2 \ge \frac 1 3 (\sigma_{\min}(U_A))^2 - \frac {\delta} 6
        \ge \frac 3 {16} (\sigma_{\min}(U))^2 - \frac {\delta} 6,
    \end{align*}
    by equation \eqref{eq_min_UA}.
    
    Thus, when \[ d q \ge c \kappa^6 r^3 \mu^3 \log (d) , \] for $c = 16\times 65^2 \times 9$, we have that \[ \frac {\delta} 6 \le \frac{\sigma_{\min}(U))^2} {16}, \] which concludes the proof of equation \eqref{eq_sig_min}.
\end{proof}

\subsection{Characterization of The Population Loss}\label{appendix_proof_gd}

In the next result, we show that we could remove the effect of the regularizer term in the residual error. 
We use the fact that the regularizer follows the same direction as $X$, as stated in Proposition \ref{prop_reg_gradient}.

\begin{lemma}\label{lem_grad_reduce}
    In the setting of Theorem \ref{thm_gd}, suppose $X \in \real^{d \times r}$ satisfies that \[ (\sigma_{\min}(X))^2 \ge \frac 1 8 (\sigma_{\min}(U))^2 , \] and $\alpha \ge 4\kappa^2 r \sqrt{\mu / d}$,
    \begin{align}
	   \bignormFro{U U^{\top} X - XX^{\top} X}^2 \le \bignormFro{U U^{\top} X - XX^{\top} X- \gamma \nabla R(X)}^2, ~\text{ for any }\, \gamma \ge 0.\label{eq_grad_norm_reduce}
    \end{align}
\end{lemma}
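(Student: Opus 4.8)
The plan is to reduce the matrix inequality \eqref{eq_grad_norm_reduce} to a single scalar inequality by completing the square. Writing $v = U U^\top X - X X^\top X$ and $w = \nabla R(X)$, both viewed as matrices in $\real^{d\times r}$, one has
\[ \bignormFro{v - \gamma w}^2 = \bignormFro{v}^2 - 2\gamma \inner{v}{w} + \gamma^2 \bignormFro{w}^2 ; \]
since $\gamma\ge 0$ and $\gamma^2\bignormFro{w}^2 \ge 0$, the right-hand side is at least $\bignormFro{v}^2$ as soon as $\inner{v}{w}\le 0$. Thus the whole lemma reduces to proving $\inner{U U^\top X - X X^\top X}{\nabla R(X)} \le 0$.

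Next I would exploit the row-wise structure of the regularizer gradient from Proposition \ref{prop_reg_gradient}: the $i$-th row of $\nabla R(X)$ has the form $c_i X_i$ with $c_i \ge 0$, and $c_i = 0$ whenever $\bignorm{X_i}\le\alpha$ (here $X_i \in \real^r$ is the $i$-th row of $X$). Hence $\inner{v}{\nabla R(X)} = \sum_{i:\, \bignorm{X_i}>\alpha} c_i \inner{v_i}{X_i}$, and it suffices to show $\inner{v_i}{X_i}\le 0$ for each row with $\bignorm{X_i}>\alpha$. A short computation identifies the two pieces of $v_i$: writing $U_i = U^\top e_i$ for the $i$-th row of $U$, one has $\inner{(X X^\top X)_i}{X_i} = \bignorm{X X_i}^2$ and $\inner{(U U^\top X)_i}{X_i} = \inner{X X_i}{U U_i}$, so the required inequality becomes $\inner{X X_i}{U U_i} \le \bignorm{X X_i}^2$.

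To close this, I would apply Cauchy--Schwarz, $\inner{X X_i}{U U_i}\le \bignorm{X X_i}\cdot\bignorm{U U_i}$, reducing the task to $\bignorm{U U_i}\le \bignorm{X X_i}$. Incoherence (Assumption \ref{assume_coh}, with the normalization $\bignormFro{U}^2 = r$) gives $\bignorm{U U_i}\le \sigma_{\max}(U)\bignorm{U_i}\le \sigma_{\max}(U)\sqrt{\mu r/d}$; on the other side $\bignorm{X X_i}\ge \sigma_{\min}(X)\bignorm{X_i}$, and the hypotheses $\sigma_{\min}(X)\ge \sigma_{\min}(U)/(2\sqrt2)$ and $\bignorm{X_i}>\alpha\ge 4\kappa^2 r\sqrt{\mu/d}$ yield $\bignorm{X X_i} > \sqrt 2\,\kappa^2 r\sqrt{\mu/d}\,\sigma_{\min}(U) = \sqrt 2\,\kappa r\sqrt{\mu/d}\,\sigma_{\max}(U)$. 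Since $\kappa\ge 1$ and $r\ge 1$ give $\sqrt 2\,\kappa r \ge \sqrt r$, this exceeds $\sigma_{\max}(U)\sqrt{\mu r/d}\ge\bignorm{U U_i}$, which completes the argument. (No degenerate case arises: $\sigma_{\min}(X)>0$ makes $X$ full column rank, so $X X_i = 0$ would force $X_i = 0$, contradicting $\bignorm{X_i}>\alpha>0$.)

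Most of this is routine. The one thing that needs care is the constant chase in the last paragraph --- specifically, lining up the $\sqrt r$ gap between $\sqrt{\mu r/d}$ and $r\sqrt{\mu/d}$ with the lower bound assumed on $\alpha$, and keeping the $\kappa$ factors consistent with the normalization $\sigma_{\min}(U)\le 1\le\sigma_{\max}(U)$ fixed at the start of Appendix \ref{proof_gd}. There is no real obstacle: the mechanism is simply that $\nabla R(X)$ points outward along the rows $X_i$, the residual $v$ correlates nonpositively with those directions on precisely the rows where $\nabla R(X)$ is active, so moving further along $-\nabla R(X)$ can only increase $\bignormFro{v}^2$.
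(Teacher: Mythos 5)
Your proof is correct, and its skeleton coincides with the paper's: expand the square so that everything reduces to showing, row by row on the active set $\set{i: \bignorm{X_i} > \alpha}$, that the regularizer gradient $(\nabla R(X))_i = c_i X_i$ (with $c_i \ge 0$) has nonpositive inner product with the residual row, i.e.\ $\inner{X X_i}{U U_i} \le \bignorm{X X_i}^2$. The one genuine difference is how this last scalar inequality is closed. The paper bounds $\inner{X_i}{U_i U^{\top} X} \le \bignorm{X_i}\,\bignorm{U_i}\,\sigma_{\max}(U)\,\sigma_{\max}(X)$ and then invokes $\sigma_{\max}(X) \le \bignormFro{X} \le 2\sqrt{r}\,\sigma_{\max}(U)$ from equation \eqref{eq_X_fro}, so its argument implicitly uses the first-order optimality condition (Lemma \ref{lem_norm_bound}) in addition to the two hypotheses displayed in the lemma; you instead apply Cauchy--Schwarz against $\bignorm{X X_i} \ge \sigma_{\min}(X)\bignorm{X_i}$ and compare $\bignorm{U U_i} \le \sigma_{\max}(U)\bignorm{U_i}$ with the lower bound coming from $\bignorm{X_i} > \alpha \ge 4\kappa^2 r \sqrt{\mu/d}$ and $\sigma_{\min}(X) \ge \sigma_{\min}(U)/(2\sqrt{2})$. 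This makes your version self-contained with respect to the stated hypotheses, which is a small but real improvement; your constant chase is also robust to the normalization ambiguity in Assumption \ref{assume_coh} (whether $\bignorm{U_i} \le \sqrt{\mu r/d}$ or $r\sqrt{\mu/d}$), since in either case $\sqrt{2}\,\kappa r \ge \max(\sqrt{r}, r)$ closes the gap. Both routes yield the same conclusion with comparable slack.
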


\begin{proof}
    Let $S = \{i \in [d]: \abs{X_i} \ge \alpha\}$ be the index set of row vectors of $X$ whose $\ell_2$ norm is greater than $\alpha$.
    By the definition of the regularizer $R(X)$, for any $i$ not in $S$, we have that $(\nabla R(X))_i = 0$.
    Thus, for any $i \notin S$, we have that
    \[ \bignorm{U_i U^{\top} X - X_i X^{\top} X} = \bignorm{U_i U^{\top} X - X_i X^{\top} X + (\nabla R(X))_i }. \]
    For any $i$ that is in $S$, we have that
    \begin{align*}
	   \bignorm{U_i U^{\top}X - X_i X^{\top} X}^2 = \bignorm{U_i U^{\top}X - X_i X^{\top}X- (\gamma \nabla R(X))_i}^2,
    \end{align*}
    where $U_i, X_i$ are the $i$-th row vectors of $U, X \in \real^{d\times r}$, for every $i = 1, 2, \dots, d$.
    As for each $i \in S$, we will show that
    \begin{align}\label{eq_lem_grad_red}
	   \inner{(\nabla R(X))_i}{X_i X^{\top}X - U_i U^{\top}X} \ge 0.
    \end{align}
    By Proposition~\ref{prop_reg_gradient}, we have \[(\nabla R(X))_i = \fullbrace{4\lambda \sum_{i=1}^d \frac{(\abs{X_i}^3 - \alpha)_+^3}{\abs{X_i}} } X_i \define a_i X_i, \] 
    where $a_{i} \ge 0$ denotes the multiplier in front of $X_i$.
    Then, we have that
    \begin{align*}
	\inner{(\nabla R(X))_i}{X_iX^{\top}X} &= a_{i} \bignorm{X X_i^{\top}}^2 \\
	&\ge a_{i} \bignorm{X_i}^2 \sigma_{\min}\big(X^{\top} X\big) \\
	&\ge \frac{a_{i}}{8} \bignorm{X_i}^2 (\sigma_{\min}(U))^2,
    \end{align*}
    by the condition on the smallest singular value of $U$.
    On the other hand,
    \begin{align*}
         \inner{(\nabla R(X))_i}{U_i U^{\top}X} =& a_{i}\inner{X_i}{U_i U^{\top}X} \\
        \le& a_{i} \bignorm{X_i} \cdot \bignorm{U_i} \cdot \sigma_{\max}(U) \cdot \sigma_{\max}(X) \\ %
        \le& a_{i} \bignorm{X_i} \bignorm{U_i} (\sigma_{\max}(U))^2 2\sqrt r \tag{by equation~\eqref{eq_X_fro}, plus $\sigma_{\max}(X)\le \bignormFro{X}$} \\
        \le& a_i \bignorm{X_i} \bignorm{U_i} (\sigma_{\min}(U))^2 2\kappa^2 \sqrt r \tag{since $\sigma_{\max}(U) = \kappa \cdot \sigma_{\min}(U)$} \\
        \le& \frac{a_{i}}{16} \bignorm{X_i}^2 (\sigma_{\min}(U))^2,
    \end{align*}
    where the last step is because %
    \[ \|X_i\|\ge \alpha \ge 4 \kappa^2 r \sqrt{\frac{\mu}{d}} \ge 4 \kappa^2 \sqrt{r} \bignorm{U_i}, \] %
    since $\bignorm{U_i} \le \sqrt{\mu r / d}$ by Assumption \ref{assume_coh}.
    Putting both sides together, we therefore conclude that equation \eqref{eq_lem_grad_red} must hold.
    The proof is completed.
\end{proof}

Finally, we show that the above result also implies that $U U^{\top}$ is close to $XX^{\top}$.

\smallskip
\begin{lemma}\label{lem_near_opt}
    In the setting of Theorem \ref{thm_gd}, suppose $X\in\real^{d\times r}$ satisfies equations \eqref{eq_grad_gap} and \eqref{eq_grad_norm_reduce}. 
    Then, with probability at least $1 - O(d^{-1})$, the following must be true:
    \begin{align}
	   \bignormFro{XX^{\top} - U U^{\top}}^2 \le \frac{c \mu^2 r^5 \kappa^6\log(2d)}{dq}, \label{eq_near_opt}
    \end{align}
    for some fixed constant $c > 0$.
\end{lemma}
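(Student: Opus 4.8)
The plan is to first collapse the two hypotheses into a single near-stationarity bound on $\bignormFro{(UU^\top-XX^\top)X}$, and then argue that controlling $\Delta\define XX^\top-UU^\top$ on the column span of $X$ (which is immediate from $\sigma_{\min}(X)$ being bounded below) together with a subspace-angle argument controls all of $\Delta$. Combining \eqref{eq_grad_gap} with \eqref{eq_grad_norm_reduce} gives, for the relevant $\gamma\ge 0$,
\[ \bignormFro{(UU^\top-XX^\top)X}\le \bignormFro{UU^\top X-XX^\top X-\gamma\nabla R(X)}\le \rho,\qquad \rho\define 42\,\sigma_{\max}(U)\sqrt{\tfrac{\mu^2 r^4\log(2d)}{dq}}. \]
Throughout I also use the lower bound $\sigma_{\min}(X)^2\ge \tfrac18\sigma_{\min}(U)^2$ from Lemma \ref{lem_X_ld} (which is available here, since a local minimizer satisfies the second-order optimality condition), together with the normalization $\bignormFro{U}^2=r$, which yields $1\le \sigma_{\max}(U)^2\le r$, $\sigma_{\min}(U)\ge 1/\kappa$, and in particular that $X$ has full column rank $r$.

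Let $P_X$ be the orthogonal projector onto $\mathrm{col}(X)$, $P_X^\perp=I-P_X$, and similarly $P_U$. I would split $\bignormFro{\Delta}^2=\bignormFro{\Delta P_X}^2+\bignormFro{\Delta P_X^\perp}^2$. For the first term, since $XX^\top\succeq\sigma_{\min}(X)^2 P_X$ and $\Delta^2\succeq 0$, we get $\bignormFro{\Delta X}^2=\tr[\Delta^2 XX^\top]\ge \sigma_{\min}(X)^2\tr[\Delta^2 P_X]=\sigma_{\min}(X)^2\bignormFro{\Delta P_X}^2$, hence $\bignormFro{\Delta P_X}\le \rho/\sigma_{\min}(X)$. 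For the second term, since $P_X^\perp X=0$ we have $XX^\top P_X^\perp=0$, so $\Delta P_X^\perp=-UU^\top P_X^\perp$; using $(UU^\top)^2\preceq\sigma_{\max}(U)^4 P_U$ and $\tr[P_U P_X^\perp]=\bignormFro{P_U P_X^\perp}^2=r-\tr[P_U P_X]=\bignormFro{P_U^\perp P_X}^2\define s$ (this quantity is the sum of squared sines of the principal angles between $\mathrm{col}(X)$ and $\mathrm{col}(U)$), this gives $\bignormFro{\Delta P_X^\perp}^2=\tr[(UU^\top)^2 P_X^\perp]\le \sigma_{\max}(U)^4 s$. It remains to bound $s$, and this is where near-stationarity is used a second time: writing $XX^\top X-UU^\top X=E$ with $\bignormFro{E}\le\rho$ and solving $X=UU^\top X(X^\top X)^{-1}+E(X^\top X)^{-1}$, we get $P_U^\perp X=P_U^\perp E(X^\top X)^{-1}$, so $\bignormFro{P_U^\perp X}\le \rho/\sigma_{\min}(X)^2$; then $\bignormFro{P_U^\perp X}^2=\tr[P_U^\perp XX^\top]\ge \sigma_{\min}(X)^2\bignormFro{P_U^\perp P_X}^2=\sigma_{\min}(X)^2 s$, whence $s\le \rho^2/\sigma_{\min}(X)^6$.

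Putting the pieces together yields $\bignormFro{\Delta}^2\le \rho^2/\sigma_{\min}(X)^2+\sigma_{\max}(U)^4\rho^2/\sigma_{\min}(X)^6$. Plugging in $\sigma_{\min}(X)^2\ge\tfrac18\sigma_{\min}(U)^2$, $\sigma_{\max}(U)=\kappa\sigma_{\min}(U)$, and $\sigma_{\min}(U)\ge 1/\kappa$, the dominant term is $\sigma_{\max}(U)^4\rho^2/\sigma_{\min}(X)^6=O(\kappa^6)\cdot\rho^2$; finally $\rho^2=42^2\sigma_{\max}(U)^2\mu^2 r^4\log(2d)/(dq)\le 42^2\mu^2 r^5\log(2d)/(dq)$ using $\sigma_{\max}(U)^2\le r$, which delivers $\bignormFro{\Delta}^2\le c\,\mu^2 r^5\kappa^6\log(2d)/(dq)$ as claimed in \eqref{eq_near_opt}. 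The probability $1-O(d^{-1})$ is inherited from the events underlying \eqref{eq_grad_gap}, \eqref{eq_grad_norm_reduce}, and Lemma \ref{lem_X_ld}.

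The main obstacle is controlling $\bignormFro{\Delta P_X^\perp}=\bignormFro{UU^\top P_X^\perp}$, i.e.\ bounding how far $\mathrm{col}(U)$ protrudes from $\mathrm{col}(X)$: near-stationarity only directly tells us the reverse direction — that $\mathrm{col}(X)$ is nearly inside $\mathrm{col}(U)$ — so the step needs the full-rank lower bound $\sigma_{\min}(X)\gtrsim\sigma_{\min}(U)$ to convert one into the other, and the powers of $\kappa$ and $\sigma_{\min}(U)$ must be tracked carefully so that the final exponent is $\kappa^6$ and not worse. Without the lower bound on $\sigma_{\min}(X)$ the statement is simply false, since an $X$ with a vanishing singular value can make $\bignormFro{\Delta X}$ small while $\bignormFro{\Delta}=\Omega(1)$; the preceding lemmas are precisely what rule this out.
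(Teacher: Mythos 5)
Your proposal is correct, and it reaches the paper's bound by a genuinely different route. The paper splits $U=Z+V$ with $Z$ the projection of $U$ onto the column span of $X$ and $V$ its orthogonal complement, extracts $\bignormFro{ZZ^{\top}-XX^{\top}}$ from the aligned part of the near-stationarity bound, and then controls $\bignormFro{V}$ by first proving a lower bound on $\sigma_{\min}(Z^{\top}X)$ — a step that requires a case analysis on the sign of $a^{\top}XX^{\top}Xb$ and the smallness condition $\delta\le(\sigma_{\min}(U))^3/(8\sqrt 8)$ — before expanding $\bignormFro{UU^{\top}-XX^{\top}}^2$ into three Pythagorean pieces $ZZ^{\top}-XX^{\top}$, $ZV^{\top}$, $VV^{\top}$. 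You instead split the error matrix $\Delta=XX^{\top}-UU^{\top}$ by right-multiplication with $P_X$ and $P_X^{\perp}$, handle the aligned part exactly as the paper does in spirit (via $XX^{\top}\succeq\sigma_{\min}(X)^2P_X$), and handle the orthogonal part through the dimension-matching identity $\tr[P_UP_X^{\perp}]=\tr[P_U^{\perp}P_X]$ together with the exact relation $P_U^{\perp}X=P_U^{\perp}E(X^{\top}X)^{-1}$, which converts "$\mathrm{col}(X)$ nearly inside $\mathrm{col}(U)$" into "$\mathrm{col}(U)$ nearly inside $\mathrm{col}(X)$" without any case analysis or smallness requirement on $\rho$. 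Both arguments hinge on the same two ingredients — the combined bound $\bignormFro{(UU^{\top}-XX^{\top})X}\le\rho$ from \eqref{eq_grad_gap} and \eqref{eq_grad_norm_reduce}, and $\sigma_{\min}(X)^2\ge\sigma_{\min}(U)^2/8$ from Lemma \ref{lem_X_ld} (the paper's proof also uses this implicitly, so invoking it is consistent with the lemma's intended setting) — and your constant tracking ($\kappa^4/\sigma_{\min}(U)^2\le\kappa^6$, $\sigma_{\max}(U)^2\le r$) lands on the same $\mu^2r^5\kappa^6\log(2d)/(dq)$ rate; in fact, before the final relaxation $\sigma_{\max}(U)^2\le r$ your dominant term scales as $\kappa^6r^4$, marginally sharper than the paper's bookkeeping. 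The trade-off is that the paper's construction of $V$ gives an explicit bound on $\bignormFro{V}$ (useful if one wants the subspaces themselves), whereas your projector argument is shorter and avoids the fragile lower bound on $\sigma_{\min}(ZZ^{\top}X)$.
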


\begin{proof}
    We separate $U$ into one part that is in the column span of $X$ and another part that is orthogonal to the column span of $X$.
    Let $U = Z + V$ where $Z$ is the projection of $U$ to the column span of $X$, and $V$ is the projection of $U$ to the orthogonal column subspace of $X$.
    In particular, we have $V^{\top} X = 0$. Thus,
    \begin{align}
        & U U^{\top} X = (Z + V)(Z + V)^{\top} X = ZZ^{\top} X + V Z^{\top} X \nonumber \\
        \Rightarrow& \bignorm{U U^{\top} X - X X^{\top}X}^2 =
        \bignormFro{Z Z^{\top} X - X X^{\top} X}^2 + \bignormFro{V Z^{\top} X}^2 \le \delta^2, \label{eq_proj_1}
    \end{align}
    for \[ \delta = 42 \sigma_{\max}(U)\sqrt{\frac{\mu^2 r^4 \log(2d)} {dq}}, \]
    by applying equation \eqref{eq_grad_gap} and also equation \eqref{eq_grad_norm_reduce}.
    
    Notice that $Z$ has the same column span as $X$. As a result,
    \begin{align*}
        \bignormFro{Z Z^{\top} X - XX^{\top} X}^2 \ge  \sigma_{\min}^2(X) \bignormFro{Z Z^{\top} - XX^{\top}}^2,
    \end{align*}
    which can be verified by inserting the SVD of $X$ into the above inequality.
    Combined with equation \eqref{eq_proj_1}, we thus have that
    \begin{align*}
        \bignormFro{Z Z^{\top} - XX^{\top}}^2 \le \frac{\delta^2}{(\sigma_{\min}(X))^2} \le \frac{8\delta^2}{(\sigma_{\min}(U))^2},
    \end{align*}
    since $(\sigma_{\min}(X))^2 \ge (\sigma_{\min}(U))^2 / 8$.
    
    Next, let $a\in\real^d, b \in \real^r$ be two unit vectors such that \[ a^{\top} Z Z^{\top} X b = \sigma_{\min}(Z Z^{\top} X). \]
    From equation \eqref{eq_proj_1} we can infer that
    \begin{align}
        \bigabs{\sigma_{\min}(Z Z^{\top} X) - a^{\top} XX^{\top} X b} \le \delta.\label{eq_proj_2}
    \end{align}
    Now we consider two cases.
    Suppose $a^{\top} X X^{\top} X b$ is positive.
    Then the left-hand side of equation \eqref{eq_proj_2} must be at least
    \begin{align*}
        a^{\top} X X^{\top} X b - \sigma_{\min}(Z Z^{\top} X ) = \sigma_{\min}(XX^{\top}X) - \sigma_{\min}(Z Z^{\top} X).
    \end{align*}
    The other case is that if $a^{\top} XX^{\top} X b$ is negative, then the left hand side of equation \eqref{eq_proj_2} must be at least
    \begin{align*}
        \bigabs{a^{\top} X X^{\top} X b} - \sigma_{\min} (Z Z^{\top} X) \ge \sigma_{\min}(XX^{\top} X) - \sigma_{\min}(Z Z^{\top} X).
    \end{align*}
    We know that \[ \sigma_{\min}(XX^{\top}X) \ge (\sigma_{\min}(U))^3 / (8\sqrt{8}). \]
    Thus, if $\delta \le (\sigma_{\min}(U))^3 / (16\sqrt{8})$, then \[ \sigma_{\min}(Z Z^{\top}X) \ge (\sigma_{\min}(U))^3 / (16\sqrt{8}). \]
    On the other hand,
    \begin{align*} 
        \sigma_{\min} (Z Z^{\top} X) 
        &\le \sigma_{\max}(Z) \cdot \sigma_{\min} (Z^{\top} X)  \\
        &\le \bignorms{U} \cdot \sigma_{\min} (Z^{\top} X),
    \end{align*}
    which implies \[ \sigma_{\min}(Z^{\top} X) \ge  \frac{(\sigma_{\min}(U))^3} {8\sqrt{8}\bignorms{U}}. \]
    
    Also note that $\bignormFro{V Z^{\top} X}^2 \le \delta^2$,
    which implies that
    \[ \bignormFro{V}^2 \le \frac{\delta^2}{(\sigma_{\min}(Z^{\top} X))^2} \le \frac {512 \delta^2 \bignorms{U}^2} {(\sigma_{\min}(U))^6} = \frac{512 \delta^2 \kappa^2}{(\sigma_{\min}(U))^4}. \]
    Finally, \[ \bignormFro{Z V^{\top}}\le \bignorms{Z} \cdot \bignormFro{V} \le \bignorms{U} \cdot \bignormFro{V}, \] since $Z$ is a projection of $U$ to the column space of $X$.
    
    In summary, we have that
    \begin{align*}
        & \bignormFro{U U^{\top} - XX^{\top}}^2 \\
        =& \bignormFro{ZZ^{\top} - XX^{\top}}^2 + 2 \bignormFro{Z V^{\top}}^2 + \bignormFro{V V^{\top}}^2 \\
        \le& \frac{16 \delta^2}{(\sigma_{\min}(U))^2} + \frac{1024 \delta^2 \kappa^4}{(\sigma_{\min}(U))^2} + \frac{512^2 \delta^4\kappa^4 r}{(\sigma_{\min}(U))^8} \\
        \le& \frac{(512\kappa^4 r + 1024\kappa^4 + 16)\delta^2}{(\sigma_{\min}(U))^2},
    \end{align*}
    since $\delta \le (\sigma_{\min}(U))^3/ (8\sqrt{8})$, which concludes the proof of equation \eqref{eq_near_opt}, for $c = 42^2 \times 512$.
\end{proof}

Now we are ready to complete the proof of Theorem~\ref{thm_gd}.
\begin{proof}[Proof of Theorem~\ref{thm_gd}]
    Suppose $X$ satisfies the first-order and second-order optimality conditions.
    Then by Lemma~\ref{lem_incoherence} and Lemma~\ref{lem_norm_bound}, we have that $X$ satisfies the incoherence condition in equation \eqref{eq_incoherent}.
    As a result, $(\sigma_{\min}(X))^2 \ge (\sigma_{\min}(U))^2 / 8$ by Lemma \ref{lem_X_ld}.
    In particular, we require $\alpha$ to be at least $4 \kappa^2 r \sqrt{\mu / d}$.
    
    In addition, based on Lemma \ref{lem_norm_bound}, we require $\alpha \le \sqrt{\mu(r +1 )q / \lambda}$, which requires \[ \lambda \ge \mu(r+1)q /\alpha^2 \ge dq /8. \]
    
    Recall that \[ q = 1 - (1 - p^2)^n = 1 - (1 - C^2/d^2)^n \approx C^2 n / d^2. \]
    When \[ n \ge \frac{c d r^5 \kappa^6 \mu^2 \log(2d)} {C^2 \epsilon^2}, \] the right hand side of equation \eqref{eq_near_opt} is at most $\epsilon^2$. 
    This concludes the proof of equation \eqref{eq_gd_err}.
\end{proof}

\begin{remark}\label{remark_diff}
    Our proof has been inspired by the important early work of \cite{ge2016matrix}.
    We build on and expand their work in three aspects.
    \begin{itemize}
        \item First, in the one-sided completion setting, the sampling patterns are non-random.
        In particular, the diagonal entries are fully observed, while the off-diagonal entries are sparsely observed.
        Thus, there is a mix of dense and sparse samplings in $\Omega$.
        To address this problem, we use a weighted projection as defined in Section \ref{sec_setup}.
        \item Second, the bias of the H\'ajek estimator is complex due to the nonlinear form of the estimator, and requires a delicate analysis of the noise matrix $N$. This is studied in detail in Corollary \ref{cor_noise} and Corollary \ref{cor_spec_N}.
        \item Third, the observation patterns in the off-diagonal entries of $\Omega$ are not completely independent, requiring a new, row-by-row concentration inequality that leverages a conditional independence property, i.e., Lemma \ref{lem_concentration}.
    \end{itemize}
    More broadly, non-uniform sampling patterns are very common in real-world datasets and have received lots of recent interest in causal inference \citep{athey2021matrix,xiong2023large}.
    We hope this work inspires further studies of modeling non-uniform sampling on sparse panel data from an optimization perspective.

    Recent work such as \cite{dong2025towards} examine the optimization landscape of neural networks through evaluating the Hessian structure, such as the Hessian spectrum.
    In light of the proof techniques developed in this work, it might also be interesting to revisit these insights in the setting of low-rank matrix factorization, which can be related to two-layer quadratic neural networks \citep{li2018algorithmic}.
\end{remark}

\section{Omitted Experiments}\label{app_experiments}

In this section, we describe the dataset statistics and pre-processing procedures. The MovieLens-32M dataset comprises $32$ million user ratings on approximately $87,000$ movies, collected from the MovieLens recommendation platform, with ratings ranging from $1$ to $5$.
The sparsity ratio is roughly $3\times10^{-4}$.
We also consider two related MovieLens datasets with $20$ million and $25$ million entries to ensure the robustness of our findings.
We split $80\%$ into training and hold out the rest for testing.
\begin{itemize}
\item The Amazon Reviews dataset includes $571$ million user ratings on $48$ million products, collected from Amazon, with ratings ranging from $1$ to $5$.
The sparsity ratio is roughly $2\times10^{-7}$.
For the Amazon reviews dataset, we choose the Automotive category with $50,000$ items rated by $100,000$ users as $M$.
We use $80$-$20$ split.

\item The Genomes dataset contains phased genotype data for $2,054$ individuals gathered from diverse populations. The dataset is represented as a dense matrix, with rows representing genomic sites and columns corresponding to individuals. 
Each entry represents a biallelic genotype, encoded as $0$ or $1$.
We map them to $1$ and $2$, and use $0$ to indicate missing entries.
This dataset is fully observed. Thus, we use $1\%$ for training and hold out the rest for testing.

\item For MovieLens and Amazon Reviews datasets, we use $\wh T$ on $M^{\top} M$ to obtain the ground truth second-moment matrix. Then, the estimation error is evaluated against this matrix.
In our experiments, we set the sampling probability $p = 0.8$.
Given that the original datasets have an average sparsity of approximately $0.3\%$, even with $p = 0.8$, the number of observed entries remains very sparse.
When $T$ is not fully observed, we focus on measuring the mean squared error within the observed entries instead and treat the missing entries as zero.
\end{itemize}

\subsection{Implementation} 

We provide a detailed description of all the baseline methods we have implemented in our experiments.
\begin{itemize}
\item Alternating-GD minimizes the squared reconstruction error on the observed entries of $M$.
We minimize the following objective:
\[ \min_{X \in \mathbb{R}^{n \times r}, Y \in \mathbb{R}^{d \times r}} \|P_{\Omega}(X Y^\top - M)\|_F^2, \]
where $P_{\Omega}$ denotes the projection onto the observed entries.
We initialize $X$ and $Y$ from an isotropic Gaussian distribution. 
We perform the optimization using Adam with a learning rate of $0.1$ over $300$ training epochs.

\begin{figure}[t!]
    \begin{subfigure}[b]{0.5\textwidth}
    \centering
    \includegraphics[width=0.95\textwidth]{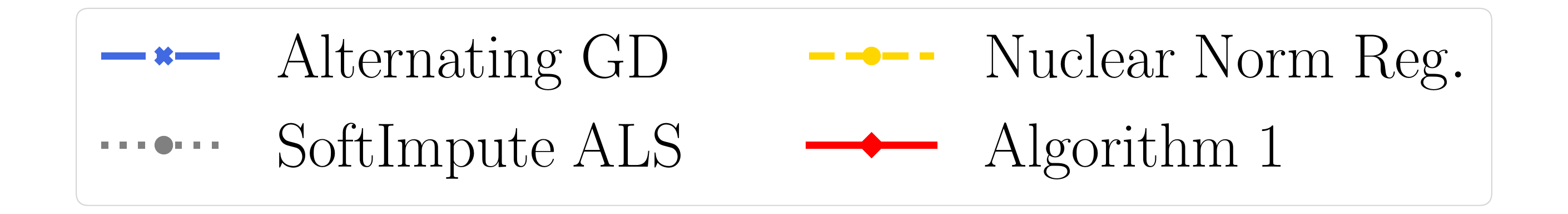}
    \end{subfigure}
    \begin{subfigure}[b]{0.5\textwidth}
    \centering
    \includegraphics[width=0.95\textwidth]{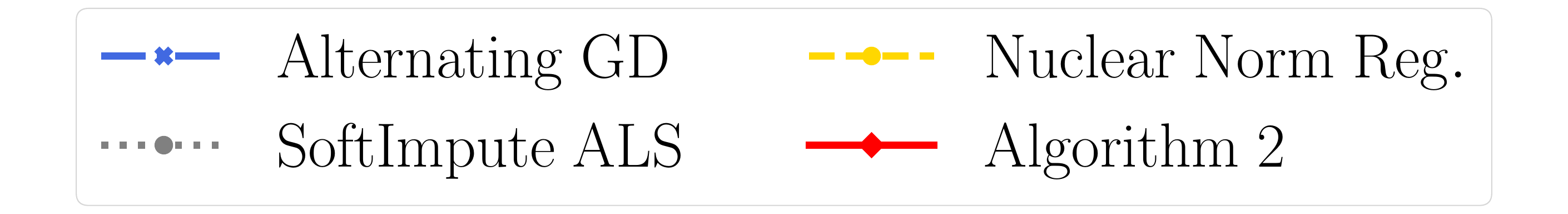}
    \end{subfigure}
    
    \begin{subfigure}{0.2455\textwidth}
        \centering
        \includegraphics[width=0.925\textwidth]{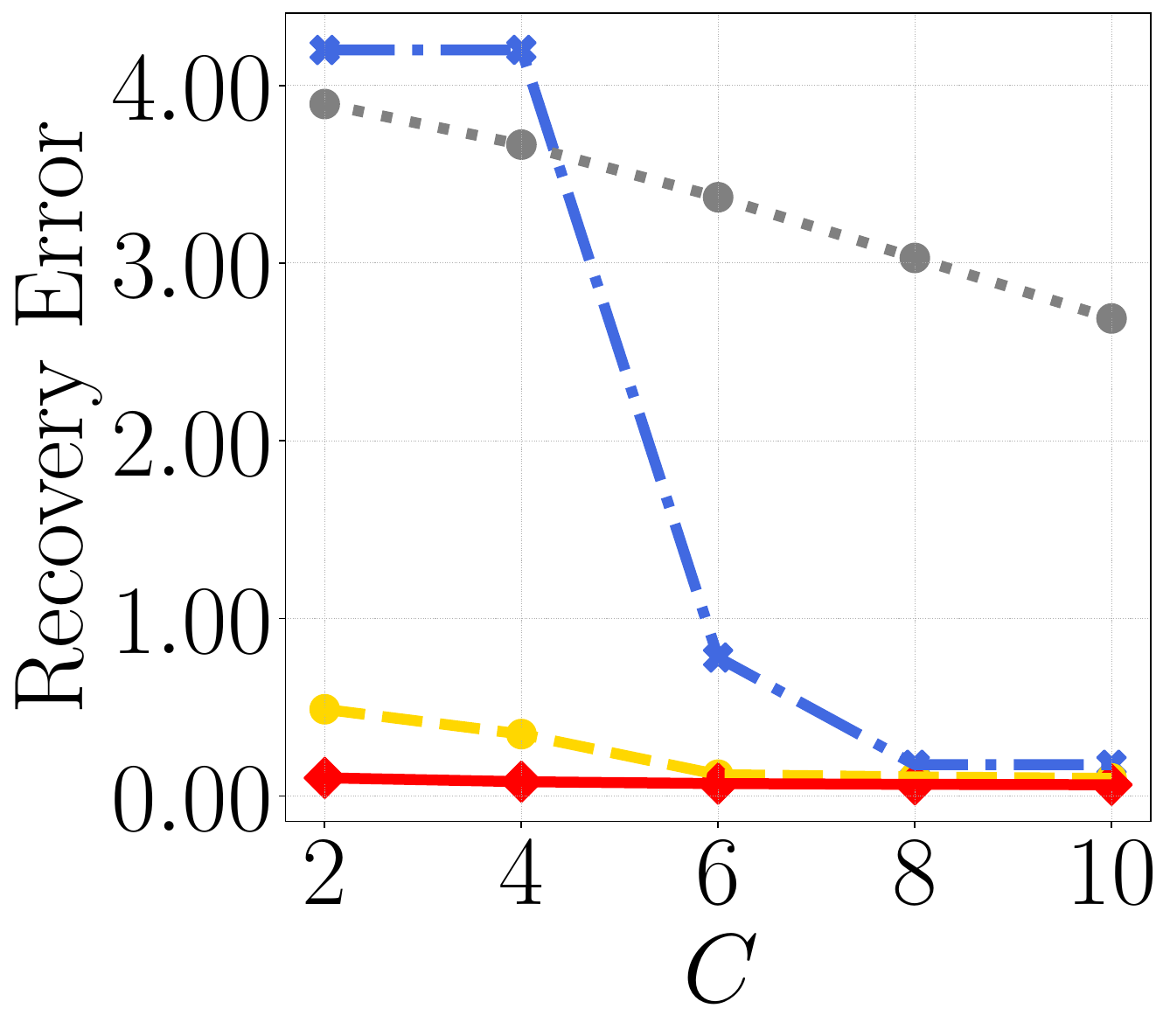}
        \vspace{-0.1in}
        \caption{$p =\frac C d$}\label{fig_uniform_sampling}
    \end{subfigure}
    \begin{subfigure}{0.2455\textwidth}
        \centering
        \includegraphics[width=0.925\textwidth]{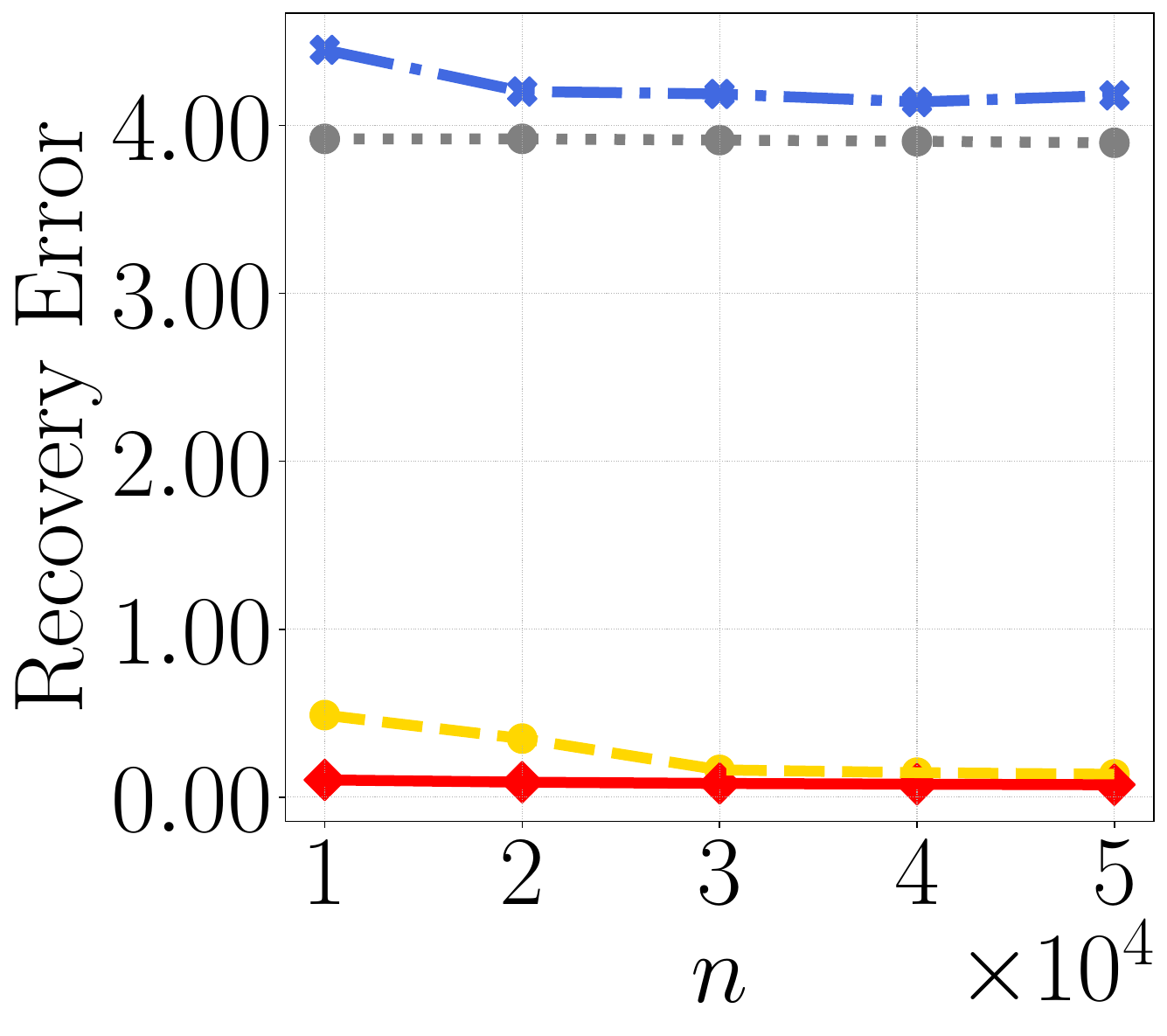}
        \vspace{-0.1in}        
        \caption{$C = 2 $}\label{fig:2ob-err}
    \end{subfigure}    
    \begin{subfigure}[b]{0.2455\textwidth}
        \centering
        \includegraphics[width=0.925\textwidth]{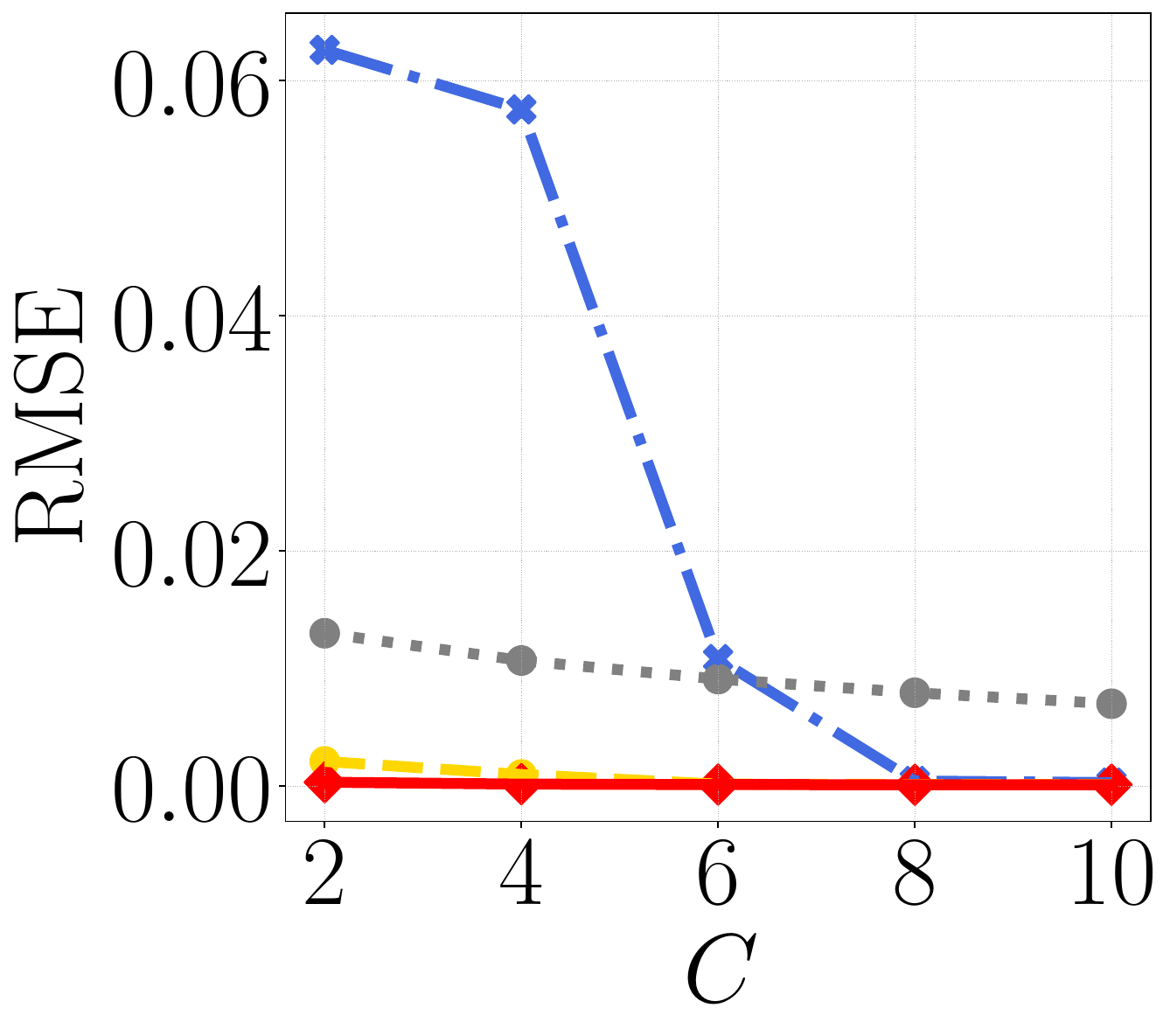}
        \vspace{-0.1in}        
        \caption{$p =\frac C d$}\label{fig:general_setting-rmse}
    \end{subfigure}
    \begin{subfigure}[b]{0.2455\textwidth}
        \centering
        \includegraphics[width=0.925\textwidth]{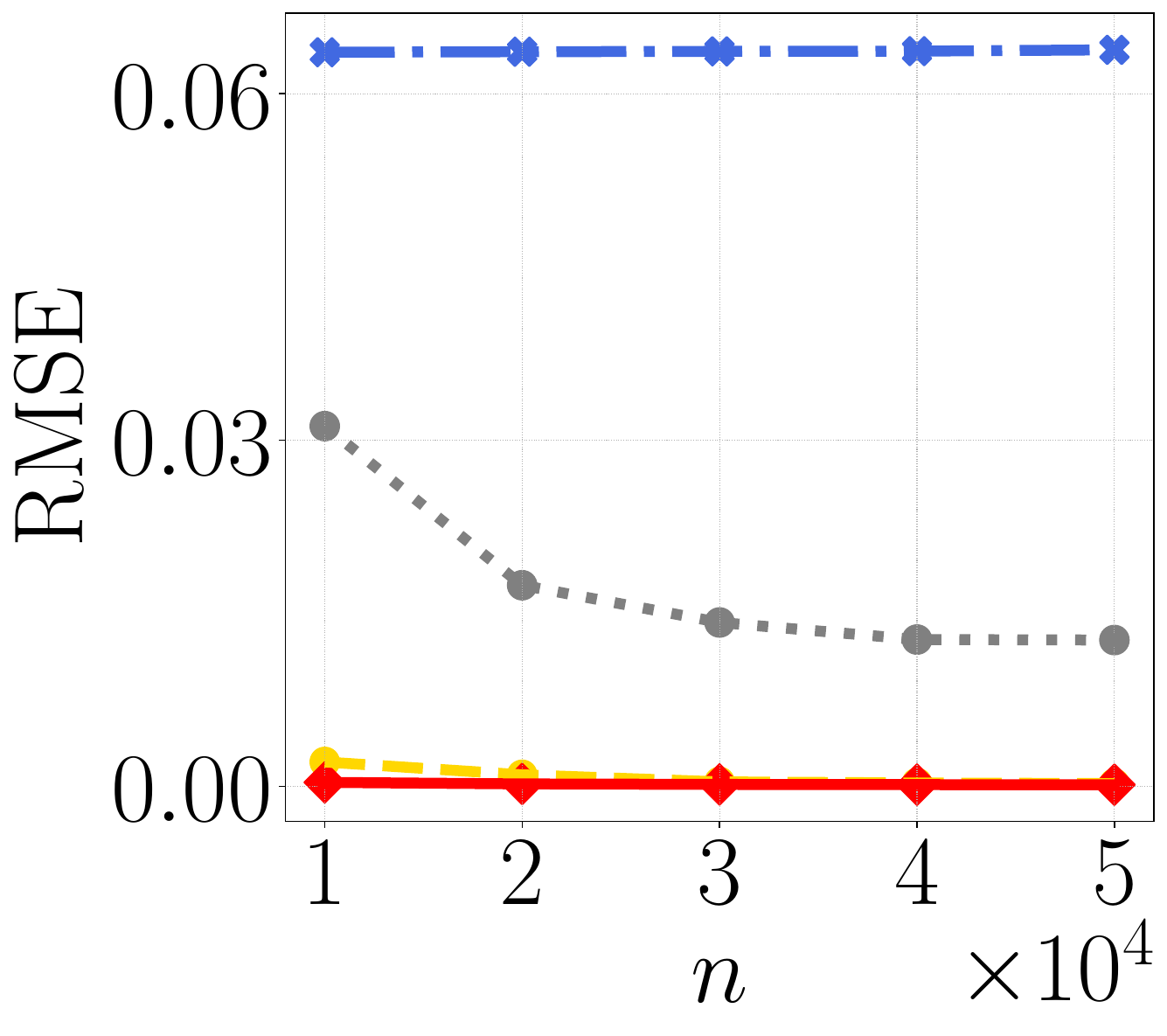}
        \vspace{-0.1in}
        \caption{$C = 2$}\label{fig:2ob-rmse}
    \end{subfigure}
    \vspace{0.025in}
    
    \begin{subfigure}{0.245\textwidth}
        \centering
        \includegraphics[width=0.925\textwidth]{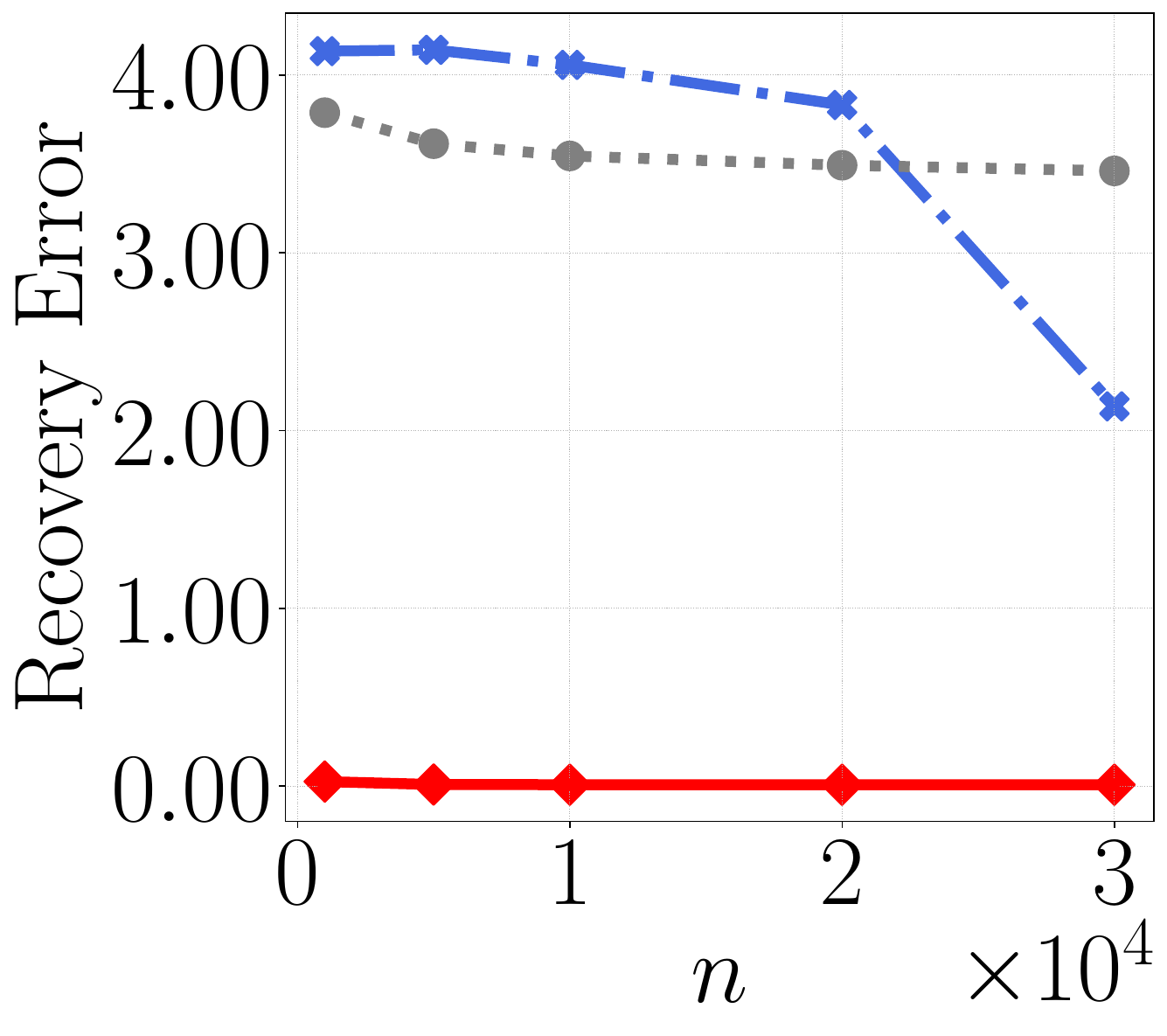}
        \vspace{-0.1in}        
        \caption{$C =  5$}\label{fig:5ob-err}
    \end{subfigure}
    \begin{subfigure}[b]{0.245\textwidth}
        \centering
        \includegraphics[width=0.925\textwidth]{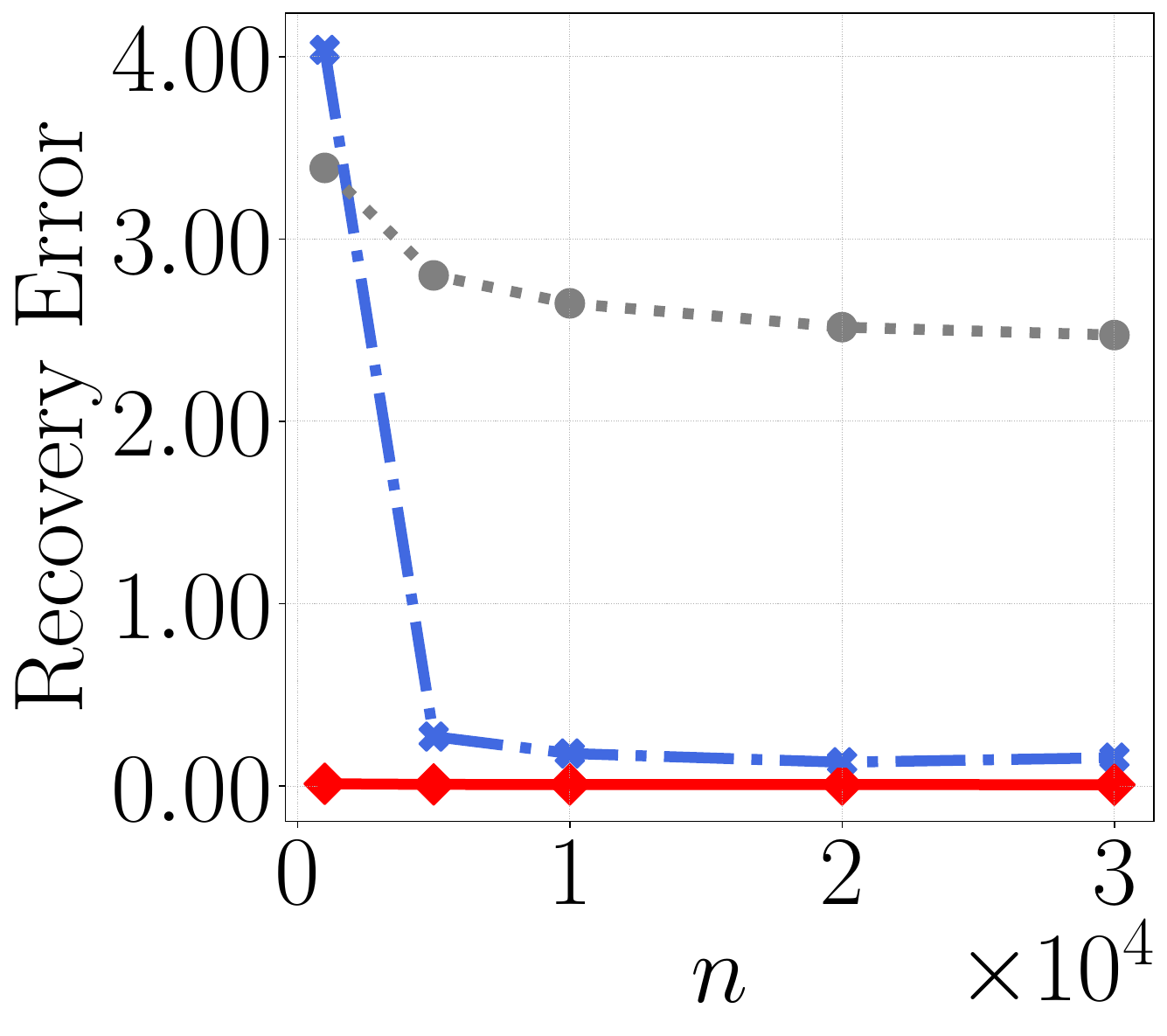}
        \vspace{-0.1in}        
        \caption{$C =  {10}$}\label{fig:10ob-err}
    \end{subfigure}    
    \begin{subfigure}{0.245\textwidth}
        \centering
        \includegraphics[width=0.925\textwidth]{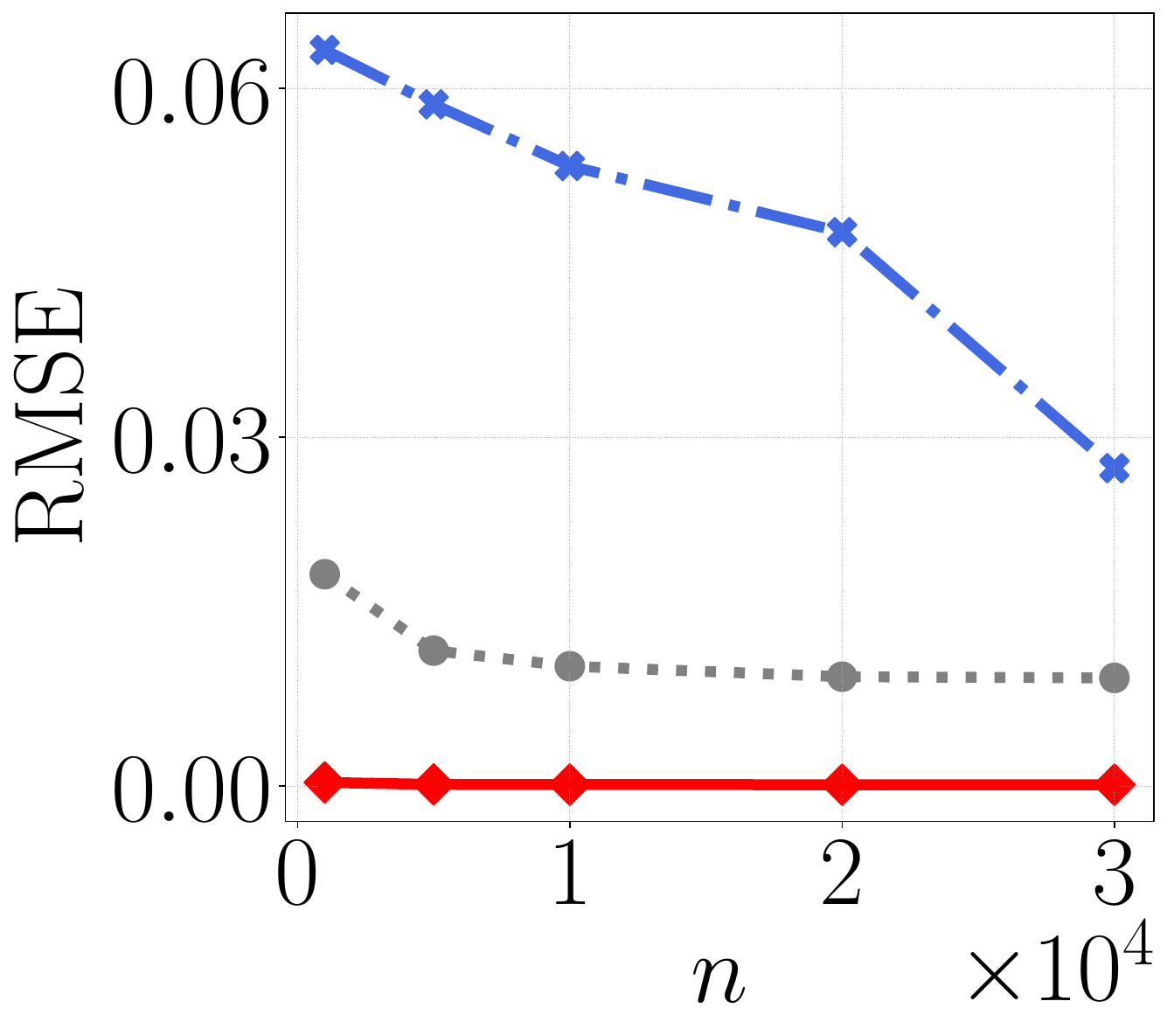}
        \vspace{-0.1in}        
        \caption{$C =  5$}\label{fig:5ob-rmse}
    \end{subfigure}
    \begin{subfigure}{0.245\textwidth}
        \centering
        \includegraphics[width=0.925\textwidth]{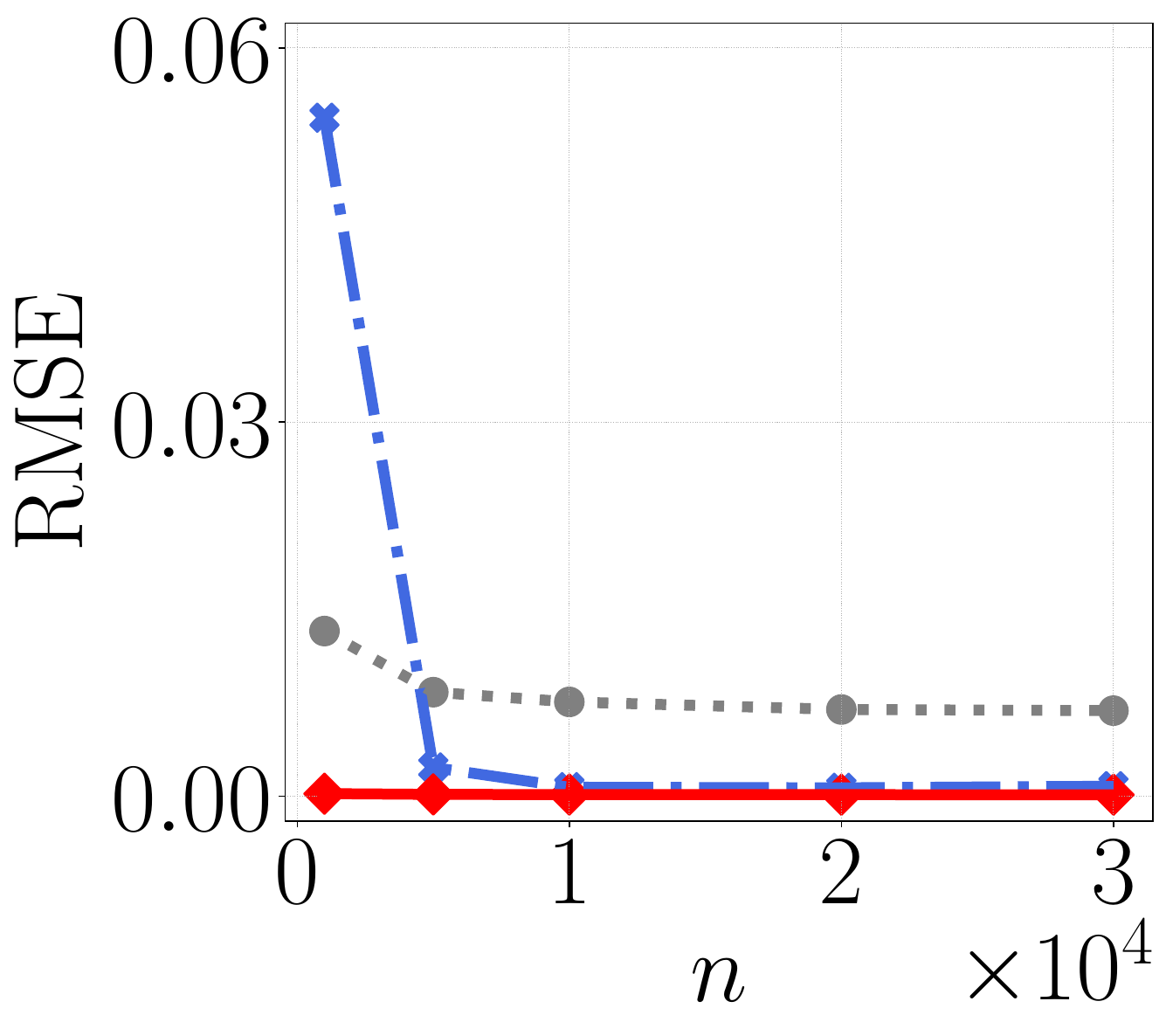}
        \vspace{-0.1in}        
        \caption{$C =  {10}$}\label{fig:10ob-rmse}
    \end{subfigure}
    \caption{One-sided recovery error of $T$ (shown in the left two columns) and the root mean squared recovery error of the original matrix $M$ (shown in the right two columns) using our algorithms on synthetic matrix data.
    Figures \ref{fig_uniform_sampling}, \ref{fig:2ob-err}: One-sided recovery error under uniform sampling by varying $p$, and sampling two entries per row ($C = 2$) while varying the number of rows $n$.
    Figures \ref{fig:general_setting-rmse}, \ref{fig:2ob-rmse}: Recovery error on $M$, also under uniform sampling, and sampling two entries per row.
    Overall, we find that our approach consistently provides more accurate estimates compared with the three baseline methods, all of which are widely used in the matrix completion literature.
    Figures \ref{fig:5ob-err}, \ref{fig:10ob-err}, \ref{fig:5ob-rmse}, \ref{fig:10ob-rmse}: Vary $C$ and repeat the same experiment.}\label{fig:main}
\end{figure}

\item SoftImpute-ALS alternates between matrix completion using singular value decomposition (SVD) and imputing missing values \citep{hastie2015matrix}.
At each iteration, missing entries are filled with the current estimates, followed by SVD and soft thresholding of the singular values.
The resulting truncated low-rank approximation is used to update the matrix.
We repeat this process until convergence, as measured by the Frobenius norm difference between successive reconstructions, or until a maximum of $500$ iterations.
We set the target rank to the same as $r$ and the convergence threshold to $10^{-5}$. 

\item Nuclear norm regularization is another commonly used approach for matrix completion.
In one-sided estimation, we compute empirical averages based on the observed data \citep{cao2023one}.
For off-diagonal entries $(i, j)$, we identify all samples in which the pair $(i, j)$ appears and average the product of the corresponding entries from the two relevant columns.
For diagonal entries $(i, i)$, we consider all samples where index $i$ appears in either position of a pair, square the corresponding data matrix entry for each such sample, and average these squared values.
We then estimate the full matrix using symmetric alternating gradient descent by solving the following optimization problem:
\[ \min_{X\in \real^{d \times r}} \bignormFro{P_\Omega(XX^\top - T)}^2 + \lambda\bignorm{X}_{\star}, \]
where $\|X\|_{\star}$ denotes the nuclear norm of $X$ and $\lambda$ is set as $0.01$.
We perform the optimization using Adam with a learning rate of $0.1$ for $1,000$ steps.
\end{itemize}

\emph{Hyperparameter configurations.}
We report the hyperparameter configurations used in our experiments.
The learning rate $\eta$ is varied from $10$ to $10^5$, the regularization coefficient $\lambda$ from $10^{-2}$ to $10^{-5}$, and the regularization parameter $\alpha$ from $10^{-4}$ to $1$.
Based on cross-validation, we select $\eta = 10^4$, $\lambda = 10^{-4}$, and $\alpha = 10^{-3}$, which yield the lowest error.
For the Amazon Review dataset, we set $\lambda = 0.01$ and $\alpha = 0.1$.

\subsection{Omitted Comparison Results}\label{app_omitted_experiments}

First, we report the runtime comparison between \algoname~and nuclear norm regularization.
We run gradient descent with $1,000$ iterations until it has converged.
We also fix the number of observed entries as $200d$ while varying $d$ from $10^3$ to $10^4$.
We find that our approach requires $0.95$ seconds for $d=10^3$ and $1.61$ seconds for $d=10^4$.
By contrast, solving a convex program with nuclear norm penalty takes $5.68$ seconds for $d=10^3$ and $100.34$ seconds for $d=10^4$, measured on the Ubuntu server.

Second, we present the recovery error for imputing $M$, illustrated in Figure~\ref{fig:main}.
Our approach consistently reduces the recovery error compared to all three baseline methods.
For instance, when $C=2$ (sampling two entries per row), our approach incurs an error of $0.0003$, compared to $0.002$ for nuclear norm regularization.
We also observe similar results on synthetic data when sampling five entries per row ($C = 5$) or sampling ten entries per row ($C = 10$) in Figures \ref{fig:5ob-err} and \ref{fig:10ob-err}.

Similar results are observed when our approach is applied to recover the missing entries of $M$.
For recovering $M$, our approach reduces estimation error by $94\%$ relative to alternating-GD and by $98\%$ compared to softImpute-ALS.

\begin{table}[t!]
\centering
\caption{We report the recovery error for both one-sided matrix completion (i.e., recovering $T$), and recovering $M$, along with the corresponding running time (measured in seconds on an Ubuntu server) on three MovieLens datasets. We run each experiment with five random seeds and report the mean and standard deviation.}\label{tab_movielens_results}
{\begin{tabular}{@{} l | cc | cc @{}}
    \toprule
    Dataset & MovieLens-20M & MovieLens-25M & MovieLens-20M & MovieLens-25M \\
    \midrule
    Estimating $T$ & \multicolumn{2}{c|}{One-sided recovery error} &  \multicolumn{2}{c}{Running time (Seconds)} \\
    \midrule
    Alternating-GD & $0.015_{\pm0.009}$ & $0.006_{\pm0.002}$ & $1.9_{\pm 0.1}\times10^2$ & $4.4_{\pm 0.1}\times 10^2$  \\
    SoftImpute-ALS & $0.008_{\pm0.005}$ & $0.006_{\pm0.002}$ & $3.4_{\pm0.1}\times10^3$ & $9.4_{\pm0.7}\times10^3$ \\
    Algorithm \ref{alg_ipw} & $\mathbf{0.002 _{\pm 0.001}}$ & $\mathbf{0.002_{\pm 0.10}}$ & $\mathbf{4.8_{\pm 0.2}\times10^1}$ & $\mathbf{1.1_{\pm 0.6}\times10^2}$ \\
    \midrule
    Estimating $M$ & \multicolumn{2}{c|}{Root mean squared error} &  \multicolumn{2}{c}{Running time (Seconds)} \\
    \midrule
    Alternating-GD & $1.11_{\pm0.00}$ & $1.27_{\pm0.02}$ & $1.9_{\pm 0.1}\times10^2$ & $4.4_{\pm 0.1}\times 10^2$  \\
    SoftImpute-ALS & $1.09_{\pm0.00}$ & $1.25_{\pm0.00}$ & $3.4_{\pm0.1}\times10^3$ & $9.4_{\pm0.7}\times10^3$ \\
    Algorithm \ref{alg_user_level} & $\mathbf{0.99_{\pm0.00}}$ & $\mathbf{1.04_{\pm0.00}}$ & $\mathbf{1.0_{\pm 0.1}\times10^2}$ & $\mathbf{2.6_{\pm 0.1}\times10^2}$\\
    \bottomrule
\end{tabular}}
\end{table}

Finally, we report the results on MovieLens-20M and MovieLens-25M in Table \ref{tab_movielens_results}.
For one-sided matrix completion, our approach reduces the Frobenius norm error between the estimate and the true $T$ by up to $87\%$ compared to alternating-GD and up to $75\%$ compared to softImpute-ALS, while also achieving lower running times.

For recovering the missing entries of $M$, our approach reduces the RMSE by up to $10\%$ compared to alternating-GD and up to $6\%$ compared to softImpute-ALS. %

\subsection{Ablation Analysis}\label{app_ab}

\emph{Setting the rank of the target matrix.} In Section~\ref{sec_experiments}, we use a target rank of $10$ for all the reported experiments.
To further assess the robustness of our approach, we conduct additional experiments on three real-world datasets using Algorithm \ref{alg_ipw}, with the target rank in the range of $1$ to $30$.
The results are shown in Table \ref{tab_rank}.
We find that larger ranks may lead to overfitting in sparse settings; for example, on the Amazon dataset, increasing $r$ beyond $10$ results in worse performance when recovering $M$ from the estimated $T$.

\begin{table}[t!]
\centering
\caption{We vary the rank of the variable matrix in \algoname{}, corresponding to the results we reported in Table \ref{tab_results}. The results are averaged over five independent runs.}\label{tab_rank}
{\begin{tabular}{@{} l | ccc}
    \toprule
    Dataset & MovieLens-20M & Amazon Reviews & Genomes \\
    \midrule
    Estimating $T$ (Using Algorithm \ref{alg_ipw}) & \multicolumn{3}{c}{One-sided recovery error on $T$} \\
    \midrule
    $r = 1$ & $7.0_{\pm0.1}\times10^{-3}$ & $4.5_{\pm0.1}\times10^{-3}$ & $5.8_{\pm 0.1}\times10^{-5}$ \\
    $r = 10$ & $4.7_{\pm 0.1}\times10^{-3}$  &$\bm{1.3}_{\pm0.1}\times10^{-3}$ & $\bm{5.8}_{\pm 0.1}\times10^{-5}$ \\
    $r = 20$ & $1.9_{\pm 0.1}\times10^{-3}$  &$1.3_{\pm0.1}\times10^{-3}$ & $5.8_{\pm 0.1}\times10^{-5}$ \\
    $r = 30$ & $\bm{1.8}_{\pm 0.1}\times10^{-3}$  &$1.3_{\pm0.1}\times10^{-3}$ & $5.8_{\pm 0.1}\times10^{-5}$ \\
    \midrule
    Estimating $M$ (Using Algorithm \ref{alg_user_level}) & \multicolumn{3}{c}{Root mean squared recovery error on $M$}  \\
    \midrule
    $r = 1$ & $3.7_{\pm 0.1}$ & $4.6_{\pm 0.1}$ & $\bm{0.1}_{\pm 0.1}$ \\
    $r = 10$ & $1.1_{\pm0.1}$ & $\bm{1.9}_{\pm 0.1}$ & $0.2_{\pm 0.1}$ \\
    $r = 20$ & $\bm{1.0}_{\pm0.1}$ & $2.3_{\pm 0.1}$ & $0.2_{\pm 0.1}$ \\
    $r = 30$ & $1.0_{\pm0.1}$ & $2.7_{\pm 0.1}$ & $0.2_{\pm 0.1}$ \\
    \bottomrule
    \end{tabular}}
\end{table}

We also run similar analyses on synthetic data and find that the algorithm is not particularly sensitive to different choices in this context.
In particular, we conduct different simulations by varying the rank $r$ of $M$ between $1$ and $50$.
We vary the target rank used in Algorithm \ref{alg_ipw} between $1$ and $50$. 
We find that the estimation errors are comparable between different choices of target ranks.

\medskip
\emph{Sensitivity of Algorithm \ref{alg_ipw}.} We evaluate the sensitivity of adding noise to $\hat M$ on the final output of Algorithm \ref{alg_ipw}.
We use synthetic datasets with varied numbers of rows between $5d$ and $20d$, and a fixed dimension of $1,000$.
We also test different ranks of $M$, ranging from $5$ to $20$.
We add Gaussian noise with a standard deviation that ranges from $0$ to $0.01$ to the input.
Interestingly, we find that estimation error increases linearly with $\sigma$.
Moreover, when varying $n$, the sensitivity level of \algoname{} (recall its definition in equation \eqref{eq_sensitivity}) is generally low.
When $n = 5d$, the slope of the line is $19$.
When $n = 10d$, the slope decreases to $14$.
When $n = 20d$, the slope further decreases to $13$.
Similar results hold after varying the rank between $20, 10, 5$, and the sensitivity level ranges between $16, 13, 11$.
See illustration in Figure~\ref{fig_varying_noise}.
\begin{figure}[h!]
    \centering
    \begin{subfigure}[b]{0.495\textwidth}
        \centering
        \includegraphics[width=0.90\textwidth]{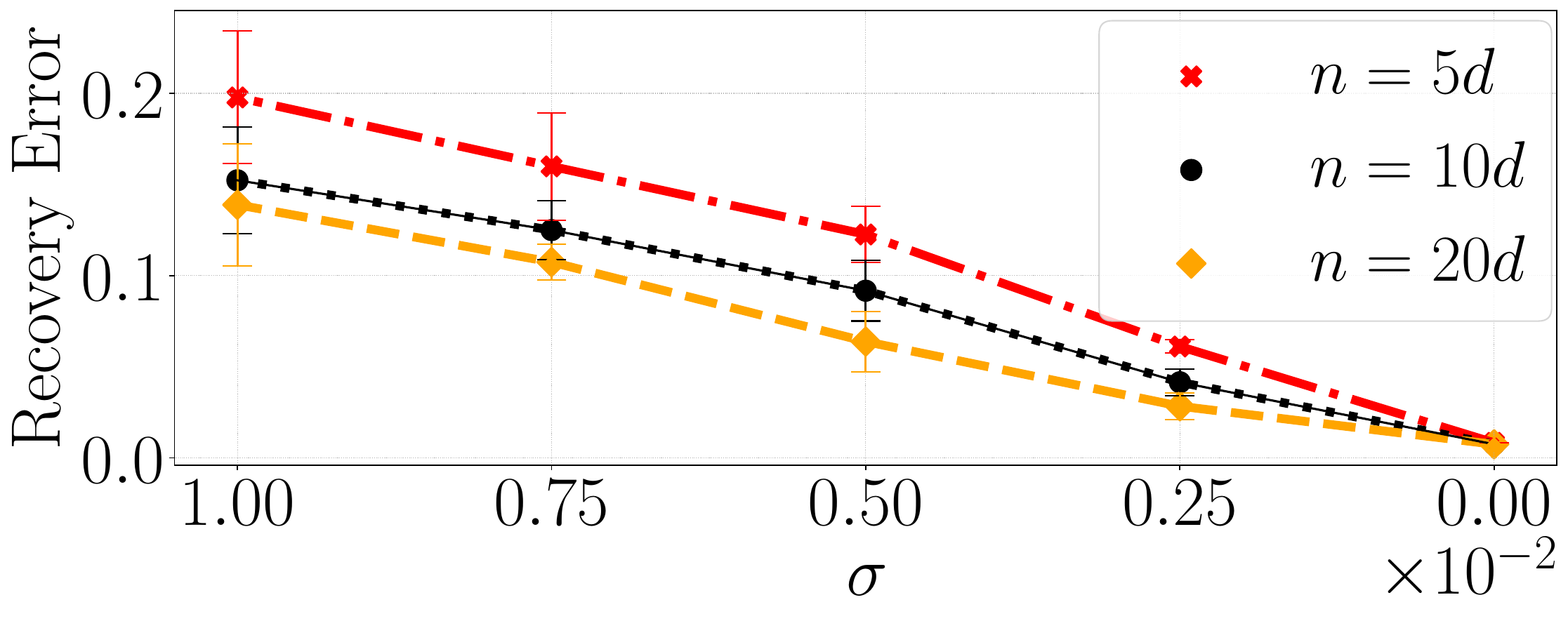}
        \caption{Varying noise level $\sigma$ for different number of rows $n$}\label{fig_sensitivity_n}
    \end{subfigure}
    \begin{subfigure}[b]{0.495\textwidth}
        \centering
        \includegraphics[width=0.90\textwidth]{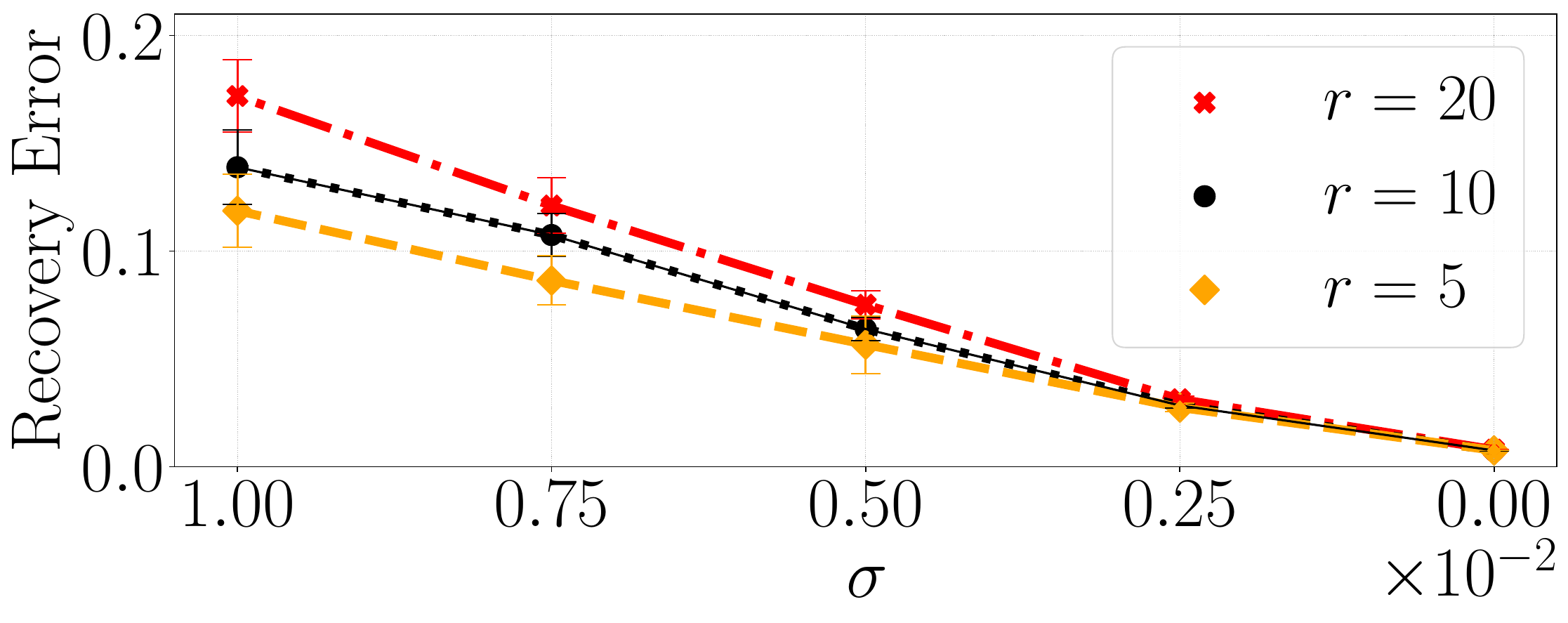}
        \caption{Varying noise level $\sigma$ for different rank $r$ of $M$}\label{fig_sensitivity_r}
    \end{subfigure}
    \caption{Illustrating the estimation error after adding Gaussian noise with mean zero and variance $\sigma^2$ to $\hat M$ on the non-zero entries.
    In Figure \ref{fig_sensitivity_n}, we vary the number of rows $n$ with a fixed rank $r = 10$. In Figure \ref{fig_sensitivity_r}, we vary rank $r$ (of $M$) with a fixed $m = 20d$.
    Across all six cases, the sensitivity level, measured by the slope of the line, is at most $19$ and drops to $11$ as the sample size increases or the rank decreases. We report the mean and standard deviation from five independent runs.}\label{fig_varying_noise}
\end{figure}
\end{document}